\documentclass{article}

\usepackage{microtype}
\usepackage{graphicx}
\usepackage{subcaption}
\usepackage{booktabs} %

\usepackage{hyperref}

\usepackage[accepted]{icml2026}

\usepackage{amsmath}
\usepackage{amssymb}
\usepackage{mathtools}
\usepackage{amsthm}
\allowdisplaybreaks
\usepackage{thmtools}
\usepackage{thm-restate} %
\usepackage[capitalize,noabbrev,nameinlink]{cleveref}

\theoremstyle{plain}
\newtheorem{theorem}{Theorem}[section]
\newtheorem{proposition}[theorem]{Proposition}
\newtheorem{lemma}[theorem]{Lemma}
\newtheorem{corollary}[theorem]{Corollary}
\theoremstyle{definition}

\newtheorem{assumption}[theorem]{Assumption}
\theoremstyle{remark}
\newtheorem{remark}[theorem]{Remark}
\usepackage[textsize=tiny]{todonotes}
\usepackage{enumitem}

\def \R {\mathbb{R}}

\newcommand{\calA}{{\mathcal{A}}}

\newcommand{\calS}{{\mathcal{S}}}

\newcommand{\calP}{{\mathcal{P}}}

\newcommand{\calW}{{\mathcal{W}}}

\newcommand{\cA}{{\mathcal{A}}}

\newcommand{\cO}{{\mathcal{O}}}

\newcommand{\cR}{{\mathcal{R}}}
\newcommand{\cS}{{\mathcal{S}}}

\newcommand{\cP}{\mathcal{P}}
\newcommand{\Zplus}{\mathbb{Z}_+}

\newcommand{\fhat}{\widehat{f}}

\newcommand{\wt}[1]{\widetilde{#1}}
\newcommand{\inner}[1]{ \left\langle {#1} \right\rangle }
\newcommand{\ind}{\mathbf{1}} %

\newcommand{\order}{\mathcal{O}}

\usepackage{prettyref}
\usepackage{hyperref}
\usepackage[vlined, algo2e]{algorithm2e} 
\newcommand{\pref}[1]{\prettyref{#1}}

\newcommand{\savehyperref}[2]{\texorpdfstring{\hyperref[#1]{#2}}{#2}}
\newrefformat{eq}{\savehyperref{#1}{Eq. \textup{(\ref*{#1})}}}
\newrefformat{eqn}{\savehyperref{#1}{Eq.~(\ref*{#1})}}
\newrefformat{lem}{\savehyperref{#1}{Lemma~\ref*{#1}}}
\newrefformat{def}{\savehyperref{#1}{Definition~\ref*{#1}}}
\newrefformat{line}{\savehyperref{#1}{Line~\ref*{#1}}}
\newrefformat{thm}{\savehyperref{#1}{Theorem~\ref*{#1}}}
\newrefformat{corr}{\savehyperref{#1}{Corollary~\ref*{#1}}}
\newrefformat{cor}{\savehyperref{#1}{Corollary~\ref*{#1}}}
\newrefformat{sec}{\savehyperref{#1}{Section~\ref*{#1}}}
\newrefformat{app}{\savehyperref{#1}{Appendix~\ref*{#1}}}
\newrefformat{assum}{\savehyperref{#1}{Assumption~\ref*{#1}}}
\newrefformat{asm}{\savehyperref{#1}{Assumption~\ref*{#1}}}
\newrefformat{tab}{\savehyperref{#1}{Table~\ref*{#1}}}
\newrefformat{ex}{\savehyperref{#1}{Example~\ref*{#1}}}
\newrefformat{fig}{\savehyperref{#1}{Figure~\ref*{#1}}}
\newrefformat{alg}{\savehyperref{#1}{Algorithm~\ref*{#1}}}
\newrefformat{rem}{\savehyperref{#1}{Remark~\ref*{#1}}}
\newrefformat{conj}{\savehyperref{#1}{Conjecture~\ref*{#1}}}
\newrefformat{prop}{\savehyperref{#1}{Proposition~\ref*{#1}}}
\newrefformat{proto}{\savehyperref{#1}{Protocol~\ref*{#1}}}
\newrefformat{prob}{\savehyperref{#1}{Problem~\ref*{#1}}}
\newrefformat{event}{\savehyperref{#1}{Event~\ref*{#1}}}
\newrefformat{claim}{\savehyperref{#1}{Claim~\ref*{#1}}}
\newrefformat{ineq}{\savehyperref{#1}{Eq.~\ref*{#1}}}
\newrefformat{que}{\savehyperref{#1}{Question~\ref*{#1}}}
\newrefformat{op}{\savehyperref{#1}{Open Problem~\ref*{#1}}}
\newrefformat{fn}{\savehyperref{#1}{Footnote~\ref*{#1}}}

\usepackage{bm}
\usepackage{mathtools}

\DeclarePairedDelimiter{\inprod}{\langle}{\rangle}
\DeclarePairedDelimiter{\abs}{|}{|}
\DeclarePairedDelimiter{\norm}{\|}{\|}

\newcommand{\lrb}[1]{\left(#1\right)}
\newcommand{\brb}[1]{\bigl(#1\bigr)}
\newcommand{\Brb}[1]{\Bigl(#1\Bigr)}
\newcommand{\bbrb}[1]{\biggl(#1\biggr)}
\newcommand{\Bbrb}[1]{\Biggl(#1\Biggr)}

\newcommand{\lcb}[1]{\left\{#1\right\}}
\newcommand{\bcb}[1]{\bigl\{#1\bigr\}}

\DeclareMathOperator*{\argmin}{arg\,min}

\crefname{assumption}{assumption}{assumptions}
\Crefname{assumption}{Assumption}{Assumptions}

\newcommand{\dmax}{\ensuremath{d_{\mathrm{max}}}}
\newcommand{\dtot}{\ensuremath{d_{\mathrm{tot}}}}

\newcommand{\ww}{\mathcal{W}}
\newcommand{\cmp}{u}
\newcommand{\tpp}{{t+1}}
\newcommand{\tmm}{{t-1}}
\newcommand{\w}{w}
\newcommand{\wtilde}{\tilde{\w}}
\newcommand{\wtpp}{\w_{\tpp}}
\newcommand{\wtmm}{\w_{\tmm}}
\newcommand{\sumtT}{\sum_{t=1}^T}
\newcommand{\gt}{g_{t}}
\newcommand{\Log}[1]{\log\left(#1\right)}
\newcommand{\grad}{\nabla}
\newcommand{\brac}[1]{\left(#1\right)}
\newcommand{\zeros}{\boldsymbol{0}}

\renewcommand{\wt}{\w_t}
\newcommand{\half}{\frac{1}{2}}
\newcommand{\Set}[1]{\left\{#1\right\}}
\newcommand{\minOp}{\wedge}

\newcommand{\cW}{\ww}

\usepackage{xspace}

\renewcommand{\tilde}{\widetilde}

\newcommand{\diff}{\mathop{}\!\mathrm{d}}

\newcommand{\defeq}{\triangleq}
\newcommand{\eqdef}{\triangleq}
\newcommand{\del}{\mathrm{del}}
\newcommand{\mem}{\mathrm{mem}}
\newcommand{\mov}{\mathrm{mov}}
\newcommand{\RM}{R^{\mem}}
\newcommand{\RD}{R^{\del}}

\newtheoremstyle{manualtheoremstyle}
  {3pt}      %
  {3pt}      %
  {\itshape} %
  {}         %
  {\normalfont\bfseries} %
  {.}        %
  {.5em}     %
  {}         %

\theoremstyle{manualtheoremstyle}
\newtheorem{manualtheoreminner}{Theorem}
\newenvironment{manualtheorem}[1]{%
  \renewcommand\themanualtheoreminner{#1}%
  \manualtheoreminner
}{\endmanualtheoreminner}

\newcommand{\SecPFMC}{Parameter-free OCO with Movement Costs}
\newcommand{\SecFirstOrder}{Improved Adaptivity to Movement Costs}
\newcommand{\SecApplications}{Applications}

\icmltitlerunning{Parameter-free Dynamic Regret with Time-varying Movement Costs}

\begin{document}

\twocolumn[
  \icmltitle{Parameter-free Dynamic Regret:\texorpdfstring{\\}{ }Time-varying Movement Costs, Delayed Feedback, and Memory}

  \icmlsetsymbol{equal}{*}

  \begin{icmlauthorlist}
    \icmlauthor{Hao Qiu}{equal,nus}
    \icmlauthor{Andrew Jacobsen}{equal,unimi,polimi}
    \icmlauthor{Emmanuel Esposito}{equal,unimi}
    \icmlauthor{Mengxiao Zhang}{equal,iowa}
  \end{icmlauthorlist}

  \icmlaffiliation{unimi}{Università degli Studi di Milano}
  \icmlaffiliation{polimi}{Politecnico di Milano}
  \icmlaffiliation{iowa}{University of Iowa}
  \icmlaffiliation{nus}{National University of Singapore}

  \icmlcorrespondingauthor{Andrew Jacobsen}{\texttt{contact@andrew-jacobsen.com}}

  \icmlkeywords{online convex optimization, dynamic regret, parameter-free, movement costs}

  \vskip 0.3in
]

\printAffiliationsAndNotice{\icmlEqualContribution}  %

\begin{abstract}
In this paper, we study dynamic regret in unconstrained online convex optimization (OCO) with movement costs.  Specifically, we generalize the standard setting by allowing the movement cost coefficients $\lambda_t$ to vary arbitrarily over time.  Our main contribution is a novel algorithm that establishes the first comparator-adaptive dynamic regret bound for this setting, guaranteeing $\widetilde{\mathcal{O}}(\sqrt{(M^2+MP_T)(T+\sum_t \lambda_t)})$ regret, where $P_T$ is the path length of the comparator sequence over $T$ rounds and $M$ is the maximal comparator norm. Our result recovers the optimal adaptive rates for both static and dynamic regret in OCO as  the special case where $\lambda_t=0$ for all rounds. To demonstrate the versatility of our results, we consider two applications: \emph{OCO with delayed feedback} and \emph{OCO with time-varying memory}. We show that both problems can be translated into time-varying movement costs, establishing a novel reduction specifically for the delayed feedback setting that is of independent interest.  A crucial observation is that the first-order dependence on movement costs in our regret bound plays a key role in enabling optimal comparator-adaptive dynamic regret guarantees in both settings.
\end{abstract}

\newcommand{\mbonly}[1]{#1}

\section{Introduction}
Online Convex Optimization (OCO) \citep{cesa2006prediction, hazan2016introduction, orabona2019modern} provides a robust framework for sequential decision-making under uncertainty. In the standard paradigm, a learner iteratively selects a decision and incurs a corresponding convex loss, aiming to minimize regret—the difference between the learner's cumulative loss and that of the best fixed decision in hindsight. However, in many practical systems, changing one's decision is rarely cost-free: in domains such as optimal control \citep{goel2017thinking, goel2019online}, video streaming \citep{joseph2012jointly}, and geographical load balancing \citep{lin2012online}, rapid fluctuations in strategy incur significant operational overhead, commonly referred to as \emph{switching} or \emph{movement costs}. Motivated by these challenges, we study a variation of online optimization in which the learner incurs an additional penalty proportional to the distance between consecutive decisions. Beyond its direct relevance to the aforementioned applications, the analysis of OCO with movement costs is instrumental in solving broader fundamental problems. A prime example is {online learning with memory} \citep{merhav2002sequential, anava2015online}, a framework designed to capture temporal dependencies in learning tasks. Reductions involving OCO with movement costs have proven essential for deriving state-of-the-art results in this domain (e.g., \citealp{anava2015online,agarwal2019logarithmic,foster2020logarithmic,zhao2023non}).

On the other hand, modern online learning challenges are frequently characterized by unconstrained decision spaces and non-stationarity. A typical example is portfolio management with leverage and short-selling \citep{mcmahan2012noregret}, where an agent allocates unbounded capital across assets and must adapt to shifting market regimes. In such environments, static benchmarks prove overly conservative, requiring the use of \emph{dynamic regret} to track a moving target. Crucially, portfolio rebalancing incurs costs such as transaction fees and market impact; these costs are not constant, but rather fluctuate with market liquidity and volatility. These factors motivate a unified framework that evaluates performance against an evolving benchmark while explicitly accounting for time-varying movement penalties.

Despite this motivation, existing literature addresses these challenges only in isolation. While \citet{zhang2022adversarial, zhang2022optimal} study unconstrained OCO with movement costs and \citet{zhang2021revisiting} investigate dynamic regret with movement costs, the intersection of these two settings remains unexplored.
Furthermore, all prior works assume fixed movement penalties, a restriction that fails to capture the temporal fluctuations described above.
This raises the question: 
\begin{center}
\textit{Can we design efficient algorithms that achieve near-optimal dynamic regret in unconstrained domains with time-varying movement costs?}
\end{center}

\paragraph{Contributions.} 
In this paper, we answer this question affirmatively by developing the first parameter-free algorithm for unconstrained OCO that simultaneously handles dynamic regret and time-varying movement costs. Our primary result establishes a regret bound of 
$\widetilde{\mathcal{O}}\Big(\sqrt{(M+P_T)\sum_{t=1}^T (\norm{g_t}^2 + \lambda_{t}\norm{g_t})\norm{u_t}}\Big)$, where $T$ is the horizon, $g_t$ is the realized gradient at round $t$, $M$ is the maximum norm of the comparator sequence, $\lambda_t$ and $u_t$ are the movement cost coefficient and the comparator at round $t$, and $P_T$ represents the comparator sequence path length. This guarantee exhibits three key layers of adaptivity: it scales with the comparator complexity, exploits favorable geometry through realized first-order feedback, and adjusts to arbitrary fluctuations in movement penalties.

We further demonstrate the versatility of our framework via reductions to two applications, showing that OCO with movement costs can serve as a primitive for other tasks. First, we show that OCO with delayed feedback reduces to time-varying movement costs by treating the number of missing gradients as the movement scale. This reduction yields a parameter-free dynamic regret bound of $\widetilde{\mathcal{O}}\brb{\sqrt{\lrb{M^2 + M P_T}\lrb{T+\dtot}}}$, where $\dtot$ denotes the total delay. Compared with \citet{wan2023non}, who establish a bound of $\widetilde{\mathcal{O}}\bigl(\sqrt{(1+P_T)(T + \dmax T)}\bigr)$ for \emph{bounded} domains (where $\dmax$ denotes the maximum delay) and only improve this to a $\dtot$ dependence under the restrictive assumption of in-order feedback, our result handles \emph{unbounded} domains and achieves the tighter $\dtot$ dependence without assuming in-order arrival.

Second, under the assumption of coordinate-wise Lipschitz continuity, OCO with time-varying memory can be similarly reduced to our setting. This yields a regret bound of $\widetilde{\mathcal{O}}\brb{\sqrt{(M^2+ M P_T) \left(H^2T + GH \sum_t b_t^2\right)}}$, where $G$ is the coordinate-wise Lipschitz constant, $H$ bounds the gradient norm of the unary losses, and $b_t$ denotes the time-varying memory length. Compared with \citet{zhao2023non}, who study dynamic regret for OCO with fixed memory length $B\ge1$ over bounded domains and establish a parameter-free bound of $\widetilde{\mathcal{O}}\Bigl(\sqrt{(1+P_T)\bigl(\sqrt{G}H^2B + GHB^2\bigr)T}\Bigr)$, our result applies to unconstrained domains and improves the dependence on the time-varying memory length.

\subsection{Related works}

\paragraph{OCO with movement costs.}
Due to its wide range of applications, OCO with movement costs (also known as \emph{smoothed OCO}) has been extensively studied recently \citep{chen2018smoothed, NEURIPS2019_9f36407e, li2020online}, and 
movement-based penalties have been studied in a variety of classic variations of OCO, including prediction with expert advice \citep{cesa2013online}, multi-armed bandits \citep{dekel2014bandits}, and bandits with feedback graphs \citep{rangi2019online, arora2019bandits}. The works most directly related to ours are \citet{zhang2022optimal}, which establishes the first guarantees for \emph{unconstrained static regret} with movement costs, and \citet{zhang2021revisiting}, which develops the first optimal \emph{dynamic regret} guarantees for \emph{bounded} domains. Our work addresses the intersection of these two challenging settings by developing the first \emph{dynamic} regret guarantees in \emph{unconstrained} environments with movement costs. Another closely related line of work concerns OCO with memory \citep{merhav2002sequential}, where the loss at each round depends on a history of past decisions. Many algorithms in this setting enforce stability via reductions to OCO with movement costs \citep{anava2015online}. More recently, \citet{zhao2023non} established dynamic regret guarantees for OCO with memory. While prior work focuses on constant memory length, our work provides the first dynamic-regret guarantees in unconstrained domains with time-varying memory.

\paragraph{Parameter-free online learning.} 
Our work builds upon the recent line of work on comparator-adaptive (also known as \emph{parameter-free}) methods in OCO. The unconstrained case was first studied by \citet{mcmahan2012noregret}, in which a $\Omega\Brb{\norm{\cmp}\sqrt{T\log(\norm{\cmp}\sqrt{T}/\epsilon+1)}}$ lower bound was established for the unconstrained OLO setting. Later,
\citet{mcmahan2014unconstrained} obtained the optimal regret $R_T(\cmp) = \tilde \cO\Brb{\norm{\cmp}\sqrt{T\log\brb{\norm{\cmp}\sqrt{T}/\epsilon+1}}}$ uniformly over $\cmp\in\ww$. The
key property of these comparator-adaptive regret bounds is that instead of scaling with the diameter of the decision set $D=\sup_{x,y\in\ww}\norm{x-y}$, the parameter-free guarantees are \emph{adaptive} to an arbitrary comparator norm~$\norm{u}$. This property is crucial in unconstrained domains where $D$ is not finite in general. 
These results have since been extended in various ways to achieve improved adaptivity to other problem-dependent quantities, such as
achieving optimal adaptivity to the sequence of gradient norms and removing the requirement for prior knowledge of the Lipschitz constant 
\citep{orabona2016coin,cutkosky2018blackbox,cutkosky2019artificial,mhammedi2020lipschitz,jacobsen2022parameter,cutkosky2024fully}.
Our work extends this line of work by achieving comparator-adaptive bounds for three extensions of the vanilla OCO setting.

\paragraph{Dynamic regret.} The extension of regret to account for a \emph{sequence} of comparators $(\cmp_t)_{t=1}^T$ was originally
introduced by 
\citet{HerbsterW98,HerbsterW98b}. 
The scope was later extended to the more general OCO setting in the seminal work of \citet{zinkevich2003online}, where it was also shown 
that Online Gradient Descent guarantees dynamic regret 
of $\tilde\cO(P_T\sqrt{T})$, where $P_T=\sum_t\norm{\cmp_t-\cmp_\tmm}$ is the path-length of the comparator sequence. \citet{yang2016tracking} later 
showed that the bound can be improved to 
$\tilde\cO(\sqrt{P_T T})$ when given prior knowledge of $P_T$, and the first 
to achieve this guarantee without prior knowledge of $P_T$
was \citet{zhang2018adaptive}. Each of these works assumes 
a domain of bounded diameter.
These results were later extended to unconstrained settings
in various works, with a worst-case bound of the form 
$\tilde\cO(\sqrt{(M^2+MP_T)T})$, where $M=\max_t\norm{\cmp_t}$ \citep{jacobsen2022parameter,luo2022corralling,jacobsen2023unconstrained,jacobsen2024equivalence,jacobsen2025dynamic}. Of particular note is 
the work of \citet{jacobsen2022parameter}, which adapts simultaneously
to both the path-length $P_T$ and the individual comparator norms, 
to ensure a bound of $\tilde\cO\brb{\sqrt{(M+P_T)\sum_t \norm{\gt}^2\norm{\cmp_t}}}$, achieving adaptivity to all problem-dependent quantities of interest. Our work extends this result by achieving bounds of an analogous form in the presence of an arbitrary sequence of movement costs.

\paragraph{OCO with delayed feedback.}
\citet{weinberger2002delayed, joulani2013online} first adopted black-box reductions to standard OCO to handle online learning with delayed feedback, establishing regret bounds of $\cO(\sqrt{T + \dmax T})$, where $\dmax$ denotes the maximum delay. For convex losses, \citet{quanrud2015online} refined this by adapting online gradient descent to achieve $\cO(\sqrt{T + \dtot})$, where $\dtot$ represents the total accumulated delay, a significant improvement since $\dtot \le \dmax T$.
More recently, \citet{wan2023non} extended this study to non-stationary environments, proving a dynamic regret bound of $\cO\brb{\sqrt{(1+P_T)(T + \dmax T)}}$, where $P_T$ is the comparator path length. While they improve this rate to $\cO\brb{\sqrt{(1+P_T)(T+\dtot)}}$ under the assumption of in-order feedback, our work achieves this tighter dependence on $\dtot$ without requiring in-order arrival.
Finally, there is a parallel line of work focused on obtaining adaptive regret guarantees or fast rates in delayed settings \citep{mcmahan2014delay, joulani2016delay, wan2022online, wu2024online, qiuexploiting}.
Delayed feedback has also been thoroughly studied in the multi-armed bandit framework \citep{bandit-pmlr-v49-cesa-bianchi16,zimmert2020optimal,masoudian2022bobw,bandit-pmlr-v151-van-der-hoeven22a,esposito2023delayed,pmlr-v195-hoeven23a,masoudian2024best,zhang2025contextual,ryabchenko2025capacity}.

\section{Problem Setting}

A major focus of this paper is the problem of online convex optimization with \emph{movement costs}.
The interaction between the learner and the environment lasts for $T\in \Zplus$ rounds. 
At each round $t\in[T]$, the learner selects $w_t\in \calW$, where $\calW$ is a nonempty convex decision space, while the environment simultaneously selects a convex loss function $f_t: \calW \to \R$. 
The learner then incurs the loss of their decision $f_t(w_t)$, with an additional penalty given by the movement cost $\lambda_t\|w_t - w_{t-1}\|$ on rounds $t>1$. 
Finally, the learner observes the gradient $g_t\defeq \nabla f_t(w_t)$ as well as the movement-cost coefficient $\lambda_{t+1}$ for the next round.
The goal of the learner is to minimize the dynamic regret:
\begin{align*}
  \cR_{T}(\cmp_{1:T},\lambda_{1:T})
  &\defeq \sumtT \bigl(f_t(w_t) + \lambda_t\norm{w_t-w_{t-1}}\bigr) \\
  &\quad -\sumtT \bigl(f_t(u_t) + \lambda_t\norm{u_t-u_{t-1}}\bigr) \;,
\end{align*}
where $(u_1,\dots,u_T)\in \calW^T$ is an arbitrary sequence of comparators, and for notational convenience we extend the movement sequence by defining $\cmp_0=\cmp_1$, $\w_0=\w_1$, and 
letting $\lambda_t=0$ for $t\notin\Set{2,\ldots,T}$.

While the above definition of dynamic regret $\cR_t$ is ``symmetric'' in that both the learner's and the comparator's sequence suffer from movement costs, some prior works \citep{zhang2021revisiting,zhao2023non} on OCO with movement costs consider the following harder objective:%
\[
    R_T(\cmp_{1:T},\lambda_{1:T})
    \defeq \sum_{t=1}^T \big(f_t(w_t) - f_t(u_t) + \lambda_t\norm{w_t-w_{t-1}}\big),
\]
where only the learner incurs the additional penalty given by movements.
Notice in particular that $\cR_T(\cmp_{1:T},\lambda_{1:T}) = R_T(\cmp_{1:T},\lambda_{1:T}) - \sum_{t=1}^T \lambda_t\norm{u_t-u_{t-1}}$, so the symmetric regret $\cR_T$ is strictly easier to control. Hence, throughout this work we focus on directly bounding the 
harder asymmetric notion of regret $R_T$, which immediately implies bounds for $\cR_T$ as well.

\paragraph{Goal.}
Our main goal is to design \emph{parameter-free} algorithms for the above problem that guarantee dynamic regret bounds simultaneously adapting to
(i) the realized sequence of observed gradients $(g_t)_{t\ge 1}$,
(ii) the (time-varying) movement-cost coefficients $(\lambda_t)_{t\ge 1}$, and
(iii) the complexity of the comparator sequence $(u_t)_{t\ge 1}$, measured via its \emph{path-length}
$P_T\defeq\sum_{t=2}^T\norm{\cmp_t-\cmp_\tmm}$ and \emph{effective diameter} $M\defeq\max_t\norm{\cmp_t}$.
Importantly, throughout this paper we focus in particular on the \emph{unconstrained} setting, with $\cW = \R^n$, which makes it substantially more challenging to obtain parameter-free guarantees that remain meaningful without \textit{a priori} bounds on comparator norms or path length.
We show that guarantees for OCO with time-varying movement costs yield clean reductions to other notable online learning problems: they imply parameter-free dynamic regret bounds for \emph{OCO with time-varying memory} (via standard Lipschitzness assumptions) and, perhaps more surprisingly, for \emph{OCO with delayed feedback}.

\paragraph{Other notation.}
Throughout the paper, we denote $\lambda_{\max}=\max_t\lambda_t$. We use $\Zplus$ to denote the set of positive integers and $[m]$ to denote the set $\{1,2,\dots, m\}$ for $m\in\Zplus$. Let $\mathbf{0}$ denote the all-zero vector of appropriate dimension. We use $\norm{\cdot}$ to denote the Euclidean norm.
For a finite set $\calS$, we use $|\calS|$ to denote its cardinality.
The Bregman divergence w.r.t.\ a differentiable function $\psi: \calW \to \R$ is $D_\psi(x \mid y) \defeq \psi(x)-\psi(y)- \langle\nabla \psi(y), x-y\rangle$.
Given $G \ge 0$, we say that a function $f:\ww\to\R$ is $G$-Lipschitz if $\left|f(x)-f(y)\right|\leq G\|x-y\|$ for all $x,y\in\ww$.

\section{\SecPFMC}
\label{sec: mc}

In this section, we propose our first parameter-free algorithm for unconstrained OCO with movement costs, summarized in \pref{alg:cmd}. Our approach adopts the Composite Mirror Descent framework \citep{duchi2010composite,jacobsen2022parameter} but employs a carefully chosen regularizer. At each round $t$, after playing $w_t$ and observing the gradient $g_t$ and the coefficient $\lambda_{t+1}$, we update $w_{t+1}$ via \Cref{eq:base-update-1}, which combines the Bregman divergence of a log-linear regularizer $\psi(\w)$ and a correction term $\varphi_t(w)$.

\begin{algorithm}[t]
  \caption{Composite Mirror Descent for Movement Costs}
  \label{alg:cmd}
  
  \textbf{Input:} learning rate $\eta >0$, value $\epsilon_0>0$
  
  \textbf{Initialize: }$\w_1=\zeros$, $\gamma = 1/(\eta T)$, $\alpha = \epsilon_0/T$
  
  \textbf{Define: } $\psi(\w)=\frac{2}{\eta}\int_{0}^{\norm{\w}}\Log{x/\alpha+1} \diff x$
  
  \For{$t=1,2,\dots, T$}{
    
    Play $\wt$, then observe $\gt$ and $\lambda_{\tpp}$
    
    Define $\beta_{t} \defeq \norm{\gt}+\lambda_{t+1}$ and $\varphi_{t}(\w) \defeq (\eta\beta_{t}^{2}+\gamma)\norm{\w}$
    
    Update
    \vspace{-.75em}
    \begin{align}
        \wtpp = \argmin_{\w\in\cW} \inner{\gt, \w} + D_{\psi}(\w\;|\;\wt) + \varphi_{t}(\w)
        \label{eq:base-update-1}
    \end{align}
    \vspace{-1.5em}
}
\end{algorithm}
The intuition behind the update in \Cref{eq:base-update-1} is to stabilize the mirror descent step. Specifically, while the regularizer $\psi$ enables the aggressive growth necessary for unbounded domains, its lack of strong convexity can lead to instability. The correction term helps to address this instability, acting as a leash pulling iterates back towards the origin in a controlled manner, which is crucial for learning stability in unbounded domains, as shown by \citet{jacobsen2022parameter}. In our setting, we scale this penalty by $\beta_t^2 = (\norm{g_t}+\lambda_{t+1})^2$ to explicitly account for the time-varying movement costs. This scaling acts as a dynamic friction coefficient: when the next movement is expensive (large $\lambda_{t+1}$) or the gradient is steep (large $\norm{g_t}$), the algorithm becomes conservative and restricts updates. Conversely, when costs are low, the penalty relaxes, allowing the learner the flexibility to rapidly track changes in the environment.

Following an analysis similar to \citet{jacobsen2022parameter}, but with additional care to account for the movement costs, we can show that this algorithm is sufficient to provide a regret bound for our problem. In particular, 
we can obtain a bound on $R_T(\cmp_{1:T},\lambda_{1:T})$ of the form 
\begin{align*}
    \widetilde{\mathcal{O}}\Bigg(\frac{M + P_T}{\eta} + \eta\!\sumtT\brb{\norm{g_t}^2 + \lambda_{t+1}^2}\norm{u_t}\Bigg) ,
\end{align*}
where we recall that $M=\max_t\norm{\cmp_t}$ and $P_T=\sum_{t=2}^T\norm{\cmp_t-\cmp_\tmm}$, 
which leads to the optimal 
\(
\widetilde{\mathcal{O}}\Brb{\!\sqrt{(M+P_T)\!\sum_t(\norm{g_t}^2 + \lambda_{t+1}^2)\norm{u_t}}}
\)
regret under optimal tuning of $\eta$.
However, not only would this require knowing the sequence of gradient norms and movement cost coefficients beforehand, but would also require prior knowledge of 
$M$ and $P_T$.
This is made even harder since we consider mainly the unconstrained setting, where these two comparator-dependent quantities have no meaningful uniform upper bound.

To address these shortcomings, we employ a standard meta-algorithm technique for parameter-free OCO \citep{jacobsen2022parameter}, detailed in \Cref{alg:pf-mc}.
\begin{algorithm}[t]
  \caption{Dynamic Parameter-free Subroutine}
  \label{alg:pf-mc}
  
  \textbf{Input: } $L > 0$, $\epsilon>0$
  
  \textbf{Initialize: } grid of learning rate choices $\cS \triangleq {\bigl\{\eta_{i}=\frac{2^{i}}{L\sqrt{T}}\minOp\frac{1}{L} : i=0,1,\dots\bigr\}}$; for each $i\in [|\calS|]$, create an instance $\calA^{\eta_i}$ of \pref{alg:cmd} with $\epsilon_0=\epsilon$ and $\eta=\eta_i$.
  
  \For{$t = 1, 2, \dots, T$}{
    Receive $w_{t}^{\eta_i}$ from $\calA^{\eta_i}$ for all $i\in[|\calS|]$
    
    Play $\wt=\sum_{\eta_{i}\in\cS}\wt^{\eta_{i}}$, then receive $\gt$ and $\lambda_{\tpp}$
    
    \For{each $\eta_{i}\in \cS$}{
        Send $(g_t,\lambda_{t+1})$ to $\calA^{\eta_i}$
    }
  }
\end{algorithm}

Specifically, \Cref{alg:pf-mc} maintains a group of parallel instances of \Cref{alg:cmd}, each running with a distinct learning rate $\eta_i$ from a grid of candidate learning rates $\cS$. At each round, the algorithm plays the sum of the decisions proposed by these base instances. This aggregation strategy allows us to decompose the total regret as
\[
    R_T(\cmp_{1:T},\lambda_{1:T})
    \le R_T^{(i)}(\cmp_{1:T},\lambda_{1:T}) + \sum_{j\neq i} R_T^{(j)}(\mathbf{0},\lambda_{1:T}) ,
\]
where $R_T^{(i)}$ (defined in \pref{eqn:def_regi}) denotes the linearized regret of \Cref{alg:cmd} instantiated with learning rate $\eta_i$.
Since this bound holds for any arbitrary $\eta_i\in\cS$, we are free to choose the $\eta_i$ which best approximates the optimal tuning, ensuring its regret scales as desired.
At the same time, the regret contributions from any other instances $j \neq i$, evaluated against the all-zero comparator, each amount to an additive constant factor in light of the bound above. %
Given that $|\cS| = \mathcal{O}(\log T)$, this second term only contributes an additive $\mathcal{O}(\log T)$ regret in total.

The intuitions discussed above are formalized in 
\Cref{thm:tuned-bound} below. %
We remark that $L>0$ here plays the role of the ``effective'' Lipschitz constant which, because of the presence of movement costs, will essentially be proportional to $G + \lambda_{\max}$.
\begin{restatable}{theorem}{TunedBound}\label{thm:tuned-bound}
    Assume that $f_1,\ldots, f_T$ are $G$-Lipschitz convex functions.
    For any comparator sequence $(\cmp_{1},\ldots,\cmp_{T})\in\ww^T$, \Cref{alg:pf-mc} with any $L \ge G +\lambda_{\max}$ and any $\epsilon>0$ guarantees that
  \begin{align*}
    &R_{T}(\cmp_{1:T},\lambda_{1:T}) =
    \mathcal{O}\Big(\bigl(\epsilon\log T+\widetilde M_T(\epsilon)+\widetilde P_T(\epsilon)\bigr) L
    \mbonly{\\&\qquad} +\sqrt{\bigl(\widetilde M_T(\epsilon)+\widetilde P_T(\epsilon)\bigr)\sumtT \left(\norm{\gt}^2+\lambda_{t+1}^2\right)\|u_t\|} \Bigg) \;,
  \end{align*}
   where
   \(\widetilde M_T(\epsilon) \defeq M\big(1+\log\big(\frac{MT}{\epsilon}+1\big)\big)\), 
   $M=\max_t\norm{\cmp_t}$, 
   and 
   \(
   \widetilde P_T(\epsilon) \defeq P_T\big(1+\log\big(\frac{4MT^2}{\epsilon}+1\big)\big)\).
  
\end{restatable}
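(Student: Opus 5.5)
We first reduce to a linear problem on a single base instance and then handle the aggregation over the grid. By convexity, $f_t(w_t)-f_t(u_t)\le\inner{g_t,w_t-u_t}$. Since $w_t=\sum_i w_t^{\eta_i}$, the triangle inequality gives $\norm{w_t-w_{t-1}}\le\sum_i\norm{w_t^{\eta_i}-w_{t-1}^{\eta_i}}$. Hence, for any fixed index $i$,
\[
R_T(u_{1:T},\lambda_{1:T})\le R^{(i)}_T(u_{1:T},\lambda_{1:T})+\sum_{j\ne i}R^{(j)}_T(\mathbf 0,\lambda_{1:T}),
\]
where $R^{(j)}_T(v_{1:T},\lambda_{1:T})\defeq\sum_t\bigl(\inner{g_t,w_t^{\eta_j}-v_t}+\lambda_t\norm{w_t^{\eta_j}-w_{t-1}^{\eta_j}}\bigr)$. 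The main work is therefore a single-instance bound for \pref{alg:cmd} of the form
\[
R^{(\eta)}_T(u_{1:T},\lambda_{1:T})\lesssim \epsilon L+\frac{\widetilde M_T(\epsilon)+\widetilde P_T(\epsilon)}{\eta}+\eta\sum_t\beta_t^2\norm{u_t},
\]
valid whenever $\eta\le 1/L$.

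\textbf{Single-instance bound.} We follow the dynamic composite mirror descent analysis of \citet{jacobsen2022parameter}. The first-order optimality condition of \pref{eq:base-update-1} gives
\[
\inner{g_t,w_t-u_t}\le \inner{g_t,w_t-w_{t+1}}+D_\psi(u_t|w_t)-D_\psi(u_t|w_{t+1})-D_\psi(w_{t+1}|w_t)+\varphi_t(u_t)-\varphi_t(w_{t+1}).
\]
We shift the movement-cost index: since $\lambda_1=0$, we have $\sum_t\lambda_t\norm{w_t-w_{t-1}}=\sum_t\lambda_{t+1}\norm{w_{t+1}-w_t}$. Together with the gradient term, the quantity to control at each round becomes $\beta_t\norm{w_{t+1}-w_t}-D_\psi(w_{t+1}|w_t)-\varphi_t(w_{t+1})$, where $\beta_t=\norm{g_t}+\lambda_{t+1}$. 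For the log-type regularizer, $\psi''(x)=\frac{2}{\eta(x+\alpha)}$. Following Lemma-style stability arguments in that paper, and using $\beta_t\le 2L$ and $\eta\le1/L$, one shows that this per-round quantity is at most $\eta\beta_t^2\norm{w_{t+1}}+\gamma\norm{w_{t+1}}$ minus $\varphi_t(w_{t+1})$, up to $\order(\beta_t\alpha)$. In other words, the correction term $\varphi_t$ exactly absorbs the stability penalty. Summing the $\alpha$ terms gives $\sum_t\beta_t\alpha\le 2L\epsilon$. The remaining term $\varphi_t(u_t)$ contributes $\eta\beta_t^2\norm{u_t}+\gamma\norm{u_t}$, and $\sum_t\gamma\norm{u_t}\le M/\eta$.

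\textbf{Telescoping the Bregman terms.} For a moving comparator we write
\[
D_\psi(u_t|w_t)-D_\psi(u_t|w_{t+1})=\bigl[D_\psi(u_t|w_t)-D_\psi(u_{t+1}|w_{t+1})\bigr]+\bigl[D_\psi(u_{t+1}|w_{t+1})-D_\psi(u_t|w_{t+1})\bigr].
\]
The first bracket telescopes to at most $D_\psi(u_1|0)=\psi(u_1)\lesssim \frac{M}{\eta}\log(MT/\epsilon+1)$. The second bracket equals $\psi(u_{t+1})-\psi(u_t)-\inner{\nabla\psi(w_{t+1}),u_{t+1}-u_t}$. The first part telescopes to $\psi(u_T)$, and the inner product is bounded by $\norm{\nabla\psi(w_{t+1})}\norm{u_{t+1}-u_t}=\frac2\eta\log(\norm{w_{t+1}}/\alpha+1)\norm{u_{t+1}-u_t}$. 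Bounding this requires controlling $\norm{w_{t+1}}$, which is the main obstacle. We will show iterates grow at most polynomially: the $\gamma$ leash and $\eta\le 1/L$ keep each step's log-magnitude increase bounded, which should yield $\norm{w_t}\le \order(\alpha T\cdot MT)$-type bounds. This gives the $\log(4MT^2/\epsilon+1)$ factor and hence the $\widetilde P_T(\epsilon)/\eta$ term. If a uniform bound on the iterates fails, we will instead split on whether $\norm{w_{t+1}}$ exceeds a threshold and use the $\gamma\norm{w_{t+1}}$ slack left over from the stability step. Combining these pieces yields the single-instance bound.

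\textbf{Aggregation and tuning.} With $u=\mathbf 0$, the single-instance bound gives $R^{(j)}_T(\mathbf 0)\lesssim\epsilon L$, and since $|\cS|=\order(\log T)$ these terms sum to $\order(\epsilon L\log T)$. Let $\eta^*=\sqrt{(\widetilde M_T+\widetilde P_T)/\sum_t\beta_t^2\norm{u_t}}$. If $\eta^*\ge 1/L$, we pick $\eta_i=1/L$, which gives $(\widetilde M_T+\widetilde P_T)L$ plus $\eta_i\sum\beta_t^2\norm{u_t}\le(\widetilde M_T+\widetilde P_T)L$. If $\eta^*<1/(L\sqrt T)$, we pick $\eta_0=1/(L\sqrt T)$; then $\eta_0\sum_t\beta_t^2\norm{u_t}\le 4L\sqrt T\,M$, which is too loose. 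Instead, we use $\sum_t\beta_t^2\norm{u_t}\le 4L^2TM$, so $\eta^*\ge\sqrt{\widetilde M_T}/(2L\sqrt{TM})\ge 1/(2L\sqrt T)$; hence this case essentially does not arise, up to a constant. Otherwise, some grid point lies within a factor of 2 of $\eta^*$, giving the square-root term. Finally, $\beta_t^2\le 2(\norm{g_t}^2+\lambda_{t+1}^2)$ completes the bound.
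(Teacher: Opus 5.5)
Your decomposition over the grid, the telescoping of the Bregman terms, and the tuning argument all track the paper. However, the one genuinely delicate step is not handled, and your primary plan for it is false. That step is the path-length term $\frac{2}{\eta}\log(\norm{w_{t+1}}/\alpha+1)\norm{u_{t+1}-u_t}$.

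\textbf{Iterates need not grow polynomially.} No bound involving $M$ can hold for the iterates of \Cref{alg:cmd}, since the algorithm never sees $M$. Take one dimension with $g_t\equiv -g$, $\lambda_t\equiv 0$ and $\eta g<1$. The update gives $\psi'(w_{t+1})=\psi'(w_t)+g-\eta g^2-\gamma$. Hence $\log(w_t/\alpha+1)$ grows linearly in $t$ once $T$ is large, i.e.\ $w_t=\alpha e^{\Theta(t)}$. Any argument routed through $\sup_t\norm{w_t}$ therefore makes the path-length contribution of order $gTP_T$, not $\widetilde P_T(\epsilon)/\eta$.

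\textbf{The fallback has nothing left to spend.} Spending $\gamma\norm{w_{t+1}}$ on this term is the right idea and is exactly the paper's mechanism. But your stability step is only claimed to close after subtracting all of $\varphi_t(w_{t+1})$, including its $\gamma$-part.

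You need the sharper fact that stability is paid for by $\eta\beta_t^2\norm{w_{t+1}}$ alone. Use three ingredients:
\begin{itemize}
\item $D_\psi(w_{t+1}|w_t)\ge\norm{w_{t+1}-w_t}^2/(\eta(\norm{\wtilde_t}+\alpha))$ for some $\wtilde_t$ on the segment;
\item $\norm{\wtilde_t}\le\norm{w_{t+1}}+\norm{w_{t+1}-w_t}$;
\item $\eta\beta_t\le 1$.
\end{itemize}
Then AM-GM yields $\beta_t\norm{w_{t+1}-w_t}-D_\psi(w_{t+1}|w_t)-\eta\beta_t^2\norm{w_{t+1}}\le\eta\alpha\beta_t^2$.

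The condition $\eta\beta_t\le1$ requires $\beta_t\le G+\lambda_{\max}\le L$. Your looser $\beta_t\le 2L$ would leave a positive multiple of $\eta\beta_t^2\norm{\wtilde_t}$ uncancelled.

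\textbf{Pairing with the path-length term.} The reserved $-\gamma\norm{w_{t+1}}$ is then paired with the path-length term via
\[
\sup_{x\ge0}\bigl\{\tfrac{2d}{\eta}\log(x/\alpha+1)-\gamma x\bigr\}\le\tfrac{2d}{\eta}\log\bigl(\tfrac{2d}{\alpha\eta\gamma}+1\bigr),
\]
with $d=\norm{u_{t+1}-u_t}\le 2M$ and $\alpha\eta\gamma=\epsilon/T^2$. This is precisely where the $\log(4MT^2/\epsilon+1)$ factor in $\widetilde P_T(\epsilon)$ comes from, and no bound on the iterates is needed. Without this accounting, your argument does not deliver the claimed $\widetilde P_T(\epsilon)$ dependence.
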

The full proof of \Cref{thm:tuned-bound} is deferred to \Cref{app: move}. Observe that \Cref{thm:tuned-bound} recovers the optimal second-order bound for standard OCO in the absence of 
movement costs (i.e., when $\lambda_t=0$ for all $t$). Moreover, the bound has the desirable property of being \emph{adaptive} to all problem-dependent quantities of interest. In particular, 
notice that
the dominant term in \Cref{thm:tuned-bound} depends on the individual movement coefficients $\lambda_t$ rather than the worst-case quantity $\lambda_{\max}$. This refinement is particularly consequential for our application to OCO with delayed feedback (see \Cref{sec: app}), where effective movement costs fluctuate significantly with delay lengths, making a $\lambda_{\max}$-dependent bound suboptimal.

Despite the strengths of the bound, a notable limitation of \Cref{thm:tuned-bound} is the squared dependence on the movement coefficients $\lambda_{t+1}^2$ within the square root. This structure implies that movement costs can induce significant regret even in benign environments where the gradients $\norm{g_t}$ are negligible. In the next section, we refine the algorithm to improve this dependence, a step that is also essential for achieving optimal guarantees in our later applications.

\section{\SecFirstOrder} \label{sec:first-order}

As noted previously, while \Cref{alg:pf-mc} successfully adapts to gradient norms and comparator properties, its squared dependence on movement costs is suboptimal. To refine this adaptivity, we observe that in the presence of movement costs, not every gradient observation justifies an \emph{immediate update}. Intuitively, when the movement penalty is high, the learner should only move if the first-order information has sufficient magnitude to compensate for the cost. Otherwise, it is beneficial to temporarily freeze the iterate and aggregate information until an update becomes worthwhile.

Building on this intuition, we introduce an adaptive batching meta-algorithm that yields refined dynamic regret guarantees, improving the movement cost dependence from $\sumtT \lambda_t^2$ to $\sumtT \lambda_t\norm{\gt}$. %
The pseudo-code is presented in \Cref{alg: dyn_para_meta}. Our approach is inspired by \citet[Algorithm~3]{zhang2022optimal}, but extends their procedure to accommodate both time-varying movement coefficients and dynamic regret, going beyond the static regret with fixed movement cost coefficient considered in their work. Both extensions require a non-trivial analysis of the original algorithm.

\begin{algorithm}[t!]
    \caption{Adaptive First-order Reduction}
    \label{alg: dyn_para_meta}
        \textbf{Input:} $L>0$, $\epsilon>0$
        
        \textbf{Initialize: } An instance $\widetilde \calA$ of \Cref{alg:pf-mc} with input $L$ and~$\epsilon$. Denote $\widetilde{w}_1$ to be $\widetilde \calA$'s first decision, $\tau = 1$, $H_1=\mathbf{0}$. %
    
        \For{$t = 1, 2, \dots, T$}{
            Play $w_t = \widetilde w_{\tau}$, then receive $g_t$ and $\lambda_{t+1}$
            
            Set $H_{\tau} \leftarrow H_{\tau}+ g_t $
            
            \If{$ \|H_{\tau}\| > \lambda_{t+1}$}{
            
            Set $\widetilde g_{\tau} = H_{\tau}$ and $\widetilde\lambda_{\tau+1} = \lambda_{t+1}$
            
            Send $(\widetilde g_{\tau}, \widetilde\lambda_{\tau+1})$ to $\widetilde \calA$ , and receive $\widetilde w_{\tau+1}$
            
            Set  $H_{\tau+1}=\mathbf{0}$ and update 
            $\tau \gets \tau + 1$
            }
        }
\end{algorithm}

Concretely, \Cref{alg: dyn_para_meta} \emph{adaptively} partitions the rounds into a sequence of epochs $I_1, \dots, I_N$. Within each epoch $I_\tau$, the algorithm plays a fixed action $\widetilde w_\tau$ while accumulating gradients into a buffer $H_\tau = \sum_{t\in I_\tau} g_t$. An update is triggered only when the cumulative gradient norm is sufficiently large relative to the current movement cost, specifically when $\norm{H_\tau} > \lambda_{t+1}$. Once this condition is met, the algorithm passes the aggregated gradient $\widetilde g_\tau = H_\tau$ and the current movement coefficient $\widetilde\lambda_{\tau+1} = \lambda_{t+1}$ to the base learner (\Cref{alg:pf-mc}) to generate the decision for the next epoch.
\Cref{thm:tuned-first-order} formally states the regret bound of \Cref{alg: dyn_para_meta}, whose proof is deferred to \Cref{app:first-order}.

\begin{minipage}{\columnwidth}
\begin{restatable}{theorem}{TunedFirstOrder}\label{thm:tuned-first-order}
  Assume that $f_1,\ldots, f_T$ are $G$-Lipschitz convex functions.
  For any comparator sequence $(\cmp_{1},\ldots,\cmp_{T})\in\ww^T$, \Cref{alg: dyn_para_meta} with any $L \ge G +2\lambda_{\max}$ and any $\epsilon>0$ guarantees that 
  \begin{align*}
    &R_{T}(\cmp_{1:T},\lambda_{1:T})
    = \mathcal{O}\Big( \bigl(\epsilon\log T +\widetilde M_T(\epsilon)+\widetilde P_T(\epsilon)\bigr) L
    \mbonly{\\&\quad} + \sqrt{\bigl(\widetilde M_T(\epsilon)+ \widetilde P_{T}(\epsilon)\bigr)\sum_{t=1}^T \brb{\norm{g_{t}}^2 + \lambda_{t} \norm{g_t}} \|u_t\|}\Bigg) \,,
  \end{align*}
where $M$, $\widetilde M_T(\epsilon)$ and $\widetilde P_T(\epsilon)$ are defined as in \Cref{thm:tuned-bound}. 
\end{restatable}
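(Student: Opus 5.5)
We reduce to \Cref{thm:tuned-bound} applied to the epoch-level problem solved by $\widetilde\calA$, and then relate epoch quantities back to per-round ones.

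\textbf{Epoch notation.} Let $I_1,\dots,I_N$ be the epochs produced by \Cref{alg: dyn_para_meta}, with $I_\tau=\{s_\tau,\dots,e_\tau\}$. Every epoch except possibly the last ends with a trigger $\norm{H_\tau}>\lambda_{e_\tau+1}$. For each epoch choose a representative comparator, say $\widetilde u_\tau \defeq u_{e_\tau}$, so the epoch path length is at most $P_T$ and its maximum norm is at most $M$.

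\textbf{Decomposition.} By convexity, $f_t(w_t)-f_t(u_t)\le\inner{g_t,w_t-u_t}$. Summing within $I_\tau$ and using $w_t=\widetilde w_\tau$ gives
\[
\inner{\widetilde g_\tau,\widetilde w_\tau-\widetilde u_\tau}+\sum_{t\in I_\tau}\inner{g_t,\widetilde u_\tau-u_t}.
\]
The learner moves only at the first round of each epoch, at cost $\lambda_{s_{\tau+1}}\norm{\widetilde w_{\tau+1}-\widetilde w_\tau}=\widetilde\lambda_{\tau+1}\norm{\widetilde w_{\tau+1}-\widetilde w_\tau}$. The learner's total is therefore exactly the asymmetric regret of $\widetilde\calA$ on the sequence $(\widetilde g_\tau,\widetilde\lambda_{\tau+1})$ against $\widetilde u_{1:N}$, plus the comparator-drift term $\sum_\tau\sum_{t\in I_\tau}\inner{g_t,\widetilde u_\tau-u_t}$.

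\textbf{Bounding the two pieces.}

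\emph{Lipschitz requirement.} To apply \Cref{thm:tuned-bound}, we need the effective Lipschitz bound $\norm{\widetilde g_\tau}+\widetilde\lambda_{\tau+1}\le L$. Just before the trigger we had $\norm{H_\tau-g_{e_\tau}}\le\lambda_{e_\tau}\le\lambda_{\max}$, so $\norm{\widetilde g_\tau}\le G+\lambda_{\max}$. This explains the requirement $L\ge G+2\lambda_{\max}$.

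\emph{Drift term.} Bound it by $\sum_\tau\norm{\sum_{t\in I_\tau}g_t(\dots)}$. The crude route is $\sum_{t\in I_\tau}\norm{g_t}\norm{u_{e_\tau}-u_t}$, but that gives $G\cdot(\text{path within epoch})\cdot|I_\tau|$, which is too large. Instead, use summation by parts: write
\[
u_{e_\tau}-u_t=\sum_{s=t+1}^{e_\tau}(u_s-u_{s-1}),
\]
so the drift equals $\sum_s\inner{H^{(<s)}_\tau,u_s-u_{s-1}}$, where $H^{(<s)}_\tau$ is the partial sum of gradients before $s$ within the epoch. Each such partial sum has norm at most $\lambda_{\max}\le L$ by the non-trigger condition. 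The drift is thus at most $LP_T$, which is absorbed into the $\widetilde P_T L$ term.

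\emph{Main term.} \Cref{thm:tuned-bound} gives the leading term
\[
\sqrt{(\widetilde M+\widetilde P)\textstyle\sum_\tau(\norm{\widetilde g_\tau}^2+\widetilde\lambda_{\tau+1}^2)\norm{\widetilde u_\tau}}.
\]
Since the epoch triggered, $\widetilde\lambda_{\tau+1}<\norm{\widetilde g_\tau}$, so this is at most $2\norm{\widetilde g_\tau}^2\norm{\widetilde u_\tau}$. The last, untriggered epoch sends nothing and contributes only drift.

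\textbf{Main obstacle.} The hard step is showing
\[
\sum_\tau\norm{\widetilde g_\tau}^2\norm{\widetilde u_\tau}\lesssim\sum_t\bigl(\norm{g_t}^2+\lambda_t\norm{g_t}\bigr)\norm{u_t}+(\text{lower-order path terms}).
\]

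\emph{Squared epoch gradient.} Write $\widetilde g_\tau=H'+g_{e_\tau}$, where $H'$ is the epoch sum before the last round, with $\norm{H'}\le\lambda_{e_\tau}$. Then
\[
\norm{\widetilde g_\tau}^2\le\norm{H'}^2+2\norm{H'}\norm{g_{e_\tau}}+\norm{g_{e_\tau}}^2.
\]
The last two terms are at most $2\lambda_{e_\tau}\norm{g_{e_\tau}}+\norm{g_{e_\tau}}^2$, which is of the desired form at round $e_\tau$ with comparator $u_{e_\tau}=\widetilde u_\tau$.

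\emph{The $\norm{H'}^2$ term.} Expand it recursively over the epoch: $\norm{H_{s}}^2-\norm{H_{s-1}}^2\le2\norm{H_{s-1}}\norm{g_s}+\norm{g_s}^2$, with $\norm{H_{s-1}}\le\lambda_{s}$ by non-triggering. This gives
\[
\norm{H'}^2\le\sum_{t\in I_\tau}\bigl(2\lambda_{t}\norm{g_t}+\norm{g_t}^2\bigr).
\]
Here I would double-check the index of $\lambda$: the non-trigger condition at round $s-1$ involves $\lambda_s$, matching the $\lambda_t\norm{g_t}$ form.

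\emph{Replacing comparator norms.} This still has weights $\norm{\widetilde u_\tau}$ instead of $\norm{u_t}$. Bound $\norm{u_{e_\tau}}\le\norm{u_t}+\sum_{s\in I_\tau}\norm{u_s-u_{s-1}}$. Each extra term is at most $(\lambda_{\max}^2+L^2)\cdot(\text{epoch path length})$. Summed over epochs this is $\lesssim L^2P_T$, and it enters under the square root as $\sqrt{(\widetilde M+\widetilde P)L^2P_T}\lesssim(\widetilde M+\widetilde P)L$, which is absorbed.

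Combining the pieces yields the stated bound.
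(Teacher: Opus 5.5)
Your proof is correct and follows the paper's architecture almost step for step. The following pieces all coincide with \Cref{cor:regret_decomposition} and \Cref{gradient_decomposition}:
\begin{itemize}
\item the epoch decomposition;
\item the summation-by-parts drift bound $\lambda_{\max}P_T$, via the non-trigger condition $\norm{\sum_{t=s_\tau}^{s-1}g_t}\le\lambda_s$;
\item the check $\norm{\widetilde g_\tau}+\widetilde\lambda_{\tau+1}\le G+2\lambda_{\max}$;
\item the recursive bound $\norm{\widetilde g_\tau}^2\le\sum_{t\in I_\tau}(\norm{g_t}^2+2\lambda_t\norm{g_t})$.
\end{itemize}

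The one real difference is the choice of representative comparator.
\begin{itemize}
\item The paper takes $\widetilde u_\tau\in\argmin_{u\in\{u_t:t\in I_\tau\}}\norm{u}$. It pays an extra linear drift $\inner{\widetilde g_\tau,\widetilde u_\tau-u_{e_\tau}}\le(G+\lambda_{\max})\norm{\widetilde u_\tau-u_{e_\tau}}$, for a total of $(G+2\lambda_{\max})P_T$ outside the square root. In exchange, $\norm{\widetilde u_\tau}\le\norm{u_t}$ for every $t\in I_\tau$, so distributing epoch terms onto rounds is free.
\item You take $\widetilde u_\tau=u_{e_\tau}$. This cuts the drift to $\lambda_{\max}P_T$, but you must pay for the norm mismatch inside the square root and then absorb it via $\sqrt{(\widetilde M+\widetilde P)L^2P_T}\le(\widetilde M+\widetilde P)L$.
\end{itemize}
Both routes give the same bound. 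The paper's keeps everything comparator-related in a linear term.

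In your version the order of operations matters, and your write-up blurs it. You need $\norm{u_{e_\tau}}\le\min_{t\in I_\tau}\norm{u_t}+p_\tau$, with $p_\tau$ the within-epoch path length. You must apply it to the undistributed quantity $(\norm{\widetilde g_\tau}^2+\widetilde\lambda_{\tau+1}^2)\norm{u_{e_\tau}}$. Then the extra term is $(\norm{\widetilde g_\tau}^2+\widetilde\lambda_{\tau+1}^2)p_\tau\le 2L^2p_\tau$, and only afterwards do you expand $\norm{\widetilde g_\tau}^2+\widetilde\lambda_{\tau+1}^2$ into per-round terms weighted by $\min_{t\in I_\tau}\norm{u_t}\le\norm{u_t}$.

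If you first expand into $\norm{\widetilde u_\tau}\sum_{t\in I_\tau}(\cdots)$, as your sentence ``this still has weights $\norm{\widetilde u_\tau}$'' suggests, the extra term becomes $p_\tau\sum_{t\in I_\tau}(\norm{g_t}^2+\lambda_t\norm{g_t})$. That is not $\mathcal{O}(L^2p_\tau)$, because epochs can be arbitrarily long. Your stated constant $(\lambda_{\max}^2+L^2)p_\tau$ indicates you had the correct order in mind, but you should make it explicit.

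Also, your claim that an untriggered last epoch ``contributes only drift'' is true, but it needs justification. It holds because $\lambda_{T+1}=0$, so not triggering at round $T$ forces $H_N=\mathbf{0}$ and kills the term $\inner{H_N,\widetilde w_N-\widetilde u_N}$. The paper instead closes the last epoch with $\widetilde\lambda_{N+1}=0$.
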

\end{minipage}

\Cref{thm:tuned-first-order} substantially sharpens the guarantee of \Cref{thm:tuned-bound}. Specifically, while full second-order adaptivity is known to be incompatible with movement costs \citep[see][]{gofer2014higher,zhang2022optimal}, our bound achieves an ideal form of first-order adaptivity. The leading data-dependent term now scales with $\sum_{t}\brb{\norm{g_t}^2 + \lambda_t\norm{g_t}}\norm{u_t}$; this effectively replaces the quadratic movement cost dependence $\lambda_{t+1}^2$ of \Cref{thm:tuned-bound} with the linear interaction term $\lambda_t\|g_t\|$, while preserving granular adaptivity to individual $\lambda_t$ (rather than $\lambda_{\max}$). As we show in \Cref{sec: app}, \emph{this refinement is essential} for obtaining optimal regret bounds in applications such as OCO with delayed feedback. Crucially, this bound vanishes as gradients approach zero even if movement costs remain strictly positive. Furthermore, when reducing to the one-dimensional static regret setting with fixed movement costs, 
our result improves upon \citet{zhang2022optimal} by replacing $G^2T+\lambda GT$ with the adaptive sum $\sum_{t}\|g_t\|^2+\lambda\sum_{t}\|g_t\|$.
Moreover, previous dynamic regret results \citep{zhang2021revisiting,zhao2023non} need to rely on bounded domains and have similar shortcomings regarding adaptivity, leading to coarser dependencies on problem-dependent quantities.
Lastly, we remark that, to the best of our knowledge, \Cref{thm:tuned-bound,thm:tuned-first-order} are the \emph{first} parameter-free results achieving near-optimal dynamic regret in unconstrained OCO with movement costs.
\begin{remark} \label{rem:lip-adapt}
    \Cref{alg: dyn_para_meta} needs $L \ge G+2\lambda_{\max}$, thus requiring prior knowledge of both $G$ and $\lambda_{\max}$.
    While knowing $G$ is fairly standard and reasonable, $\lambda_{\max}$ ultimately depends on quantities $(\lambda_t)_{t\ge1}$ that are only sequentially observed.
    Fortunately, since the learner knows $\lambda_t$ before playing $w_t$, we may avoid this unrealistic prior information via a standard doubling trick
    on $\norm{g_{t-1}}+\lambda_{t}$
    with initial guess $L=G$.
\end{remark}

\begin{remark}
    We point out that the first-order movement-cost dependence in \pref{thm:tuned-first-order}
    is in fact never worse (up to constant factors) than the corresponding second-order dependence in
    \pref{thm:tuned-bound}. %
    To see why, observe that by the AM-GM inequality we have
$
    \sum_{t}
    \bigl(\norm{g_t}^2+\lambda_t\norm{g_t}\bigr)\norm{u_t}
    =
    \mathcal{O}\left(
    \sum_{t}
    \bigl(\norm{g_t}^2+\lambda_t^2\bigr)\norm{u_t}
    \right).
$
    A careful reader may also notice that
    the bound in \pref{thm:tuned-bound} is stated with $\lambda_{t+1}^2$
    instead of $\lambda_t^2$, though it is easily seen that
    this index shift can be removed by paying only an additional lower-order
    $\cO(\lambda_{\max}(M+ P_T))$ penalty, which already appears in the regret
    bound. Consequently, the first-order movement-cost bound can always
    be upper bounded in terms of the second-order guarantee,
    yet it can be significantly sharper when $\norm{g_t}$ is small compared with $\lambda_t$. A complete proof of this argument is deferred to \Cref{prop:comparison}.
\end{remark}

\section{\SecApplications} \label{sec: app}

To demonstrate the versatility of our results for OCO with movement costs, in this section we consider two applications: OCO with delayed feedback and OCO with time-varying memory.
We show that both problems can be reduced to instances of OCO with movement costs, thus allowing us to directly apply \Cref{alg: dyn_para_meta} to obtain near-optimal regret bounds for both problems. %

\subsection{Unconstrained OCO with Delayed Feedback} \label{sec:oco-delays}

We first address the problem of unconstrained OCO with delayed feedback.
The interaction proceeds as follows: at each round $t\in[T]$, the learner selects a decision $w_t\in \calW=\R^n$ while the environment simultaneously chooses a convex loss function $f_t:\calW\to\R$, and 
the learner incurs the loss $f_t(w_t)$. However, unlike the standard OCO setting, the gradient $g_t = \nabla f_t(w_t)$ is only revealed at the end of round $t+d_t$, where $d_t \ge 0$ is an arbitrary delay unknown to the learner.
Without loss of generality, we assume $t+d_t \le T$ for all $t$, ensuring all feedback is received by the end of the game.
The learner's performance is measured by the standard dynamic regret:
\[
    \RD_T(u_{1:T}) \defeq \sumtT \bigl(f_t(\wt) - f_t(u_t)\bigr) ,
\]
where the superscript $\del$ is used to indicate that the underlying problem setting considered is that of OCO with delayed feedback. 

To facilitate our analysis, we introduce the following standard notations for OCO with delayed feedback.
Let $o_t \defeq \{\tau\in \Zplus : \tau+d_\tau < t\} \subseteq [t-1]$ denote the set of rounds whose gradients have been observed prior to round $t$.
Accordingly, let $m_t \defeq [t-1] \setminus o_t$ denote the set of rounds whose gradients have not yet arrived prior to round $t$.
Using these definitions, the feedback protocol is equivalent to receiving the batch of gradients $\{g_{\tau}\}_{\tau\in o_{t+1}\setminus o_t}$ at the end of round $t$.
Finally, we define $\sigma_{\max} \defeq \max_{t\in[T]}|m_t|$ as the maximum number of outstanding gradients at any step, and $\dtot \defeq \sum_{t=1}^T d_t$ as the total delay.

To utilize \pref{alg: dyn_para_meta}, which is designed for OCO with movement costs, we show a novel connection between delayed feedback and movement penalties. To our knowledge, this connection has not been previously explored in the literature. The following technical lemma bounds the dynamic regret $\RD_T(u_{1:T})$ by a linearized loss constructed from the feedback observed at each round, plus a movement cost rescaled by the number of missing observations.
\begin{restatable}{lemma}{delaytomove}
\label{lem:delaytomove}
Assume that $f_1,\dots,f_T$ are $G$-Lipschitz convex functions.
Then
\begin{align*}
    \RD_T(u_{1:T})
    &\le \sumtT \biggl\langle \sum_{\tau \in o_{t+1} \setminus o_t} g_{\tau}, \wt - u_t\biggr\rangle
    \mbonly{\\&\quad} + G\sumtT \abs{m_t}\norm{w_t - w_{t-1}} + GP_T\sigma_{\max} \,.
\end{align*}
\end{restatable}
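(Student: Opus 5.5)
By convexity, $\RD_T(u_{1:T}) \le \sumtT \inner{g_t, \wt - u_t}$. The task is therefore to rearrange this linearized regret so that each gradient $g_\tau$ is charged at the round $t$ in which it arrives, rather than at the round $\tau$ in which it was generated. Every $\tau\in[T]$ belongs to exactly one set $o_{t+1}\setminus o_t$, namely for $t=\tau+d_\tau$, and we always have $t\ge\tau$. We then write
\[
\inner{g_\tau, w_\tau - u_\tau} = \inner{g_\tau, w_t - u_t} + \inner{g_\tau, w_\tau - w_t} + \inner{g_\tau, u_t - u_\tau},
\]
where $t=\tau+d_\tau$. Summing over $\tau$ and regrouping by arrival round produces exactly the first term of the lemma. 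It remains to bound two error sums:
\[
\sum_\tau \inner{g_\tau, w_\tau - w_{\tau+d_\tau}} \quad\text{and}\quad \sum_\tau \inner{g_\tau, u_{\tau+d_\tau}-u_\tau}.
\]

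For the learner error, we use $\norm{g_\tau}\le G$ and telescope:
\[
\norm{w_\tau - w_{\tau+d_\tau}} \le \sum_{s=\tau+1}^{\tau+d_\tau}\norm{w_s-w_{s-1}}.
\]
Exchanging the order of summation, the movement $\norm{w_s-w_{s-1}}$ is counted once for each $\tau$ with $\tau< s\le\tau+d_\tau$. These are exactly the $\tau\le s-1$ whose feedback has not arrived before round $s$, since $\tau+d_\tau\ge s$ means $\tau\notin o_s$. The count therefore equals $\abs{m_s}$, which gives $G\sum_s \abs{m_s}\norm{w_s-w_{s-1}}$.

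For the comparator error, the identical telescoping argument gives $G\sum_s \abs{m_s}\norm{u_s-u_{s-1}}$. Bounding $\abs{m_s}\le\sigma_{\max}$ then yields $G\sigma_{\max}P_T$, using that $P_T$ sums over $s\ge 2$ and $u_0=u_1$.

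The main point needing care is the counting identity: the number of $\tau$ with $\tau<s\le\tau+d_\tau$ must equal $\abs{m_s}$, with the strict and non-strict inequalities matching the definition $o_t=\{\tau:\tau+d_\tau<t\}$. Here $\tau\in m_s$ iff $\tau\le s-1$ and $\tau+d_\tau\ge s$, which is exactly this range, so the identity holds. The edge cases are consistent as well: $w_0=w_1$ makes the $s=1$ term vanish, and $d_\tau=0$ contributes no movement terms.
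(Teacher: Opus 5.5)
Your proof is correct and is essentially the paper's argument. Charging each $g_\tau$ to its arrival round $\tau+d_\tau$ and telescoping the discrepancy performs the same rearrangement that the paper obtains by summation by parts with the potential $z_t=-\sum_{\tau\in m_t}g_\tau$, and your count of the $\tau$ with $\tau<s\le\tau+d_\tau$ as $\abs{m_s}$ is exactly where the paper's factor $G\abs{m_s}$ comes from. The only difference is cosmetic: you apply the Lipschitz bound to each gradient before exchanging the sums, while the paper first aggregates $\sum_{\tau\in m_s}g_\tau$ and then uses Cauchy--Schwarz, and both yield identical terms.
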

The proof of this lemma is deferred to \Cref{app: delay}.
Specifically, \Cref{lem:delaytomove} decomposes the dynamic regret $\RD_T$ into three components: (i) a linearized regret term driven by the aggregated gradients $\sum_{\tau\in o_{t+1}\setminus o_t} g_\tau$ received at the end of round $t$; (ii) a movement penalty $G\sum_t |m_t|\norm{w_t-w_{t-1}}$, weighted by the count of missing gradients; and (iii) a constant additive term $GP_T\sigma_{\max}$ that is independent of the learner's decisions.
This structure suggests a direct reduction from OCO with delayed feedback to OCO with movement costs: we simply treat the aggregated gradients as the instantaneous feedback and set the movement-cost coefficient at round $t$ to $\lambda_t = G|m_t|$.
This highlights the necessity of incorporating \emph{time-varying} movement-cost coefficients since $|m_t|$ is naturally fluctuating over time and only revealed at the beginning of round $t$.
From an intuitive perspective, a larger number of missing gradients increases the cost for updating the decision while lacking that information about past rounds, making the learner more conservative. Conversely, when less information is missing, the effective movement cost decreases, encouraging faster adaptation as feedback becomes promptly available.

We are now ready to apply our results to OCO with delayed feedback and the complete algorithm is shown in \Cref{alg: dyn_para_meta_delay}. In particular, \Cref{alg: dyn_para_meta_delay} serves as a wrapper that implements the reduction to movement costs by running a base learner $\calA^{\mov}$, which is an instance of \Cref{alg: dyn_para_meta}.
At each time step $t \in [T]$, the wrapper adopts the decision $w_t = w^{\mov}_{t}$ proposed by the base learner.
Once $w_t$ is played, the learner observes the batch of newly revealed gradients, aggregated as $h_t = \sum_{\tau \in o_{t+1}\setminus o_t} g_{\tau}$.
Using the count $|o_{t+1}\setminus o_t|$ of these arrivals, the learner calculates the current number of missing gradients, $|m_{t+1}|$.
In accordance with \Cref{lem:delaytomove}, we set the movement-cost coefficient for the subsequent update to $\lambda_{t+1} = G|m_{t+1}|$.
Finally, the pair $(h_t, \lambda_{t+1})$ is fed to $\calA^{\mov}$, which performs the update and outputs the decision for the next round, $w^{\mov}_{t+1}$. 

Adapting \pref{thm:tuned-first-order} to this specific choice of gradients and movement coefficients leads to the following dynamic regret bound for OCO with delayed feedback.
\begin{restatable}{theorem}{dynamicdelay}\label{thm:dynamicdelay}
   Assume that $f_1,\dots,f_T$ are $G$-Lipschitz convex functions.
   For any comparator sequence $(\cmp_{1},\dots,\cmp_{T})\in\ww^T$, \Cref{alg: dyn_para_meta_delay} with any $L \ge G(1+3\sigma_{\max})$ and any $\epsilon>0$ guarantees that
   \begin{align*}
       &\RD_T(\cmp_{1:T})
    =\mathcal{O}\Big( \bigl(\epsilon\log T +\widetilde M_T(\epsilon)+\widetilde P_T(\epsilon)\bigr) L
     \mbonly{\\&\;} + \sqrt{\bigl(\widetilde M_T(\epsilon)+ \widetilde P_{T}(\epsilon)\bigr)\sum_{t\in[T]} (\norm{g_t}^2 + G|\widehat m_{t}|\norm{g_t})\norm{\widehat\cmp_t}}\Bigg)\\
     &= \widetilde{\mathcal{O}}\Bigl(\lrb{M + P_T} L + G\sqrt{\lrb{M^2 + M P_T}\lrb{T+\dtot}} \Bigr).
   \end{align*}
    where we define $\widehat m_t = m_{t+d_t}$ and $\widehat u_t = u_{t+d_t}$, while 
   $M$, $\widetilde M_T(\epsilon)$, and $\widetilde P_T(\epsilon)$ are defined as in \Cref{thm:tuned-bound}. 
\end{restatable}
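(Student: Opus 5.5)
The plan is to combine \Cref{lem:delaytomove} with \Cref{thm:tuned-first-order}, applied to the surrogate movement-cost instance that \Cref{alg: dyn_para_meta_delay} feeds to $\calA^{\mov}$. That instance has gradients $h_t=\sum_{\tau\in o_{t+1}\setminus o_t} g_\tau$ and coefficients $\lambda_t=G|m_t|$.

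\textbf{Step 1: reduce to the surrogate instance.} By \Cref{lem:delaytomove},
\[
\RD_T(u_{1:T})\le \sum_t\langle h_t,w_t-u_t\rangle+\sum_t\lambda_t\norm{w_t-w_{t-1}}+GP_T\sigma_{\max}.
\]
The first two terms are exactly the linearized asymmetric regret $R_T$ of $\calA^{\mov}$ on linear losses $\langle h_t,\cdot\rangle$. Since the iterates depend only on the fed pairs, I will take the proof of \Cref{thm:tuned-first-order} at the linearized level, where convexity is only used to linearize.

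\textbf{Step 2: check the hypotheses of \Cref{thm:tuned-first-order}.} The losses $\langle h_t,\cdot\rangle$ are $\norm{h_t}$-Lipschitz with $\norm{h_t}\le G|o_{t+1}\setminus o_t|$, and $\lambda_{\max}\le G\sigma_{\max}$. The difficulty is that $|o_{t+1}\setminus o_t|$ can exceed $\sigma_{\max}$ only by at most one, since arrivals at round $t$ come from $m_t\cup\{t\}$. Hence $\norm{h_t}\le G(\sigma_{\max}+1)$, and $L\ge G(1+3\sigma_{\max})\ge G(1+\sigma_{\max})+2G\sigma_{\max}$ suffices.

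\textbf{Step 3: convert the first-order term back to original rounds.} \Cref{thm:tuned-first-order} gives a bound with $\sum_t(\norm{h_t}^2+\lambda_t\norm{h_t})\norm{u_t}$. This is the step I expect to be the main obstacle, because of the squared batch norm. For the cross term, the triangle inequality on $h_t$ and a regrouping by arrival round $s=\tau+d_\tau$ give $\sum_\tau G|m_{\tau+d_\tau}|\norm{g_\tau}\norm{u_{\tau+d_\tau}}$. For the squared term, expand
\[
\norm{h_t}^2\le\sum_{\tau}\norm{g_\tau}\Bigl(\norm{g_\tau}+\sum_{\tau'\ne\tau}\norm{g_{\tau'}}\Bigr).
\]
Gradients arriving in the same batch as $\tau$ at time $s$, other than $\tau$ and $s$ itself, were missing at $s$, so their number is at most $|m_s|$. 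With Lipschitzness this gives $\norm{h_t}^2\le\sum_{\tau\in\text{batch}}(\norm{g_\tau}^2+G(|\widehat m_\tau|+1)\norm{g_\tau})$. The extra $+1$ term is absorbed using $G\norm{g_\tau}\le$ constant $\cdot$ (stated terms plus $G^2$), which is fine for the worst-case form; if it is not, I would fold it into constants via $\norm{g_\tau}\le G$. This yields the first displayed bound, and the additive $GP_T\sigma_{\max}\le P_TL$ is absorbed into the $\widetilde P_T L$ term.

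\textbf{Step 4: worst-case form.} Bound $\norm{\widehat u_t}\le M$ and $\norm{g_t}\le G$, and use the standard identity $\sum_t|m_{t+d_t}|\le\sum_t|m_t|$-type counting. Concretely, $|m_s|$ counts pairs $(\tau,s)$ with $\tau<s\le\tau+d_\tau$; summing over arrivals $\tau$ of $|m_{\tau+d_\tau}|$ requires the bound $\sum_\tau|m_{\tau+d_\tau}|=\cO(\dtot)$. I would prove this by charging each pair $(\tau',\tau)$, where $\tau'$ is missing at $\tau$'s arrival, to whichever of $\tau,\tau'$ has the larger delay; this delay is at least their index gap, so each round is charged at most $2d$ times. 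The result is $G^2(T+\dtot)$, hence $\widetilde{\cO}\bigl(G\sqrt{(M^2+MP_T)(T+\dtot)}\bigr)$.
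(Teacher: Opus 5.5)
\textbf{There is a small but genuine gap in Step~3.} Your overall route is the paper's: reduce via \Cref{lem:delaytomove}, check $\norm{h_t}\le G(\sigma_{\max}+1)$ and $\lambda_t\le G\sigma_{\max}$ so that $L\ge G(1+3\sigma_{\max})$ meets \Cref{thm:tuned-first-order}, then expand $\norm{h_t}^2$ and regroup by arrival round $s=\tau+d_\tau$.

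The problem is that Step~3 as written does not give the first (adaptive) displayed bound, because your count of the other gradients in $\tau$'s batch is off by one. You bound the batch members other than $\tau$ and $s$ by $|m_s|$ and then add one for $s$. This gives $G(|\widehat m_\tau|+1)\norm{g_\tau}$, and the extra $\sum_\tau G\norm{g_\tau}\norm{\widehat u_\tau}$ cannot be absorbed. For example, take zero delays (so $L=G$), a fixed comparator of norm $M$, and $\norm{g_t}=G/\sqrt{T}$. The claimed bound is then $\cO\bigl(G(\epsilon\log T+\widetilde M_T(\epsilon))\bigr)$, while your extra term contributes $G\sqrt{\widetilde M_T(\epsilon)M}\,T^{1/4}$. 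Your fallback $\norm{g_\tau}\le G$ only rescues the worst-case form.

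The missing observation is the one you already used in Step~2. The batch $o_{s+1}\setminus o_s$ is contained in $m_s\cup\{s\}$, and $\tau$ itself is one of its elements. So the number of other elements is at most $|o_{s+1}\setminus o_s|-1\le|m_s|$. Equivalently, if $m_s=\emptyset$ the batch is the singleton $\{s\}$ and there is no cross term. This gives $\norm{h_s}^2\le\sum_{\tau\in o_{s+1}\setminus o_s}\bigl(\norm{g_\tau}^2+G|\widehat m_\tau|\norm{g_\tau}\bigr)$. Together with your cross-term regrouping, you get exactly $\sum_\tau\bigl(\norm{g_\tau}^2+2G|\widehat m_\tau|\norm{g_\tau}\bigr)\norm{\widehat u_\tau}$, which is what the paper obtains.

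\textbf{Step~4 is a valid alternative route.} The paper writes $\sum_\tau|m_{\tau+d_\tau}|=\sum_t|m_t|\cdot|o_{t+1}\setminus o_t|$. It then telescopes using $|o_{t+1}\setminus o_t|=|m_t|+1-|m_{t+1}|$ and $|m_t|\le|m_{t-1}|+1$ to get at most $2\sum_t|m_t|\le 2\dtot$ (\Cref{lem: delay_aux}).

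Your charging argument is sound in substance. For every pair with $\tau'<\tau+d_\tau\le\tau'+d_{\tau'}$, the larger of the two delays is indeed at least $|\tau-\tau'|$, and zero-delay rounds are never charged. However, the per-round count is not $2d_r$. A partner $x>r$ can appear with $r$ in both orders (when $x+d_x=r+d_r$, and then both are charged to $r$). The self-pair $(r,r)$ is also counted whenever $d_r\ge 1$. The honest count is about $3d_r$ per round.

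That still gives $\cO(\dtot)$, which is all the worst-case form needs. The paper's telescoping is shorter and gives the sharper constant $2$; your charging is more local and avoids the counting identity for $|o_{t+1}\setminus o_t|$.
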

\begin{proof}[Proof sketch]
    We combine the reduction of \Cref{lem:delaytomove} with our movement-cost guarantee (\Cref{thm:tuned-first-order}).
    First, note that $\norm{h_t} \le G|o_{t+1}\setminus o_t| \le G(\sigma_{\max}+1)$ and that the movement cost scale is $\lambda_{t+1} = G|m_{t+1}| \le G\sigma_{\max}$, hence the assumption on $L$ satisfies the requirement of \Cref{thm:tuned-first-order}.
    Then, using both \Cref{lem:delaytomove} and \Cref{thm:tuned-first-order}, we obtain the regret guarantee of the desired form, but where the sum inside the square root is $\sumtT (\norm{h_t}^2+G|m_t|\norm{h_t})\norm{u_t}$.
    After massaging the sum, we can bound it from above by $\sum_{s=1}^T (\norm{g_s}^2 + 2G|m_{s+d_s}|\norm{g_s})\norm{u_{s+d_s}}$, also thanks to the fact that $s+d_s=t$ for all $s \in o_{t+1}\setminus o_t$.
    Finally, using $\norm{g_s} \le G$ and $\norm{u_{s+d_s}}\le M$, and applying \Cref{lem: delay_aux} in \Cref{app: delay} to show that $\sum_{s=1}^T |m_{s+d_s}| = \cO(\dtot)$, we recover the $\dtot$-dependent worst-case bound as well.
\end{proof}

\begin{algorithm}[t]
    \caption{Delays-to-movements Reduction}
    \label{alg: dyn_para_meta_delay}
        
        \textbf{Input: } $L>0$, $\epsilon>0$
        
        \textbf{Initialize: } Create an instance $\calA^{\mov}$ of \Cref{alg: dyn_para_meta} with input $L$ and $\epsilon$. Denote $w_{1}^{\mov}$ to be the first decision of $\calA^{\mov}$.
        
        \For{$t = 1, 2, \dots, T$}{
            Play $w_t = w_t^{\mov}$
            
            Observe $h_t = \sum_{\tau \in o_{t+1} \setminus o_t} g_{\tau}$ and $|o_{t+1}\setminus o_t|$
            
            Send $(h_t, G|m_{t+1}|)$ to $\calA^{\mov}$ and receive $w_{t+1}^{\mov}$
        }
\end{algorithm}

\Cref{thm:dynamicdelay} (whose proof is deferred to \Cref{app: delay}) yields a parameter-free dynamic regret guarantee for delayed feedback on the unconstrained domain $\ww = \R^n$, where the impact of delays mainly enters through the $T+\dtot$ factor in the leading term.
This can be contrasted with the existing $\widetilde{\mathcal{O}}\brb{\sqrt{(1+P_T)(T + \dmax T)}}$ dynamic regret guarantee by \citet{wan2023non}, who study the problem only over bounded convex domains and achieve a worse $\dmax T \gg \dtot$ dependence on delays; their result improves to the desirable $\dtot$ dependence only under an additional ``in-order'' assumption on delays.
Closer in spirit, \citet{van2022distributed} develop a comparator-adaptive algorithm for unconstrained OCO with delayed feedback and obtain \emph{static} regret bounds featuring a second-order ``lag'' term that captures the complexity introduced by delays.
While their bound exhibits a better adaptivity to gradient norms, our analysis additionally accommodates drifting comparators which come with additional technical challenges.

Note that, in order for the instance $\cA^{\mov}$ of \Cref{alg: dyn_para_meta} to provide the desired guarantee, we need $L \ge G(1+3\sigma_{\max})$ which would in turn require knowledge of the delay-dependent quantity $\sigma_{\max}$.
Nonetheless, \Cref{alg: dyn_para_meta} could adapt to it in a black-box manner as discussed in \Cref{rem:lip-adapt}.%

An immediate corollary of our results is a parameter-free dynamic regret bound for OCO with \emph{both} delayed feedback and movement costs \citep{pan2022online,yang2025smoothed}. The setting was also studied by \citet{yang2025smoothed}, but with a fixed movement-cost parameter. We denote the movement cost regret under delayed feedback by $\RD_T(\cmp_{1:T},\lambda_{1:T})$.

\begin{minipage}{\columnwidth}
\begin{corollary}
   Under the same assumptions as \Cref{thm:dynamicdelay}, suppose that at the end of each round $t$ we feed $\cA^\mov$ with movement coefficient 
   $\widehat\lambda_{t+1} = G\abs{m_\tpp}+\lambda_\tpp$. Then for any $L \ge G(1 + 3\sigma_{\max}) + 2\lambda_{\max}$ and any $\epsilon>0$, 
   \Cref{alg: dyn_para_meta_delay} guarantees
   \begin{align*}
       &\RD_T(u_{1:T},\lambda_{1:T})
       = \mathcal{O}\bbrb{\brb{\epsilon\log(T) +\widetilde M_T(\epsilon)+\widetilde P_T(\epsilon)} L
       \mbonly{\\&} +\!\! \sqrt{\brb{\widetilde M_T(\epsilon)\!+\!\widetilde P_T(\epsilon)} \!\sumtT \brb{\norm{g_t}^2 + \widehat \lambda_{t+d_t}\norm{g_t}}\norm{u_{t+d_t}}\!}\,},
   \end{align*}
    where we define
    $M$, $\widetilde M_T(\epsilon)$, and $\widetilde P_T(\epsilon)$
    as in \Cref{thm:tuned-bound}.
\end{corollary}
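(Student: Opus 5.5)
The plan is to combine the two reductions additively: the movement-cost part of the regret and the delay part both become movement penalties, with coefficients $\lambda_t$ and $G|m_t|$ respectively. First I would write
\[
\RD_T(u_{1:T},\lambda_{1:T}) = \RD_T(u_{1:T}) + \sumtT \lambda_t\norm{w_t-w_{t-1}},
\]
and apply \Cref{lem:delaytomove} to the first term. This gives
\[
\RD_T(u_{1:T},\lambda_{1:T}) \le \sumtT\inner{h_t, w_t-u_t} + \sumtT \widehat\lambda_t\norm{w_t-w_{t-1}} + GP_T\sigma_{\max},
\]
where $\widehat\lambda_t = G|m_t|+\lambda_t$ and $h_t=\sum_{\tau\in o_{t+1}\setminus o_t} g_\tau$. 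Since $h_t$ and $\widehat\lambda_{t+1}$ are both available at the end of round $t$, the first two sums are exactly the linearized asymmetric movement-cost regret $R_T$ of $\cA^\mov$. That regret is run on the linear losses $\inner{h_t,\cdot}$, which are $G(\sigma_{\max}+1)$-Lipschitz, with coefficients $\widehat\lambda_t$.

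Second, I would check the hypothesis of \Cref{thm:tuned-first-order}. We need
\[
L\ge \bigl(G(\sigma_{\max}+1)\bigr) + 2\max_t\widehat\lambda_t .
\]
Using $\max_t\widehat\lambda_t\le G\sigma_{\max}+\lambda_{\max}$, it suffices that $L\ge G(1+3\sigma_{\max})+2\lambda_{\max}$, which is exactly the assumed condition. (This matches the $3\sigma_{\max}$ appearing in \Cref{thm:dynamicdelay}.) Invoking \Cref{thm:tuned-first-order} then bounds the regret by the leading $L$ term plus
\[
\sqrt{(\widetilde M_T+\widetilde P_T)\sumtT(\norm{h_t}^2+\widehat\lambda_t\norm{h_t})\norm{u_t}} .
\]
The additive $GP_T\sigma_{\max}$ is at most $P_T L$, so it is absorbed into the $\widetilde P_T(\epsilon)L$ term.

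The remaining and main step is to rewrite the sum in terms of the individual $g_s$, mirroring the argument in the proof of \Cref{thm:dynamicdelay}. Every $s\in o_{t+1}\setminus o_t$ satisfies $s+d_s=t$, so $\norm{u_t}=\norm{u_{s+d_s}}$ and $\widehat\lambda_t=\widehat\lambda_{s+d_s}$. The linear part $\widehat\lambda_t\norm{h_t}\le\sum_s\widehat\lambda_{s+d_s}\norm{g_s}$ follows directly from the triangle inequality. The quadratic part needs
\[
\norm{h_t}^2\le\sum_{s}\norm{g_s}^2+2\sum_{s<s'}\norm{g_s}\norm{g_{s'}} .
\]
Each cross term is charged to the later index $s'$: at most $|o_{t+1}\setminus o_t|-1$ gradients arriving together with $g_{s'}$ had been outstanding, so the cross terms are bounded by $\sum_{s'} G\,|m_{s'+d_{s'}}|\,\norm{g_{s'}}$, up to the exact off-by-one bookkeeping taken from the proof of \Cref{thm:dynamicdelay}. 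Since $G|m_{s+d_s}|\le\widehat\lambda_{s+d_s}$, these cross terms are absorbed into the $\widehat\lambda_{t+d_t}\norm{g_t}$ term, at the cost of a constant factor. Getting this charging of cross terms right is the delicate point, but it only reuses the identity already established for \Cref{thm:dynamicdelay}; the extra $\lambda$ in $\widehat\lambda$ only enlarges the coefficient.
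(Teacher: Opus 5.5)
Your proposal is correct, and it is essentially the argument the paper has in mind for this ``immediate'' corollary (the paper gives no separate proof). You add $\lambda_t$ to the delay-induced coefficient $G|m_t|$ from \Cref{lem:delaytomove}, check that $L \ge G(1+\sigma_{\max}) + 2(G\sigma_{\max}+\lambda_{\max})$ meets the hypothesis of \Cref{thm:tuned-first-order} for the linear losses $\inner{h_t,\cdot}$, absorb $GP_T\sigma_{\max}\le P_T L$, and redo the batch-to-individual re-indexing from the proof of \Cref{thm:dynamicdelay} via $s+d_s=t$ and $|o_{t+1}\setminus o_t|-1\le |m_t|$; your cross-term charging differs from the paper's bound $\bigl(\sum_\tau\norm{g_\tau}\bigr)^2\le\sum_\tau\norm{g_\tau}^2+G(|o_{t+1}\setminus o_t|-1)\sum_\tau\norm{g_\tau}$ only by a constant factor, which $G|m_{s+d_s}|\le\widehat\lambda_{s+d_s}$ and the $\mathcal{O}(\cdot)$ absorb.
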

\end{minipage}

\subsection{Unconstrained OCO with Time-varying Memory}\label{sec: oco-memory}
We now turn to our second application: unconstrained OCO with time-varying memory. The protocol is defined as follows. The interaction proceeds in $T$ rounds. At each round $t$, the learner selects a decision $w_t \in \mathcal{W} = \mathbb{R}^n$. Simultaneously, the environment chooses a loss function $f_t : \mathcal{W}^{b_t+1} \to \mathbb{R}$ that depends on the current decision and the $b_t$ preceding ones, where $b_t \in \{0, \dots, t-1\}$ is the memory length.
We assume the sequence of memory lengths $\{b_t\}_{t=1}^T$ is known to the learner in advance, and we define $B \defeq \max_{t \in [T]} b_t$.
The learner incurs the loss $f_t(w_{t-b_t:t})$, where $w_{t-b_t:t} \defeq (w_{t-b_t}, \dots, w_t)$, and subsequently observes the full loss function $f_t$.
Note that while observing the full function is standard in the OCO with memory literature~\citep{anava2015online,zhao2023non}, it is a richer form of feedback compared to the gradient feedback considered so far. The learner's goal is to minimize the dynamic regret with respect to any comparator sequence $(u_1, \dots, u_T) \in \mathcal{W}^T$, defined as follows:
\begin{equation*}
    \RM_T(u_{1:T}) \defeq \sumtT \brb{f_t(w_{t-b_t:t}) - f_t(u_{t-b_t:t})} \;.
\end{equation*}
Following~\citet{anava2015online}, we impose structural assumptions on the loss functions. First, we require that the losses are \emph{convex in memory}, meaning that the induced unary function $\widehat{f}_t(w) \triangleq f_t(w, \dots, w)$ is convex.
Second, we assume coordinate-wise Lipschitz continuity and bounded gradients for the unary functions \cite{anava2015online, zhao2023non}.
\begin{assumption}[Coordinate-wise Lipschitzness] \label{ass:corr}
The loss function $f_t$ is coordinate-wise $G$-Lipschitz for some $G \ge 0$, i.e., for all $t \in [T]$ and inputs $x=(x_0,x_1,\dots,x_{b_t}), y=(y_0,y_1,\dots,y_{b_t}) \in \calW^{b_t+1}$,
\vspace{-1em}
\begin{align*}
    |f_t(x_0, \dots, x_{b_t}) - f_t(y_0, \dots, y_{b_t})|
    \le G \sum_{i=0}^{b_t} \|x_i-y_i\| \;.
\end{align*}
\end{assumption}
\begin{assumption}[Bounded gradient norm] \label{ass:bound_ghat}
The gradient norm of the unary losses is bounded by $H \ge 0$, i.e., $\norm{\nabla \widehat{f}_t(w)} \le H$ for all $w \in \calW$ and $t \in[T]$.
\end{assumption}

Under \Cref{ass:corr}, a standard analysis allows us to upper bound the memory-dependent loss $f_t(w_{t-b_t:t})$ using the unary loss $\fhat_t(w_t)$ plus a penalty term proportional to the movement cost of the iterates. Applying this decomposition to the comparator's sequence as well yields the following reduction, effectively transforming the problem into an OCO instance with time-varying movement costs.

\begin{restatable}{lemma}{memorytomove} \label{lem:memorytomove}
    Suppose $f_1, \dots, f_T$ are convex in memory and satisfy \Cref{ass:corr}.
    Then, $\RM_T(u_{1:T})$ is bounded above by
    \begin{align*}
        \sumtT \inner{h_t,w_t-u_t}
         + G\sum_{t=2}^T \xi_t \|w_t - w_{t-1}\| + GP_TB^2 \;, 
    \end{align*}
    where $\xi_t = \sum_{s=t}^T \mathbf{1}\left\{t \ge s-b_s+1\right\}\left(t-(s-b_s)\right)$ and $h_t = \nabla \fhat_t(w_t)$ is the unary loss gradient at $w_t$.
\end{restatable}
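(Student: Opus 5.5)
The plan is to compare each memory loss with its unary counterpart, once for the learner's sequence and once for the comparator's. For every $t$ we write
\begin{align*}
f_t(w_{t-b_t:t}) - f_t(u_{t-b_t:t})
&= \bigl[f_t(w_{t-b_t:t}) - \fhat_t(w_t)\bigr] + \bigl[\fhat_t(w_t) - \fhat_t(u_t)\bigr] \\
&\quad + \bigl[\fhat_t(u_t) - f_t(u_{t-b_t:t})\bigr] .
\end{align*}
The middle bracket is handled directly. Since $f_t$ is convex in memory, $\fhat_t$ is convex, so this bracket is at most $\inner{h_t, w_t-u_t}$ with $h_t=\nabla\fhat_t(w_t)$. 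Summing over $t$ gives the linear term in the statement.

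For the first bracket, recall that $\fhat_t(w_t)=f_t(w_t,\dots,w_t)$. \Cref{ass:corr} then gives
\[
f_t(w_{t-b_t:t}) - \fhat_t(w_t) \le G\sum_{i=1}^{b_t}\norm{w_{t-i}-w_t}.
\]
We bound each distance by a telescoping sum of one-step movements, using the triangle inequality:
\[
\norm{w_{t-i}-w_t}\le \sum_{k=t-i+1}^{t}\norm{w_k-w_{k-1}} .
\]
Next we exchange the order of summation for a fixed round $s$ (renaming $t$ to $s$). The increment $\norm{w_k-w_{k-1}}$ appears for exactly those $i\in\{t-k+1,\dots,b_s\}$. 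This range is nonempty only when $k\ge s-b_s+1$, and it contains $k-(s-b_s)$ values of $i$.

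Summing over all rounds $s\ge k$, the total coefficient of $\norm{w_k-w_{k-1}}$ is
\[
\sum_{s=k}^T \ind\{k\ge s-b_s+1\}\bigl(k-(s-b_s)\bigr)=\xi_k .
\]
The $k=1$ term vanishes because $w_0=w_1$. This yields the movement term $G\sum_{t=2}^T\xi_t\norm{w_t-w_{t-1}}$.

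The third bracket is treated symmetrically. The same argument with $u$ in place of $w$ gives at most $G\sum_{t=2}^T\xi_t\norm{u_t-u_{t-1}}$, again using $u_0=u_1$.

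It remains to show that $\xi_t\le B^2$, so that this term is bounded by $GP_TB^2$. A round $s$ contributes to $\xi_t$ only if $t\le s\le t+b_s-1\le t+B-1$. In that case its contribution is $t-s+b_s\le B-(s-t)$. Summing over $s-t\in\{0,\dots,B-1\}$ gives $\xi_t\le\sum_{j=0}^{B-1}(B-j)=B(B+1)/2\le B^2$. When $B=0$ we have $\xi_t=0$ and the bound holds trivially.

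The main obstacle is the index bookkeeping in the exchange of summation, namely checking that the count $k-(s-b_s)$ and the indicator range match the stated $\xi_t$ exactly. The rest follows immediately from convexity and \Cref{ass:corr}.
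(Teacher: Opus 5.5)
Your proposal is correct and follows the paper's proof essentially step for step: the same three-bracket decomposition with convexity of $\fhat_t$ for the middle term, the same telescoping and exchange of summation that produces the coefficient $\xi_t$ for both the learner's and the comparator's movements, and the same bound $\xi_t\le B(B+1)/2\le B^2$. One small typo: after renaming the round to $s$, the range of $i$ should read $\{s-k+1,\dots,b_s\}$ rather than $\{t-k+1,\dots,b_s\}$, which is consistent with the count $k-(s-b_s)$ you then use.
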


\begin{algorithm}[t]
    \caption{Memory-to-movements Reduction}
    \label{alg: dyn_para_meta_memory}
        \textbf{Input: } $L>0$ and $\epsilon>0$
        
        \textbf{Initialize: } Create an instance $\cA^{\mov}$ of \Cref{alg: dyn_para_meta} with $L$ and $\epsilon$; denote $w_{1}^{\mov}$ to be the first decision of $\cA^{\mov}$
        
        \For{$t = 1, 2, \dots, T$}{
            Play $w_t = w_t^{\mov}$
            
            Observe $f_t: \calW^{b_t+1} \to \R$ and $\{s \ge t : s-b_s = t\}$
            
            Compute $h_t = \nabla \widehat{f}_t(w_t)$
            
            Set $\xi_{t+1} = \sum_{s=t+1}^T \mathbf{1}\left\{t \ge s-b_s\right\}\left(t-(s-b_s)+1\right)$

            Send $(h_t, G\xi_{t+1})$ to $\cA^{\mov}$ and receive $w_{t+1}^{\mov}$
        }
\end{algorithm}
The proof is deferred to \Cref{app: memory}. With this reduction, \Cref{alg: dyn_para_meta_memory} employs \Cref{alg: dyn_para_meta} as a base learner. Specifically, at each round $t$, the algorithm computes the unary gradient $h_t=\nabla\widehat{f}_t(w_t)$ and the effective movement coefficient $\lambda_t = G\xi_t$, which is computable given the memory lengths $\{b_t\}$, and feeds them to the base learner. Combining this reduction with the regret bounds from \pref{thm:tuned-first-order} yields the following parameter-free guarantee for unconstrained OCO with memory.

\begin{minipage}{\columnwidth}
\begin{restatable}{theorem}{dynamicmove}\label{thm:dynamicmemory}
    Suppose $f_1, \dots, f_T$ are convex in memory and satisfy \Cref{ass:corr,ass:bound_ghat}.
    For any comparator sequence $(\cmp_{1},\ldots,\cmp_{T})\in\ww^T$, running \Cref{alg: dyn_para_meta_memory} with any $L \ge H + 2GB^2$ and any $\epsilon>0$ guarantees
    \begin{align*}
        &\RM_{T}(\cmp_{1:T})
        = \mathcal{O}\biggl(\brb{\epsilon\log T +\widetilde M_T(\epsilon)+\widetilde P_T(\epsilon)}L
        \mbonly{\\&\quad} + \sqrt{M\bigl(\widetilde M_T(\epsilon)+ \widetilde P_{T}(\epsilon)\bigr) \Big(H^2T + GH \sumtT b_t^2\Big)} \biggr) \,,
    \end{align*}
    where $M$, $\widetilde M_T(\epsilon)$, and $\widetilde P_T(\epsilon)$ are defined in \cref{thm:tuned-bound}.
\end{restatable}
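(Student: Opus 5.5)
We combine the reduction of \Cref{lem:memorytomove} with the movement-cost guarantee of \Cref{thm:tuned-first-order}. By \Cref{lem:memorytomove},
\[
\RM_T(u_{1:T}) \le \sumtT \inner{h_t,w_t-u_t} + \sum_{t=2}^T \lambda_t\norm{w_t-w_{t-1}} + GP_TB^2,
\]
with $\lambda_t \defeq G\xi_t$. The first two terms are exactly the linearized asymmetric regret $R_T(u_{1:T},\lambda_{1:T})$ of the movement-cost instance $\cA^{\mov}$ that \Cref{alg: dyn_para_meta_memory} feeds with gradients $h_t$ and coefficients $G\xi_{t+1}$. We must check that the coefficient computed in the algorithm coincides with the $\xi_{t+1}$ of the lemma. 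Reindexing the lemma's definition at $t+1$ gives the indicator $\mathbf{1}\{t+1\ge s-b_s+1\}=\mathbf{1}\{t\ge s-b_s\}$ and the weight $t+1-(s-b_s)$, so the two agree. The coefficient is also computable at the end of round $t$, since the $b_s$ are known in advance.

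The remaining term $GP_TB^2$ is at most $L\,\widetilde P_T(\epsilon)$ once $L\ge GB^2$, so it is absorbed into the lower-order term. We then invoke \Cref{thm:tuned-first-order}, whose proof uses only the linear losses $\inner{h_t,\cdot}$ with $\norm{h_t}\le H$ by \Cref{ass:bound_ghat}. This requires $L\ge H+2\max_t\lambda_t$.

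The main step is to bound $\xi_t$. For a fixed $t$, the terms of the sum have $s\in[t,T]$ with $s-b_s\le t-1$, so $s\le t-1+b_s\le t-1+B$. There are therefore at most $B$ such $s$, and each contributes $t-(s-b_s)\le b_s\le B$. Hence $\xi_t\le B^2$ and $\lambda_{\max}\le GB^2$, which verifies the condition on $L$.

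For the data-dependent term, we bound $\sum_t (\norm{h_t}^2+\lambda_t\norm{h_t})\norm{u_t}\le M\bigl(H^2T+GH\sum_t\xi_t\bigr)$. It remains to show $\sum_t\xi_t=\cO(\sum_t b_t^2)$. Swapping the order of summation, each $s$ contributes to $\xi_t$ for $t\in[s-b_s+1,s]$, with weights $1,2,\dots,b_s$. This gives
\[
\sum_t\xi_t\le\sum_s \frac{b_s(b_s+1)}{2}\le\sum_s b_s^2.
\]
Substituting this into \Cref{thm:tuned-first-order} yields the stated bound. The one delicate point is the index bookkeeping that matches the algorithm's $\xi_{t+1}$ with the lemma's $\xi_t$, including the boundary convention $\lambda_1=0$.
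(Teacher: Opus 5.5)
Your proposal is correct and takes essentially the same route as the paper. Both arguments apply the memory-to-movement reduction, verify $L \ge H + 2GB^2$ via the uniform bound $\xi_t \le B^2$, absorb $GP_TB^2$ into $L\,\widetilde P_T(\epsilon)$, invoke the first-order movement-cost theorem with $\|h_t\|\le H$, and bound $\sum_t \xi_t \le \sum_s b_s(b_s+1)/2 \le \sum_s b_s^2$ by exchanging the order of summation. Your counting argument for $\xi_t \le B^2$ and your explicit check that the algorithm's $\xi_{t+1}$ matches the lemma's definition are correct minor elaborations of steps the paper handles through the positive-part rewriting or leaves implicit.
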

\end{minipage}

The proof of \Cref{thm:dynamicmemory} can be found in \Cref{app: memory}.
In terms of related work, \citet{zhao2023non} study dynamic regret of online learning with fixed-length memory $B \ge 1$ and a constrained domain $\ww$ with bounded diameter $D$, and obtain a $\widetilde{\mathcal{O}}\Brb{\sqrt{D(D+P_T)(B\sqrt{G}H^2+GHB^2)T}}$ parameter-free dynamic regret bound.
By contrast, while our bound considers unconstrained domains, \Cref{thm:dynamicmemory} shows an improved dependence on memory lengths and Lipschitz constants.
For the fixed-memory setting where $b_t = B$ for all $t \in [T]$, the
$\Theta(B\sqrt{T})$ dependence on the memory length is known to be minimax
optimal for static regret; for instance, see \citet[Theorem 3.4]{kumar2023online}.
Our bound recovers this minimax optimal bound when the memory length $b_t$ is constant over time.
Moreover, the same memory dependence extends to dynamic regret: 
using a standard epoch-based construction---see, e.g., \citet[Theorem 2]{zhang2018adaptive}---one can 
extend the $B\sqrt{T}$ static-regret lower bound to 
a dynamic-regret lower bound of 
\(
    \Omega\bigl(B\sqrt{(1+P_T)T}\bigr)
\).
Thus, up to logarithmic and other
problem-dependent constant factors, the dependence of our dynamic regret guarantee on the memory
length is minimax optimal.

\section{Conclusion}\label{sec: conclusion}

In this paper, we provide new dynamic regret 
guarantees for several natural extensions of the 
OCO problem. We develop the first comparator-adaptive dynamic regret guarantees for OCO with time-varying movement costs, and we achieve a result which 
adapts to all problem-dependent quantities of interest.
This result also allows us to develop 
the first comparator-adaptive guarantees 
in settings with delayed feedback and 
settings with time-varying memory. 
All of our algorithms can be efficiently implemented in 
$\cO(d\Log{T})$ per-round computation when working in $\R^d$, matching the 
best-known complexity for obtaining $\sqrt{P_T}$ dependencies
in the vanilla OCO setting.

\section*{Acknowledgments}
EE and AJ acknowledge the EU Horizon CL4-2022-HUMAN-02 research and innovation action under grant agreement 101120237, project ELIAS (European Lighthouse of AI for Sustainability).
HQ gratefully acknowledges the support of Singapore NRF AI Visiting Professorship grant. This work was initiated while HQ was a PhD student at Università degli Studi di Milano.

\section*{Impact Statement}
This paper presents work whose goal is to advance the field of Machine Learning.
There are many potential societal consequences of our work, none which we feel must be specifically highlighted here.

\bibliography{refs}
\bibliographystyle{icml2026}

\renewcommand{\mbonly}[1]{}

\newpage
\appendix
\onecolumn

\section{Omitted Details from Section~\ref{sec: mc} (\SecPFMC)}
\label{app: move}

We begin by analyzing the dynamic regret of \Cref{alg:cmd} in the following proposition.
While this analysis follows a similar structure as the analogous one by \citet{jacobsen2022parameter}, we require additional care in handling movement costs with time-varying coefficients.
\begin{restatable}{proposition}{DynamicBase}\label{prop:dynamic-base}
Assume that $f_1,\dots,f_T$ are $G$-Lipschitz convex functions.
Then, for any comparator sequence $\cmp_{1},\dots,\cmp_{T} \in \ww$ and any $\lambda_2, \dots, \lambda_T \ge 0$, \Cref{alg:cmd} with $\eta \leq \frac{1}{G + \lambda_{\max}}$ and $\epsilon_0>0$ guarantees  
  \begin{align*}
    R_{T}(\cmp_{1:T}, \lambda_{1:T})
    &  \le \frac{2\norm{\cmp_{T}}\Log{\frac{\norm{\cmp_{T}}T}{\epsilon_0}+1}+2\sum_{t=2}^{T}\norm{\cmp_{t}-\cmp_{\tmm}}\Log{\frac{2\norm{\cmp_{t}-\cmp_{\tmm}}T^2}{\epsilon_0}+1}}{\eta}\notag
      \\&\quad + 2\eta \sumtT \left(\|g_t\|^2 + \lambda_{t+1}^2\right)\|u_t\|+\frac{1}{\eta T }\sumtT \|u_t\|+\epsilon_0 \left(G + \lambda_{\max}\right) \;,
  \end{align*}
\end{restatable}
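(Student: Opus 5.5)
We linearize and then run the composite mirror descent analysis of \citet{jacobsen2022parameter}, using the composite term $\varphi_t$ to pay for the movement cost. By convexity,
\[
R_T(\cmp_{1:T},\lambda_{1:T}) \le \sumtT \inner{\gt,\wt-\cmp_t} + \sumtT \lambda_{t+1}\norm{\wtpp-\wt},
\]
after re-indexing the movement sum, which is valid since $\lambda_1=0$, $\w_0=\w_1$, and $\lambda_{T+1}=0$. The first-order optimality condition of \Cref{eq:base-update-1} gives, for each $t$,
\[
\inner{\gt,\wtpp-\cmp_t}\le D_\psi(\cmp_t|\wt)-D_\psi(\cmp_t|\wtpp)-D_\psi(\wtpp|\wt)+\varphi_t(\cmp_t)-\varphi_t(\wtpp).
\]
Adding $\inner{\gt,\wt-\wtpp}+\lambda_{t+1}\norm{\wtpp-\wt}$ to both sides, the per-round quantity to control is
\[
\beta_t\norm{\wt-\wtpp}-D_\psi(\wtpp|\wt)-\varphi_t(\wtpp),
\]
with $\beta_t=\norm{\gt}+\lambda_{t+1}$. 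The movement cost thus enters exactly like an extra gradient of norm $\lambda_{t+1}$, which is why $\beta_t$ appears in $\varphi_t$.

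Next we bound this stability term. Following the lemma in \citet{jacobsen2022parameter}, $\psi$ is a radial function with $\psi''(x)=\frac{2}{\eta(x+\alpha)}$. Combining the local strong convexity of $\psi$ near $\wtpp$ with Fenchel--Young gives
\[
\beta_t\norm{\wt-\wtpp}-D_\psi(\wtpp|\wt)\lesssim \eta\beta_t^2(\norm{\wtpp}+\alpha).
\]
This step needs $\eta\beta_t\le 1$, guaranteed by $\eta\le 1/(G+\lambda_{\max})$, in order to control the ratio $(\norm{\wt}+\alpha)/(\norm{\wtpp}+\alpha)$. The term $-\varphi_t(\wtpp)=-(\eta\beta_t^2+\gamma)\norm{\wtpp}$ then cancels the $\norm{\wtpp}$ part up to a constant. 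The residual $\eta\beta_t^2\alpha$ sums to at most $\eta(G+\lambda_{\max})^2\epsilon_0\le\epsilon_0(G+\lambda_{\max})$. Getting the constants to match (the factor $2/\eta$ in $\psi$ versus $\eta\beta_t^2$ in $\varphi_t$) is the main obstacle. I would first try the explicit one-dimensional reduction along the direction of motion, integrating $\psi''$ between $\norm{\wtpp}$ and $\norm{\wt}$.

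We then telescope the Bregman terms with the changing comparator. Write
\[
D_\psi(\cmp_t|\wtpp)-D_\psi(\cmp_{t+1}|\wtpp)=\psi(\cmp_t)-\psi(\cmp_{t+1})+\inner{\nabla\psi(\wtpp),\cmp_{t+1}-\cmp_t}.
\]
Since $\psi(\cmp_1)-\psi(\cmp_T)$ also telescopes, the leftover terms are $D_\psi(\cmp_1|\w_1)=\psi(\cmp_1)$, which together with the $\psi$-telescope yields the $\psi(\cmp_T)$ term, and $\sum_t\norm{\nabla\psi(\wtpp)}\norm{\cmp_{t+1}-\cmp_t}$, with $\norm{\nabla\psi(w)}=\frac{2}{\eta}\log(\norm{w}/\alpha+1)$. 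The $\psi(\cmp_T)$ term gives the first summand, since $\psi(u)\le\frac{2}{\eta}\norm{u}\log(\norm{u}T/\epsilon_0+1)$.

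For the path terms, $\norm{\wtpp}$ is a priori unbounded. We use the $\gamma\norm{\wtpp}$ portion of $-\varphi_t(\wtpp)$ as a budget. Applying Fenchel--Young,
\[
\frac{2}{\eta}\norm{\cmp_{t+1}-\cmp_t}\log\Bigl(\frac{\norm{\wtpp}}{\alpha}+1\Bigr)\le \gamma\norm{\wtpp}+\frac{2}{\eta}\norm{\cmp_{t+1}-\cmp_t}\log\Bigl(\frac{2\norm{\cmp_{t+1}-\cmp_t}}{\eta\gamma\alpha}+1\Bigr)+\ldots,
\]
and with $\gamma=1/(\eta T)$ and $\alpha=\epsilon_0/T$ the argument of the logarithm becomes $2\norm{\cmp_{t+1}-\cmp_t}T^2/\epsilon_0$. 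The comparator side $\varphi_t(\cmp_t)$ contributes $\eta\beta_t^2\norm{\cmp_t}+\gamma\norm{\cmp_t}$. Using $\beta_t^2\le 2(\norm{\gt}^2+\lambda_{t+1}^2)$, this gives $2\eta\sum_t(\norm{\gt}^2+\lambda_{t+1}^2)\norm{\cmp_t}+\frac{1}{\eta T}\sum_t\norm{\cmp_t}$. Summing all pieces yields the stated bound.
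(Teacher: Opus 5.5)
Your skeleton is the paper's. You linearize, use the composite mirror-descent three-point inequality (deriving by hand the dynamic telescoping that the paper imports as Lemma~1 of \citet{jacobsen2022parameter}), and fold the re-indexed movement cost into $\beta_t=\|g_t\|+\lambda_{t+1}$. You then spend the $\gamma\|w_{t+1}\|$ part of $-\varphi_t(w_{t+1})$ on the path terms and collect $\psi(u_T)$ and $\varphi_t(u_t)$.

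The gap is the stability step. This is the only place where the algorithm's specific constants matter, and you leave it as $\beta_t\|w_t-w_{t+1}\|-D_\psi(w_{t+1}|w_t)\lesssim\eta\beta_t^2(\|w_{t+1}\|+\alpha)$, with the $\|w_{t+1}\|$ part cancelled ``up to a constant.'' That is not enough: you need the inequality with constant at most $1$. With any constant $C>1$, a residual $(C-1)\eta\beta_t^2\|w_{t+1}\|$ survives. In the unconstrained setting $\|w_{t+1}\|$ has no a priori bound, and nothing in the target bound can absorb a term proportional to it. The only other negative term, $-\gamma\|w_{t+1}\|$, is already committed to the path terms. The factor $2/\eta$ in $\psi$ and the coefficient $\eta\beta_t^2$ in $\varphi_t$ are fixed by the algorithm, so you cannot retune your way out; you must prove the sharp inequality, and you flag it as the main obstacle without doing so.

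Your suggested one-dimensional reduction would also fail as described. The displacement $w_{t+1}-w_t$ need not be radial; a rotation at constant norm changes nothing radially. You therefore need curvature in \emph{every} direction, namely $\nabla^2\psi(w)\succeq\frac{2}{\eta(\|w\|+\alpha)}I$. This uses the fact that the tangential curvature $\frac{2}{\eta\|w\|}\log(1+\|w\|/\alpha)$ dominates the radial one, $\frac{2}{\eta(\|w\|+\alpha)}$.

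The paper closes this step as follows. Let $x=\|w_t-w_{t+1}\|$.
\begin{itemize}
\item Taylor's theorem gives a point $\tilde w_t$ on the segment with $D_\psi(w_{t+1}|w_t)\ge x^2/(\eta(\|\tilde w_t\|+\alpha))$.
\item Instead of comparing $\|w_t\|$ with $\|w_{t+1}\|$, the paper moves the composite term to that intermediate point: $-\eta\beta_t^2\|w_{t+1}\|\le-\eta\beta_t^2\|\tilde w_t\|+\eta\beta_t^2x\le-\eta\beta_t^2\|\tilde w_t\|+\beta_t x$, using $\eta\beta_t\le 1$.
\item By AM-GM, $2\beta_t x-x^2/(\eta(\|\tilde w_t\|+\alpha))\le\eta\beta_t^2(\|\tilde w_t\|+\alpha)$. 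The $\|\tilde w_t\|$ terms then cancel exactly, leaving only $\eta\alpha\beta_t^2$.
\end{itemize}
The factor $2$ in $\psi$ is precisely what pays for doubling the linear coefficient.

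Your ratio idea can also be made exact. Every point of the segment has norm at most $\|w_{t+1}\|+x$, so with $c=\|w_{t+1}\|+\alpha$ you get $D_\psi(w_{t+1}|w_t)\ge x^2/(\eta(c+x))$. Maximizing directly gives $\sup_{x\ge 0}\{\beta_t x-x^2/(\eta(c+x))\}=\frac{c}{\eta}\bigl(1-\sqrt{1-\eta\beta_t}\bigr)^2\le\eta\beta_t^2 c$ whenever $\eta\beta_t\le 1$. The supremum is infinite otherwise, which is exactly where the step-size condition enters.

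Either version supplies the missing inequality, and the rest of your argument then goes through. The ``$+\ldots$'' in your path step should be empty, since $\sup_{x\ge0}\{\frac{d}{\eta}\log(\frac{x}{\alpha}+1)-\gamma x\}\le\frac{d}{\eta}\log(\frac{d}{\alpha\eta\gamma}+1)$ with no extra term.
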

\begin{proof}
  Using Lemma~1 in \citet{jacobsen2022parameter}, we have
  \begin{align}
    R_{T}(\cmp_{1:T},\lambda_{1:T})
    &\le
      \sumtT \inner{\gt,\wt-\cmp_{t}} +\sumtT \lambda_{t}\norm{\wt-\wtmm} \tag{convexity of $f_t$}
    \\&\le
      \psi(\cmp_{T}) + \sumtT \varphi_{t}(\cmp_{t})+\sum_{t=2}^{T}\inner{\grad\psi(\wt),\cmp_{t}-\cmp_{\tmm}}\notag\\
    &\quad
      + \sumtT \bigl(\inner{\gt,\wt-\wtpp}-D_{\psi}(\wtpp|\wt)-\varphi_{t}(\wtpp)\bigr)
      +\sum_{t=1}^T \lambda_{t}\norm{\wt-\wtmm}\tag{Lemma 1 in \citet{jacobsen2022parameter}}\\
    &\overset{(*)}{\le}
      \psi(\cmp_{T}) +\sumtT \varphi_{t}(\cmp_{t}) + \sum_{t=2}^{T}\inner{\grad\psi(\wt),\cmp_{t}-\cmp_{\tmm}}
      \notag\\
    &\quad
      + \sumtT \left(\brac{\norm{\gt}+\lambda_{t+1}}\norm{\wtpp-\wt}-D_{\psi}(\wtpp|\wt)-\eta\beta_{t}^{2}\norm{\wtpp}-\gamma\norm{\wtpp}\right)\\
    &\le
      \psi(\cmp_{T}) +\sumtT \varphi_{t}(\cmp_{t}) + \sum_{t=2}^{T}\underbrace{\left(\inner{\grad\psi(\wt),\cmp_{t}-\cmp_{\tmm}}-\gamma\norm{\wt}\right)}_{\eqdef\cP_{t}}
     \notag\\
    &\quad
      + \sumtT\underbrace{ \left(\beta_{t}\norm{\wtpp-\wt}-D_{\psi}(\wtpp|\wt)-\eta\beta_{t}^{2}\norm{\wtpp}\right)}_{\eqdef\delta_{t}} \;,
      \label{pro:regret_decomp}
  \end{align}
  where we define $\beta_t \triangleq \norm{\gt}+\lambda_{t+1}$, $(*)$ expands the definition of $\varphi_t$ and re-indexes $\sumtT\lambda_t\norm{\wt-\wtmm}=\sumtT \lambda_\tpp\norm{\wt-\wtpp}$ by recalling that we define $w_0=w_1$, so that $\sumtT \lambda_t\norm{\wt-\wtmm}=\sum_{t=2}^T\lambda_t\norm{\wt-\wtmm}=\sum_{t=1}^{T-1}\lambda_\tpp\norm{\wtpp-\wt}$ and recalling that we set $\lambda_{T+1}=0$ without loss of generality. The last inequality follows by dropping a non-positive term $-\gamma\|w_{T+1}\|$. 
  
  Next, we individually bound the two terms $\sum_{t=2}^T\calP_t$ and $\sum_{t=1}^T\delta_t$.
  We first focus on the sum $\sum_{t=2}^T\calP_t$.
  By definition of $\psi(w)=\frac{2}{\eta}\int_0^{\|w\|}\log(x/\alpha+1)dx$, we have that $\norm{\grad\psi(\wt)}=\frac{2}{\eta}\log(\norm{\wt}/\alpha+1)$, so
 \begin{align}
    \sum_{t=2}^T\calP_t &= \sum_{t=2}^{T} \left(\inner{\grad\psi(\wt),\cmp_{t}-\cmp_{\tmm}}-\gamma\norm{\wt}\right)\nonumber\\ 
    &\le
      \sum_{t=2}^{T} \left(\frac{2\Log{\norm{\wt}/\alpha+1}}{\eta}\norm{\cmp_{t}-\cmp_{\tmm}}-\gamma\norm{\wt}\right)\nonumber\\
      &\le
        \sum_{t=2}^{T}\frac{2\norm{\cmp_{t}-\cmp_{\tmm}}\Log{\frac{2\norm{\cmp_{t}-\cmp_{\tmm}}}{\alpha\eta\gamma}+1}}{\eta}~,\label{pro:psippP}
  \end{align}
  where the first inequality follows by applying the Cauchy-Schwarz inequality, and the second inequality is due to \pref{lem:tech} with $a=\alpha$, $b=\eta$, $c=\gamma$, and $d=2\|u_t-u_{t-1}\|$.    

  Next, consider the terms $\sumtT \delta_t$.
  By Taylor's theorem, there exists $\wtilde_t$ on the line segment between $\wt$ and $\wtpp$ such that
  \begin{align*}
    D_{\psi}(\wtpp|\wt)=\half\norm{\wt-\wtpp}^{2}_{\grad^{2}\psi(\wtilde_t)}\ge \norm{\wt-\wtpp}^{2}\frac{1}{(\norm{\wtilde_t}+\alpha)\eta},
  \end{align*}
  where $\|v\|_{\nabla^2\psi(w)}^2\triangleq \inner{v, \nabla^2\psi(w)v}$ and the inequality holds since 
  $\grad^2\psi(w)\succeq\frac{2}{\norm{w}+\alpha)\eta}I_n$ for any $\w\in\R^n$ following \citet[Proposition 2]{jacobsen2022parameter}.
  Hence, we obtain
  \begin{align}
    \sumtT \delta_{t}
    &\le
      \sumtT \bbrb{\beta_{t}\norm{\wt-\wtpp}-\norm{\wt-\wtpp}^2\frac{1}{(\norm{\wtilde_t}+\alpha)\eta} -\eta\beta_{t}^{2}\norm{\wtpp}}\notag\\
    &\le
      \sumtT \bbrb{\beta_{t}\norm{\wt-\wtpp}-\norm{\wt-\wtpp}^2\frac{1}{(\norm{\wtilde_t}+\alpha)\eta} -\eta\beta_{t}^{2}\norm{\wtilde_t}+\eta\beta_{t}^{2}\norm{\wtpp-\wtilde_t}}\tag{triangle inequality}\\
    &\le
      \sumtT \bbrb{\beta_{t}\norm{\wt-\wtpp}-\norm{\wt-\wtpp}^2\frac{1}{(\norm{\wtilde_t}+\alpha)\eta} -\eta\beta_{t}^{2}\norm{\wtilde_t}+\eta\beta_{t}^{2}\norm{\wtpp-w_t}}\tag{$\wtilde_t$ is on the line segment between $w_t$ and $w_{t+1}$}\\
    &\overset{(a)}{\le}
      \sumtT \bbrb{2\beta_{t}\norm{\wt-\wtpp}-\norm{\wt-\wtpp}^2\frac{1}{(\norm{\wtilde_t}+\alpha)\eta} -\eta\beta_{t}^{2}\norm{\wtilde_t}}\notag\\
    &\overset{(b)}{\le}
      \sumtT \bbrb{\eta\beta_t^{2}(\alpha+\norm{\wtilde_t})-\eta\beta_{t}^{2}\norm{\wtilde_t}}\notag\\
    &=
      \eta\alpha \sumtT \beta_{t}^{2} \;, \label{prop:delta}
  \end{align}
  where $(a)$ uses the fact that $\eta \le \frac{1}{G+\lambda_{\max}} \le \frac{1}{\beta_t}$ and $(b)$ uses AM-GM inequality.
  Plugging \Cref{pro:psippP,prop:delta} into \Cref{pro:regret_decomp} leads to
  \begin{align}
    R_{T}(\cmp_{1:T},\lambda_{1:T})
    &\le
     \psi(u_T)+\frac{2\sum_{t=2}^{T}\norm{\cmp_{t}-\cmp_{\tmm}}\log\brb{\frac{2\norm{\cmp_{t}-\cmp_{\tmm}}}{\alpha\eta\gamma}+1}}{\eta}
   +\sumtT \varphi_{t}(\cmp_{t})+\alpha\eta\sumtT \beta_{t}^{2} \notag\\
    &\leq    \frac{2\norm{\cmp_{T}}\log\brb{\frac{\norm{\cmp_{T}}}{\alpha}+1}+2\sum_{t=2}^{T}\norm{\cmp_{t}-\cmp_{\tmm}}\log\brb{\frac{2\norm{\cmp_{t}-\cmp_{\tmm}}}{\alpha\eta\gamma}+1}}{\eta} \notag
      \\&\quad+\sumtT \eta\beta_{t}^{2}\norm{\cmp_{t}}+\gamma\sumtT \norm{\cmp_{t}}+\eta\alpha\sumtT\beta_{t}^{2} \;,
      \label{eqn:prop_regret}
  \end{align}
  where the second inequality holds by the definition of $\varphi_{t}(\w)$ and $\psi(u)$ from \Cref{alg:cmd}. We proceed by bounding each of the terms in the last line.

By definition of $\beta_t = \norm{g_t} + \lambda_{t+1}$, we have that
\begin{equation}
\label{eqn:term1}
    \sum_{t=1}^T \eta\beta_{t}^{2}\norm{\cmp_{t}} = \sumtT \eta \left(\|g_t\| + \lambda_{t+1}\right)^2\|u_t\| \leq 2\sumtT \eta \left(\|g_t\|^2 + \lambda_{t+1}^2\right)\|u_t\| \;.
\end{equation}
  Next, by setting $\gamma = 1/(\eta T)$, we can bound the second summation as
  \begin{equation}
  \label{eqn:term2}
      \gamma\sumtT \norm{\cmp_{t}} = \frac{1}{\eta T }\sumtT \|u_t\| \le \frac{M}{\eta} \;.
  \end{equation}
Lastly, by the definition of  $\alpha = \epsilon_0/T$ and using the assumption that $\eta \leq \frac{1}{G+ \lambda_{\max}}$, we can bound the final summation above as
\begin{equation}
\label{eqn:term3}
    \eta\alpha \sumtT \beta_{t}^{2}
    \le \frac{\epsilon_0}{T \left(G+ \lambda_{\max}\right)}\sumtT (\norm{\gt}+\lambda_{t+1})^{2}
    \le \frac{\epsilon_0}{T \left(G+ \lambda_{\max}\right)} \cdot T\left(G + \lambda_{\max}\right)^2
    = \epsilon_0 (G+\lambda_{\max}) \;, 
\end{equation}
where the second inequality holds by the facts that $\|g_t\|\le G$, since the loss functions are $G$-Lipschitz, and $\lambda_{t+1} \leq \lambda_{\max}$, by definition of $\lambda_{\max}$. Plugging \Cref{eqn:term1,eqn:term2,eqn:term3} back to \Cref{eqn:prop_regret}, we conclude that
\begin{align*}
        R_{T}(\cmp_{1:T},\lambda_{1:T})
        &\leq
      \frac{2\norm{\cmp_{T}}\Log{\frac{\norm{\cmp_{T}}}{\alpha}+1}+2\sum_{t=2}^{T}\norm{\cmp_{t}-\cmp_{\tmm}}\Log{\frac{2\norm{\cmp_{t}-\cmp_{\tmm}}}{\alpha\eta\gamma}+1}}{\eta}\notag
      \\&\quad+2\sumtT \eta \left(\|g_t\|^2 + \lambda_{t+1}^2\right)\|u_t\|+\frac{1}{\eta T }\sumtT \|u_t\|+\epsilon_0 \left(G + \lambda_{\max}\right)
      \\&=\frac{2\norm{\cmp_{T}}\Log{\frac{\norm{\cmp_{T}}T}{\epsilon_0}+1}+2\sum_{t=2}^{T}\norm{\cmp_{t}-\cmp_{\tmm}}\Log{\frac{2\norm{\cmp_{t}-\cmp_{\tmm}}T^2}{\epsilon_0}+1}}{\eta}\notag
      \\&\quad+2\sumtT \eta \left(\|g_t\|^2 + \lambda_{t+1}^2\right)\|u_t\|+\frac{1}{\eta T }\sumtT \|u_t\|+\epsilon_0 \left(G + \lambda_{\max}\right) \;,
\end{align*}
where the equality holds by using $\alpha = \epsilon_0/T$ and $\alpha \eta \gamma = \epsilon_0/T^2$.
\end{proof}

At this point, we may leverage the regret guarantee of \Cref{alg:cmd} proved in \Cref{prop:dynamic-base} to demonstrate the regret bound for our parameter-free procedure (\Cref{alg:pf-mc}).
We restate our result (\Cref{thm:tuned-bound}) for completeness and provide a complete proof as follows. 
  
\TunedBound*
\begin{proof}
  Recall that \Cref{alg:pf-mc} denotes by $\wt^{\eta_i}$ the iterate chosen by the instance $\cA^{\eta_i}$ of the base algorithm (\Cref{alg:cmd} to be precise) with learning rate $\eta_i \in \cS$ at round $t \in [T]$.
  Further recall that the decision of \Cref{alg:pf-mc} at each round $t$ ultimately corresponds to $\wt = \sum_{\eta_i \in \cS} \wt^{\eta_i}$.
  
  For any $\eta_i\in \calS$, we decompose the regret as follows:
  \begin{align}
    R_{T}(\cmp_{1:T},\lambda_{1:T})
    &\le
      \sumtT \inner{\gt,\wt-\cmp_{t}} +\sumtT \lambda_{t}\norm{\wt-\wtmm} \tag{convexity of $f_t$} \\
    &=
      \sumtT\inner{\gt,\sum_{\eta_{j}\in\cS}\wt^{\eta_{j}}-\cmp_{t}}+\sumtT \lambda_{t}\left\|\sum_{\eta_{j}\in\cS}(\wt^{\eta_{j}}-\wtmm^{\eta_{j}})\right\|\nonumber\\
    &\le
      \sumtT \left(\inner{\gt, \sum_{\eta_{j}\in\cS}\wt^{\eta_j}-\cmp_{t}} + \lambda_{t}\sum_{\eta_{j}\in\cS}\norm{\wt^{\eta_{j}}-\wtmm^{\eta_{j}}}\right) \tag{triangle inequality} \\
    &=
      \underbrace{\sumtT \inner{\gt,\wt^{\eta_{i}}-\cmp_{t}}+\sumtT \lambda_{t}\norm{\wt^{\eta_{i}}-\wtmm^{\eta_{i}}}}_{\eqdef R_{T}^{(i)}(\cmp_{1:T},\lambda_{1:T})}+\sum_{j\ne i}\underbrace{\sumtT \inner{\gt, \wt^{\eta_{j}} - \zeros}+\sumtT\lambda_{t}\norm{\wt^{\eta_{j}}-\wtmm^{\eta_{j}}}}_{\eqdef R_{T}^{(j)}(\zeros,\lambda_{1:T})} \label{eqn:def_regi}\\
    &\le
      R_{T}^{(i)}(\cmp_{1:T},\lambda_{1:T})+(\abs{\cS}-1)\epsilon(G+\lambda_{\max}) \;,\nonumber
  \end{align}
  where the last inequality holds because each instance $\calA^{\eta_j}$ of \pref{alg:cmd} with $\epsilon_0=\epsilon$ and $\eta=\eta_j$ satisfies the conditions of \Cref{prop:dynamic-base} since $\eta_i\le \frac{1}{L}\le \frac{1}{G+\lambda_{\max}}$ for all $i\in[\abs{\cS}]$, hence
  each algorithm guarantees $R_{T}^{(j)}(\zeros,\lambda_{1:T}) \le \epsilon (G+\lambda_{\max})$ using \Cref{prop:dynamic-base}.
  Likewise,
  for any $\eta_{i}\in\cS$, applying the result of \Cref{prop:dynamic-base} again on the regret $R^{(i)}_T(\cmp_{1:T},\lambda_{1:T})$ of $\cA^{\eta_i}$ gives us
  \begin{align*}
    R_{T}(\cmp_{1:T},\lambda_{1:T})
    &\le
      \frac{2\norm{\cmp_{T}}\Log{\frac{\norm{\cmp_{T}}T}{\epsilon}+1}+2\sum_{t=2}^{T}\norm{\cmp_{t}-\cmp_{\tmm}}\Log{\frac{2\norm{\cmp_{t}-\cmp_{\tmm}}T^2}{\epsilon}+1}}{\eta_i}\notag
      \\&\quad+2\sumtT \eta_i \left(\|g_t\|^2 + \lambda_{t+1}^2\right)\|u_t\|+\frac{1}{\eta_i T }\sumtT \|u_t\|+ |\cS|\epsilon \left(G + \lambda_{\max}\right)
      \\&\le
      \frac{2M \left(\Log{\frac{MT}{\epsilon}+1}+1\right)+2\sum_{t=2}^{T}\norm{\cmp_{t}-\cmp_{\tmm}}\Log{\frac{2\norm{\cmp_{t}-\cmp_{\tmm}}T^2}{\epsilon}+1}}{\eta_i}\notag
      \\&\quad+2\sumtT \eta_i \left(\|g_t\|^2 + \lambda_{t+1}^2\right)\|u_t\|+|\calS|\epsilon \left(G + \lambda_{\max}\right) \;,
  \end{align*}
  where the second inequality uses $M = \max_t \|u_t\|$.
  Then, using \Cref{lemma:hparam-tuning}, we know that there exists an $\eta_i\in \calS$ such that 
  \begin{align*}
    R_{T}(\cmp_{1:T},\lambda_{1:T})
    &\le\mathcal{O} \Biggl( \sqrt{\Big[M\Big(\Log{\tfrac{MT}{\epsilon}+1} + 1\Big)+\sum_{t=2}^{T}\norm{\cmp_{t}-\cmp_{\tmm}}\Log{\tfrac{2\norm{\cmp_{t}-\cmp_{\tmm}}T^2}{\epsilon}+1}\Big]\sumtT \left(\norm{\gt}^2+\lambda_{t+1}^2\right)\|u_t\|}
    \\& \qquad+ \frac{M\left(\Log{\frac{MT}{\epsilon}+1} + 1\right)+\sum_{t=2}^{T}\norm{\cmp_{t}-\cmp_{\tmm}}\Log{\frac{2\norm{\cmp_{t}-\cmp_{\tmm}}T^2}{\epsilon}+1}}{\eta_{\max}} 
    \\& \qquad + \eta_{\min}\sumtT \left(\norm{\gt}^2+\lambda_{t+1}^2\right)\|u_t\|+ \abs{\cS}\epsilon(G+\lambda_{\max}) \Biggr) 
      \\& =  \mathcal{O} \left( \sqrt{\left(\widetilde M_T(\epsilon)+\widetilde P_T(\epsilon)\right)\sumtT \left(\norm{\gt}^2+\lambda_{t+1}^2\right)\|u_t\|}+\left(|\calS|\epsilon+\widetilde M_T(\epsilon)+\widetilde P_T(\epsilon)\right) L\right) 
  \end{align*}
  where $\widetilde M_T(\epsilon) = M\brb{1+\Log{\frac{MT}{\epsilon}+1}}$, $\widetilde P_T(\epsilon) = P_T\left(1+\Log{\frac{4MT^2}{\epsilon}+1}\right)$,
  and the last step holds by observing that $\abs{\cS}=\cO(\log T)$ and using $\|u_t-u_{t-1}\|\le\|u_t\|+\|u_{t-1}\|\leq 2M$, $\eta_{\max} = \frac{1}{L}$, and
  \begin{equation*}
    \eta_{\min} = \frac{1}{L\sqrt{T}}
    \le \sqrt{\frac{1}{\sumtT(\norm{g_t}^2 + \lambda_{t+1}^2)}}
    \le \sqrt{\frac{M}{\sumtT(\norm{g_t}^2 + \lambda_{t+1}^2)\norm{u_t}}}.
  \end{equation*}
\end{proof}

\section{Omitted Details from Section~\ref{sec:first-order} (\SecFirstOrder)}%
\label{app:first-order}

Here we present the details omitted from \Cref{sec:first-order}.
We begin by stating two useful lemmas that will be central to the regret analysis of \Cref{alg: dyn_para_meta}.
These lemmas establish key properties of \Cref{alg: dyn_para_meta} that will be used in the proof.
For ease of exposition, let $N$ be the number of blocks and define $I_{\tau} = \left\{t_{\tau}, \dots, t_{\tau+1}-1\right\}$ as the $\tau$-th epoch defined by the execution of \Cref{alg: dyn_para_meta}, for each $\tau \in [N]$.

The first result illustrates how to translate the dependence of the regret of \Cref{alg: dyn_para_meta} on the norm of the accumulated gradient $\tilde g_\tau$ over each epoch $\tau$ and the movement-cost scale $\tilde\lambda_{\tau+1}$ incurred by updating the decision at the end of it, into a dependence on the original gradients $g_t$ and movement coefficients $\lambda_t$ of rounds $t \in I_\tau$ in the same epoch.
\begin{lemma}
\label{gradient_decomposition}
    For every $\tau \in [N]$, \Cref{alg: dyn_para_meta} guarantees that
    \begin{equation*}
        \norm{\widetilde{g}_{\tau}}^2 + \widetilde{\lambda}_{\tau+1}^2
        \le \sum_{t\in I_\tau} \Brb{2\norm{g_{t}}^2 + 4\lambda_{t} \norm{g_t}} \;.
    \end{equation*}
\end{lemma}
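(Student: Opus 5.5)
The plan is a telescoping expansion of the squared norm of the accumulated gradient $\widetilde g_\tau$ over the rounds of the epoch $I_\tau$. At each round we use the fact that the update was \emph{not} triggered at the previous round, which controls the partial sum in terms of $\lambda_t$.

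\textbf{Step 1: dispose of the $\widetilde\lambda_{\tau+1}^2$ term.} Fix an epoch $I_\tau=\{t_\tau,\dots,t_{\tau+1}-1\}$ and let $t'=t_{\tau+1}-1$ be its last round. By construction of \Cref{alg: dyn_para_meta}, the epoch closes exactly because $\norm{H_\tau}>\lambda_{t'+1}$ at round $t'$. Moreover $\widetilde g_\tau=H_\tau=\sum_{t\in I_\tau} g_t$ and $\widetilde\lambda_{\tau+1}=\lambda_{t'+1}$. Hence $\widetilde\lambda_{\tau+1}^2<\norm{\widetilde g_\tau}^2$, and the left-hand side is at most $2\norm{\widetilde g_\tau}^2$. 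It therefore suffices to show
\begin{equation*}
\norm{\widetilde g_\tau}^2\le \sum_{t\in I_\tau}\bigl(\norm{g_t}^2+2\lambda_t\norm{g_t}\bigr).
\end{equation*}
If the final epoch ends at $T$ without triggering, we interpret $\widetilde\lambda_{\tau+1}=\lambda_{T+1}=0$, and the same reduction holds trivially.

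\textbf{Step 2: telescope the squared partial sums.} For $t\in I_\tau$, let $S_t=\sum_{s=t_\tau}^{t} g_s$, with $S_{t_\tau-1}=\mathbf 0$. Expanding the square gives
\begin{equation*}
\norm{S_t}^2=\norm{S_{t-1}}^2+2\inner{S_{t-1},g_t}+\norm{g_t}^2\le \norm{S_{t-1}}^2+2\norm{S_{t-1}}\norm{g_t}+\norm{g_t}^2
\end{equation*}
by Cauchy--Schwarz. Summing over $t\in I_\tau$ yields
\begin{equation*}
\norm{\widetilde g_\tau}^2\le \sum_{t\in I_\tau}\bigl(\norm{g_t}^2+2\norm{S_{t-1}}\norm{g_t}\bigr).
\end{equation*}

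\textbf{Step 3: bound the partial sums by the non-triggering condition.} For $t=t_\tau$ we have $S_{t-1}=\mathbf 0$. For $t>t_\tau$ in the epoch, round $t-1$ also belongs to $I_\tau$ and did not trigger an update. So at round $t-1$ the buffer satisfies $\norm{S_{t-1}}=\norm{H_\tau}\le\lambda_{(t-1)+1}=\lambda_t$. Substituting $\norm{S_{t-1}}\le\lambda_t$ into Step 2 gives the inequality claimed in Step 1. Multiplying by $2$ then proves the lemma.

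\textbf{Main obstacle.} The only delicate point is the index bookkeeping. The non-trigger test at round $t-1$ compares against $\lambda_t$, which is exactly the coefficient appearing in the target bound. The trigger test at the last round, which compares against $\lambda_{t'+1}$, is used only to absorb $\widetilde\lambda_{\tau+1}$. Once these two roles of the trigger condition are kept separate, the argument is routine.
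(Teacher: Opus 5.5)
Your proof is correct and is essentially the paper's argument. Your telescoped recursion is the paper's expansion of $\norm{\sum_{t\in I_\tau} g_t}^2$ into $\sum_{t\in I_\tau}\norm{g_t}^2 + 2\sum_{t\in I_\tau}\inner{\sum_{s=t_\tau}^{t-1} g_s, g_t}$: in both, the partial sums are bounded by $\lambda_t$ via the non-trigger condition at round $t-1$, and $\widetilde\lambda_{\tau+1}^2$ is absorbed using the trigger condition, with $\widetilde\lambda_{N+1}=0$ for an unclosed final epoch.
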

\begin{proof}
    Fix any $\tau \in [N]$.
    We first focus on bounding $\widetilde{\lambda}_{\tau+1}^2$. 
    According to the updates in \Cref{alg: dyn_para_meta}, we know that $\widetilde\lambda_{\tau+1} = \lambda_{t_{\tau+1}} < \|\widetilde g_{\tau}\| = \norm{\sum_{t \in I_\tau} g_t}$ and thus
    \begin{equation} \label{eq:tilde-lambda-bound}
        \widetilde\lambda_{\tau+1}^2
        \le \norm*{\sum_{t\in I_\tau} g_t}^2
        = \Bbrb{\sum_{t \in I_\tau} \norm{g_t}^2 + 2\sum_{s,t\in I_\tau: s<t} \inprod{g_s,g_t}} \;.
    \end{equation}
    Notice that this trivially holds for $\tau=N$: we are free to define $\tilde\lambda_{N+1}=0$ without loss 
    of generality in the analysis, so we have $\tilde\lambda_{N+1}=0\le \norm{\sum_{t\in I_N}\gt}$ in the first step. 
    Now notice that the second term in the right-hand side of \Cref{eq:tilde-lambda-bound} satisfies:
    \begin{equation} \label{eq:cross-product-norms-bound}
        \sum_{s,t\in I_\tau: s<t} \inprod{g_s,g_t}
        = \sum_{t\in I_\tau} \inprod*{\sum_{s=t_\tau}^{t-1} g_s, g_t}\leq \sum_{t\in I_\tau} \norm*{\sum_{s=t_\tau}^{t-1} g_s}\norm{g_t}
        \leq\sum_{t \in I_\tau} \lambda_t\norm{g_t} \;,
    \end{equation}
    where the second inequality holds due to the fact that $\|\sum_{s=t_\tau}^{t-1} g_s\| \le \lambda_t$ when $t \in\{t_{\tau}+1,\dots,t_{\tau+1}-1\}$.
    Hence, we finally obtain that
    \begin{align}\label{eq:tilde-lambda}
        \widetilde\lambda_{\tau+1}^2 \le \sum_{t \in I_\tau} \Brb{\norm{g_t}^2 + 2\lambda_t\norm{g_t}} \;.
    \end{align}
    Next we consider bounding the term $ \norm{\widetilde{g}_{\tau}}^2$.
    We similarly have that
    \begin{align}\label{eq:tilde-g-square}
        \norm{\widetilde{g}_{\tau}}^2
        = \norm*{\sum_{t \in I_{\tau}} g_t}^2
        \le \sum_{t \in I_\tau} \Brb{\norm{g_t}^2 + 2\lambda_t\norm{g_t}} \;,
    \end{align}
    where the inequality follows by the same reasoning as in \Cref{eq:tilde-lambda-bound,eq:cross-product-norms-bound}. Combining \Cref{eq:tilde-g-square} and \Cref{eq:tilde-lambda} finishes the proof.
\end{proof}

The second useful result presented in this section shows how to move from the original regret $R_T(u_{1:T},\lambda_{1:T})$ to a linearized regret defined over epochs instead of rounds.
\begin{lemma}
\label{cor:regret_decomposition}
    Assume that $f_1,\ldots, f_T$ are $G$-Lipschitz convex functions.
    For any comparator sequence $(u_t)_{t\in[T]} \in \calW^T$, \Cref{alg: dyn_para_meta} with any base algorithm $\calA$ guarantees that 
    \begin{equation*}
          R_T(\cmp_{1:T},\lambda_{1:T})\leq\brb{G + 2\lambda_{\max}} P_T+ \underbrace{\sum_{\tau=1}^N \inprod{\widetilde{g}_\tau, \widetilde w_\tau - \widetilde u_\tau}+\tilde\lambda_{\tau}\norm{\wtilde_\tau-\wtilde_{\tau-1}}}_{\tilde R_N(\tilde\cmp_{1:N}, \tilde\lambda_{1:N})}
    \end{equation*}
   where, for each epoch $\tau \in [N]$: $\widetilde{u}_\tau \in \{u_t : t \in I_\tau\}$, $\widetilde{w}_\tau$ denotes the action played by $\calA$, $\tilde{\lambda}_\tau = \lambda_{t_\tau}$ represents the movement cost at the start of the epoch, and $\widetilde{g}_\tau \defeq \sum_{t \in I_\tau} g_t$.
\end{lemma}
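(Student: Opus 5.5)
The plan is to start from convexity, $f_t(w_t)-f_t(u_t)\le\inprod{g_t,w_t-u_t}$, and exploit the fact that \Cref{alg: dyn_para_meta} plays the constant action $\widetilde w_\tau$ on every round $t\in I_\tau$. The consequences for the movement costs are as follows. Inside an epoch the learner does not move, so $\lambda_t\norm{w_t-w_{t-1}}=0$ for $t\in I_\tau\setminus\{t_\tau\}$. At the first round $t_\tau$ of each epoch the cost equals $\lambda_{t_\tau}\norm{\widetilde w_\tau-\widetilde w_{\tau-1}}=\widetilde\lambda_\tau\norm{\widetilde w_\tau-\widetilde w_{\tau-1}}$. (For $\tau=1$ this term is $0$, since $w_0=w_1$.) Summing over rounds therefore gives exactly $\sum_{\tau}\widetilde\lambda_\tau\norm{\widetilde w_\tau-\widetilde w_{\tau-1}}$, which is the movement part of $\widetilde R_N$. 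This step is pure bookkeeping.

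For the linear part, fix for each epoch a representative $\widetilde u_\tau=u_{s_\tau}$ with $s_\tau\in I_\tau$; for concreteness take $s_\tau=t_\tau$, although any choice works. Then
\[
\sum_{t\in I_\tau}\inprod{g_t,\widetilde w_\tau-u_t}=\inprod{\widetilde g_\tau,\widetilde w_\tau-\widetilde u_\tau}+\sum_{t\in I_\tau}\inprod{g_t,\widetilde u_\tau-u_t}.
\]
By Cauchy--Schwarz and $\norm{g_t}\le G$, the error term is at most $G\sum_{t\in I_\tau}\norm{u_t-u_{s_\tau}}$. Bounding this by $G$ times the number of rounds in the epoch times the local path length would be too crude. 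The cleaner route is to pair the gradients with the buffer condition. Write $\sum_{t\in I_\tau}\inprod{g_t,\widetilde u_\tau-u_t}$ by Abel summation in terms of the partial sums $S_t=\sum_{s=t_\tau}^{t}g_s$ and the increments $u_{t+1}-u_t$. Every partial sum except the final one satisfies $\norm{S_t}\le\lambda_{t+1}\le\lambda_{\max}$, because the update was not triggered before the epoch's last round. The final partial sum, $\widetilde g_\tau$, is multiplied by $u_{\text{last}}-\widetilde u_\tau$. Its norm is at most $\norm{S_{\text{last}-1}}+\norm{g_{\text{last}}}\le\lambda_{\max}+G$. Hence the error in epoch $\tau$ is at most $(G+2\lambda_{\max})\sum_{t\in I_\tau,\,t>t_\tau}\norm{u_t-u_{t-1}}$, with constants to be tuned.

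In the same way, the comparator's own cross-epoch path does not enter $\widetilde R_N$ at all. Summing the per-epoch errors over $\tau$ gives a total of at most $(G+2\lambda_{\max})P_T$.

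The main obstacle is the Abel-summation step. The partial sums must be indexed so that only the last, triggering partial sum can exceed $\lambda$, and that sum must be split into $S_{\text{last}-1}+g_{\text{last}}$. This split is what produces the constant $G+2\lambda_{\max}$ rather than a factor scaling with the epoch length. If the indexing gives $\lambda_{\max}$ twice, once from the intermediate partial sums and once from the final split, the claimed constant comes out exactly. I would first check the single-epoch, two-round case to confirm the constants.
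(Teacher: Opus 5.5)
Your proposal is correct and follows essentially the same route as the paper. The paper first splits $\widetilde u_\tau - u_t = (\widetilde u_\tau - u_{t_{\tau+1}-1}) + (u_{t_{\tau+1}-1}-u_t)$ and applies summation by parts to the second piece, which produces exactly the two terms your direct Abel summation gives: the intermediate partial sums are bounded by $\lambda_s$ via the non-triggering condition, and $\widetilde g_\tau$ is bounded by $G+\lambda_{\max}$. The constant then comes out exactly as $G+2\lambda_{\max}$ for any choice of $\widetilde u_\tau\in\{u_t:t\in I_\tau\}$, so no further tuning or two-round sanity check is needed.
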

\begin{proof}
By convexity of $f_1, \dots, f_T$ and update rules in \Cref{alg: dyn_para_meta}, we know that 
\begin{align}
    R_T(\cmp_{1:T},\lambda_{1:T})
    &=\sum_{t=1}^T \brb{f_t(w_t) - f_t(u_t)} + \lambda_t\norm{\wt-\wtmm}\\
    &\le \sum_{t=1}^T \inprod{g_t, w_t - u_t}+\lambda_t\norm{\wt-\wtmm} \tag{convexity of $f_t$}\\
    &= \sum_{t=1}^T \inprod{g_t, w_t - u_t}+\sum_{\tau=1}^N\tilde\lambda_\tau\norm{\wtilde_\tau-\wtilde_{\tau-1}}\;,\label{eq:regret-decomp-1}
\end{align}
where the last line follows from the fact that $\wt=\wtmm$ except when transitioning between epochs, and defining 
$\wtilde_0=\wtilde_1$ and $\tilde\lambda_1=0$ for notational convenience.
Focusing on the first summation, and letting $\tilde{\cmp}_\tau$ be an arbitrary comparator from $\Set{\cmp_t:t\in I_\tau}$, we have
\begin{align}
\sumtT \inner{\gt,\wt-\cmp_t}
    &= \sum_{\tau=1}^N \sum_{t \in I_\tau} \inprod{g_t, w_t - \widetilde u_\tau} + \sum_{\tau=1}^N \sum_{t \in I_\tau} \inprod{g_t, \widetilde u_\tau - u_t} \nonumber\\
    &= \sum_{\tau=1}^N \sum_{t \in I_\tau} \inprod{g_t, \widetilde w_\tau - \widetilde u_\tau} + \sum_{\tau=1}^N \sum_{t \in I_\tau} \inprod{g_t, \widetilde u_\tau - u_t} \nonumber\\
    &= \sum_{\tau=1}^N \inprod{\widetilde{g}_\tau, \widetilde w_\tau - \widetilde u_\tau} + \sum_{\tau=1}^N \sum_{t \in I_\tau} \inprod{g_t, \widetilde u_\tau - u_t}\;,\label{eqn: mincomparator1}
\end{align}
where the second equality holds by choice of $w_t=\tilde w_\tau$ when $t\in I_{\tau}$, and 
the third equality
is due to the definition of $\widetilde{g}_\tau$. 
For each $\tau \in [N]$, we recall that $u_{t_{\tau+1} - 1}$ is the last comparator of epoch $\tau$ by definition of $t_{\tau+1}$.
Then observe that
\begin{align}
\sum_{\tau=1}^N \sum_{t\in I_\tau} \bigl\langle g_t, u_{t_{\tau+1}-1}-u_t \bigr\rangle
&= \sum_{\tau=1}^N \sum_{t\in I_\tau} 
   \Bigl\langle g_t, \sum_{s\in I_\tau: s>t} (u_s-u_{s-1}) \Bigr\rangle \notag\\
&= \sum_{\tau=1}^N \sum_{s\in I_\tau: s>t_\tau}
   \Bigl\langle \sum_{t=t_\tau}^{s-1} g_t, u_s-u_{s-1} \Bigr\rangle \notag\\
&\le \sum_{\tau=1}^N \sum_{s\in I_\tau: s>t_\tau}
   \norm*{\sum_{t=t_\tau}^{s-1} g_t} \|u_s-u_{s-1}\| \notag\\
&\le \sum_{\tau=1}^N \sum_{s\in I_\tau: s>t_\tau}
   \lambda_s \|u_s-u_{s-1}\| \notag\\
&\le \sum_{t=2}^T \lambda_t \|u_t-u_{t-1}\| \;,
\label{eqn:lastcomparator}
\end{align}
where in the second inequality, we use that, for any $s\in I_\tau$ with $s>t_\tau$, the epoch $\tau$ has not
terminated before time $s-1$, and hence $\bigl\|\sum_{t=t_\tau}^{s-1} g_t\bigr\|\le \lambda_s$ due to how epochs are determined by \Cref{alg: dyn_para_meta}. Using this, we can bound $\sum_{\tau=1}^N \sum_{t \in I_\tau} \inprod{g_t, \widetilde u_\tau - u_t}$ as follows:
\begin{align}
    \sum_{\tau=1}^N \sum_{t \in I_\tau} \inprod{g_t, \widetilde u_\tau - u_t}
    &= \sum_{\tau=1}^N \inprod*{\sum_{t \in I_\tau} g_t, \widetilde u_\tau - u_{t_{\tau+1}-1}} + \sum_{\tau=1}^N \sum_{t \in I_\tau} \inprod{g_t, u_{t_{\tau+1}-1} - u_t} \notag
    \\
    &\le \sum_{\tau=1}^N \inprod*{\sum_{t \in I_\tau} g_t, \widetilde u_\tau - u_{t_{\tau+1}-1}} + \sum_{t=2}^T \lambda_t \norm{u_t - u_{t-1}}  \tag{\Cref{eqn:lastcomparator}}
    \notag\\
    &\le \sum_{\tau=1}^N \norm*{\sum_{t \in I_\tau} g_t} \cdot \norm{\widetilde u_\tau - u_{t_{\tau+1}-1}} + \lambda_{\max} P_T \notag\\
    &= \sum_{\tau=1}^N \|\widetilde g_{\tau}\|\cdot \norm{\widetilde u_\tau - u_{t_{\tau+1}-1}} + \lambda_{\max} P_T \;.
    \label{eqn: mincomparator2}
\end{align}
Observe that, by the triangle inequality,
\begin{equation*}
    \|\widetilde g_{\tau}\|
    \le \norm*{\sum_{t \in I_\tau} g_t - g_{t_{\tau+1}-1}} + G
    \le \max_{t\in I_\tau}\lambda_{t} + G \;,
\end{equation*}
where the last step follows from the same observation on how epoch $I_\tau$ is defined by the execution of \Cref{alg: dyn_para_meta}.
Plugging the above inequality into \Cref{eqn: mincomparator2} yields
\begin{align*}
    \sum_{\tau=1}^N \sum_{t \in I_\tau} \inprod{g_t, \widetilde u_\tau - u_t}
    &\le \sum_{\tau=1}^N \|\widetilde g_{\tau}\| \cdot \norm{\widetilde u_\tau - u_{t_{\tau+1}-1}} + \lambda_{\max}P_T \\
    &\le \sum_{\tau=1}^N \Brb{G + \max_{t \in I_\tau} \lambda_t} \norm{\widetilde u_\tau - u_{t_{\tau+1}-1}} + \lambda_{\max}P_T \\ 
    &\le \Brb{G + \max_{t \in [T]} \lambda_t} \sum_{\tau=1}^N \norm{\widetilde u_\tau - u_{t_{\tau+1}-1}} + \lambda_{\max}P_T \\
    &\le \Brb{G + 2\lambda_{\max}} P_T \;,
\end{align*}
where the last inequality uses the fact that $\tilde u_\tau\in\{u_t:t\in I_\tau\}$.
Combining the above with \Cref{eqn: mincomparator1} and plugging back into the full regret bound in \Cref{eq:regret-decomp-1} concludes the proof.
\end{proof}

We now have all the ingredients to prove the main result of \Cref{sec:first-order}, stated in \Cref{thm:tuned-first-order}.
For completeness, we restate the theorem and provide its proof in what follows.
\TunedFirstOrder*
\begin{proof}
  Using \Cref{cor:regret_decomposition}, we have for all $(u_t)_{t\in[T]}$ and $(\tilde u_\tau)_{\tau\in[N]}$ where $u_\tau\in\{u_t:t\in I_\tau\}$,
  \begin{align}
    R_{T}(\cmp_{1:T},\lambda_{1:T})
    &\le
      \sum_{\tau=1}^N \inprod{\widetilde{g}_\tau, \widetilde w_\tau - \widetilde u_\tau} + \sum_{t=1}^T \lambda_t\norm{w_t-w_{t-1}} + \brb{G + 2\lambda_{\max}} P_T \label{eqn:regret_decomposition}\\
    &=
      \underbrace{\sum_{\tau=1}^N \inprod{\widetilde{g}_\tau, \widetilde w_\tau - \widetilde u_\tau} + \sum_{\tau=1}^N \widetilde\lambda_\tau\norm{\widetilde w_\tau - \widetilde w_{\tau-1}}}_{\eqdef \widetilde R_{N}(\widetilde\cmp_{1:N},\widetilde\lambda_{1:N})} + \brb{G + 2\lambda_{\max}} P_T \;.\nonumber
  \end{align}
  Note that for every $\tau \in [N]$, $\widetilde\lambda_{\tau+1} \le \lambda_{\max}$ and
  \begin{equation*}
    \norm{\widetilde g_{\tau}}
    \le \norm*{\sum_{t \in I_\tau} g_t - g_{t_{\tau+1}-1}} + \norm{g_{t_{\tau+1}-1}}
    \le \max_{t \in I_\tau} \lambda_t + G
    \le \lambda_{\max} + G
  \end{equation*}
  according to the update of \Cref{alg: dyn_para_meta}, where the first step follows by triangle inequality.
  Applying \Cref{thm:tuned-bound} with
  $L = G+ 2\lambda_{\max} \ge \max_\tau \brb{\norm{\widetilde g_\tau} + \widetilde\lambda_{\tau+1}}$
  we can immediately show that\footnote{Note that in the analysis of \Cref{thm:tuned-bound}, the last epoch 
  may not have actually closed at time $T$, but it is easily seen that we may take the last epoch to be $I_N=\Set{t_N, \ldots,T}$, which still satisfies $\tilde\lambda_{N+1}\le \norm{\sum_{t\in I_N}\gt}\le \tilde\lambda_{N}+G$
  since we may take $\tilde\lambda_{N+1}=0$ without loss of generality in the analysis. These details could be made explicit by adding a
  condition which closes the epoch and sets $\tilde\lambda_{N+1}=0$ on round $T$, though we opt for the more concise 
  algorithm description for ease of exposition.
  }
  \begin{align}
      \widetilde R_{N}(\widetilde\cmp_{1:N},\widetilde\lambda_{1:N})
    \!&= \!\mathcal{O}\Bigg(\! \sqrt{\!\left(\widetilde M_N(\epsilon)+\widetilde P_N(\epsilon)\right)\!\sum_{\tau=1}^N\left(\norm{\widetilde g_{\tau}}^2+\widetilde \lambda_{\tau+1}^2\right)\|\widetilde u_{\tau}\|}\!+\!\Big(|\calS|\epsilon\!+\!\widetilde M_N(\epsilon)+\widetilde P_N(\epsilon)\Big) \left(G+\lambda_{\max}\right)\!\Bigg) ,\label{eqn:regret_decomposition-2}
  \end{align}
  where
  \begin{equation} \label{eq:Mtilde-bound}
    \tilde M_N(\epsilon) =
    M'\lrb{\log\lrb{\frac{M'T}{\epsilon}+1} + 1}
    \le \tilde M_T(\epsilon)
  \end{equation}
  with $M' \defeq \max_{\tau \in [N]} \norm{\tilde u_\tau} \le M$, and
  \begin{align} 
    \tilde P_N(\epsilon)
    &= \sum_{\tau=2}^N \norm{\tilde u_\tau - \tilde u_{\tau-1}} \left(1+\log\lrb{\frac{4MT^2}{\epsilon}+1}\right)\nonumber\\
    &\le \sum_{t=2}^T\norm{\cmp_t-\cmp_\tmm}\left(1+\Log{\frac{4MT^2}{\epsilon}+1}\right)=\tilde P_T(\epsilon),\label{eq:Ptilde-bound}
  \end{align}
  via triangle inequality and the fact that $(\tilde u_\tau)_{\tau=1}^N$ defines a subsequence of the original comparator sequence $(u_t)_{t=1}^T$.\footnote{Note that the grid of learning rates in \Cref{alg:pf-mc}
  should still be defined in terms of $T\ge N$ since $N$ is unknown \textit{a priori}, so the terms 
  $\tilde P_N(\epsilon)$ and $\tilde M_N(\epsilon)$ naturally exhibit logarithmic penalties which depend on $T$ rather than $N$ in \Cref{eq:Ptilde-bound,eq:Mtilde-bound}}
  Moreover, applying \Cref{gradient_decomposition} and picking $\tilde\cmp_{\tau} \in \argmin_{u \in \{u_t : t \in I_\tau\}} \norm{u}$ for each $\tau \in [N]$, we have that 
  \begin{equation}
  \label{eqn: gradient_decomposition}
      \sum_{\tau=1}^{N} (\norm{\widetilde g_{\tau}}^{2}+\widetilde\lambda_{\tau+1}^{2})\norm{\widetilde\cmp_{\tau}}
      \le \sum_{\tau=1}^{N} \norm{\widetilde\cmp_{\tau}} \sum_{t\in I_\tau} \Brb{2\norm{g_{t}}^2 + 4\lambda_{t} \norm{g_t}}
      \le \sum_{t=1}^T \Brb{2\norm{g_{t}}^2 + 4\lambda_{t} \norm{g_t}} \|u_t\| \;.
  \end{equation}
  Combining \Cref{eqn:regret_decomposition,eqn:regret_decomposition-2,eq:Mtilde-bound,eq:Ptilde-bound,eqn: gradient_decomposition} and recalling that $|\cS| = \cO(\log T)$ finish the proof.  
\end{proof}

The following proposition compares the first-order regret bound with the second-order bound. 
It shows that the first-order guarantee proven in \Cref{thm:tuned-first-order} is never worse than \Cref{thm:tuned-bound}, yet can be much sharper in regimes where the gradients are small.

\begin{proposition}\label{prop:comparison}
\label{lem:bound_compare}
The regret bound shown in \pref{thm:tuned-first-order} is no worse than that in \pref{thm:tuned-bound}, up to constant factors.
\end{proposition}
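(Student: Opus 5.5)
The plan is to compare the two bounds term by term. Both have the same lower-order part $(\epsilon\log T+\widetilde M_T(\epsilon)+\widetilde P_T(\epsilon))L$, up to the admissible choice of $L$. \Cref{thm:tuned-first-order} requires $L\ge G+2\lambda_{\max}$ while \Cref{thm:tuned-bound} requires $L\ge G+\lambda_{\max}$. These differ by at most a factor $2$, so taking $L=G+2\lambda_{\max}$ in both shows the lower-order parts agree up to constants. It therefore suffices to show that the square-root term of \Cref{thm:tuned-first-order} is at most a constant times the full bound of \Cref{thm:tuned-bound}.

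First, apply AM-GM to each summand: $\lambda_t\norm{g_t}\le \tfrac12(\norm{g_t}^2+\lambda_t^2)$. This gives
\[
\sumtT\brb{\norm{g_t}^2+\lambda_t\norm{g_t}}\norm{u_t}\le \tfrac32\sumtT\brb{\norm{g_t}^2+\lambda_t^2}\norm{u_t}.
\]

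The remaining issue is the index shift, from $\lambda_t^2$ to $\lambda_{t+1}^2$. Re-indexing, and recalling $\lambda_1=0$, we get $\sumtT\lambda_t^2\norm{u_t}=\sum_{t=1}^{T-1}\lambda_{t+1}^2\norm{u_{t+1}}$. We then bound $\norm{u_{t+1}}\le\norm{u_t}+\norm{u_{t+1}-u_t}$, which yields
\[
\sumtT\lambda_t^2\norm{u_t}\le\sumtT\lambda_{t+1}^2\norm{u_t}+\lambda_{\max}^2P_T.
\]
Writing $A=\widetilde M_T(\epsilon)+\widetilde P_T(\epsilon)$ and using $\sqrt{x+y}\le\sqrt x+\sqrt y$, the extra contribution inside the square root is at most $\sqrt{A\lambda_{\max}^2P_T}\le\lambda_{\max}\sqrt{A P_T}$. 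Since $P_T\le\widetilde P_T(\epsilon)\le A$, this is at most $\lambda_{\max}A\le LA$, which is already dominated by the lower-order term $A L$ present in \Cref{thm:tuned-bound}.

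Combining the steps:
\[
\sqrt{A\sumtT(\norm{g_t}^2+\lambda_t\norm{g_t})\norm{u_t}}\le\sqrt{\tfrac32}\Bigl(\sqrt{A\sumtT(\norm{g_t}^2+\lambda_{t+1}^2)\norm{u_t}}+LA\Bigr).
\]
Hence the bound of \Cref{thm:tuned-first-order} is $\cO$ of the bound of \Cref{thm:tuned-bound}. The only mildly delicate step is the index shift. It requires the lower-order term $AL$ to absorb $\lambda_{\max}\sqrt{AP_T}$, which holds because $\widetilde P_T(\epsilon)\ge P_T$ and $L\ge\lambda_{\max}$.
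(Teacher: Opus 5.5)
Your proposal is correct and follows essentially the same argument as the paper. Both proofs use AM-GM to pass from $\lambda_t\norm{g_t}$ to $\lambda_t^2$, then shift the index via the triangle inequality on comparator norms, and absorb the resulting $\lambda_{\max}^2 P_T$ term into $L\bigl(\widetilde M_T(\epsilon)+\widetilde P_T(\epsilon)\bigr)$ (the paper carries $\lambda_{\max}^2(M+P_T)$ instead). Your version, which re-indexes first using $\lambda_1=0$, avoids the superfluous $M$ term, and you also state explicitly the factor-$2$ gap between the admissible ranges of $L$, which the paper glosses over.
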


\begin{proof}
We compare the following two regret bounds. 
By \pref{thm:tuned-bound},
    \begin{align*}
    R_{T}(\cmp_{1:T},\lambda_{1:T})
    =
    \mathcal{O}\Bigg(
    &\bigl(\epsilon\log T+\widetilde M_T(\epsilon)+\widetilde P_T(\epsilon)\bigr) L +
    \sqrt{
    \bigl(\widetilde M_T(\epsilon)+\widetilde P_T(\epsilon)\bigr)
    \sum_{t=1}^T
    \left(\norm{\gt}^2+\lambda_{t+1}^2\right)\norm{u_t}
    }
    \Bigg),
    \end{align*}
    whereas by \pref{thm:tuned-first-order},
    \begin{align*}
    R_{T}(\cmp_{1:T},\lambda_{1:T})
    =
    \mathcal{O}\Bigg(
    &\bigl(\epsilon\log T+\widetilde M_T(\epsilon)+\widetilde P_T(\epsilon)\bigr) L +
    \sqrt{
    \bigl(\widetilde M_T(\epsilon)+\widetilde P_T(\epsilon)\bigr)
    \sum_{t=1}^T
    \left(\norm{\gt}^2+\lambda_t\norm{\gt}\right)\norm{u_t}
    }
    \Bigg).
    \end{align*}
    The first terms in the two bounds are identical. Hence it suffices to
    compare the second terms. By AM-GM inequality, we have
    \[
        \lambda_t \norm{\gt}
        \le
        \frac{1}{2}\lambda_t^2+\frac{1}{2}\norm{\gt}^2 .
    \]
    Therefore,
    \begin{align*}
    \sqrt{
    \bigl(\widetilde M_T(\epsilon)+\widetilde P_T(\epsilon)\bigr)
    \sum_{t=1}^T
    \left(\norm{\gt}^2+\lambda_t\norm{\gt}\right)\norm{u_t}
    }\le
    \mathcal{O}\left(
    \sqrt{
    \bigl(\widetilde M_T(\epsilon)+\widetilde P_T(\epsilon)\bigr)
    \sum_{t=1}^T
    \left(\norm{\gt}^2+\lambda_t^2\right)\norm{u_t}
    }
    \right).
    \end{align*}
    It remains to relate the term involving $\lambda_t^2$ to the one involving $\lambda_{t+1}^2$. By the triangle inequality,
    \[
        \norm{u_t}
        \le
        \norm{u_{t-1}}+\norm{u_t-u_{t-1}} .
    \]
    Hence,
    \begin{align*}
    \sum_{t=1}^T \lambda_t^2\norm{u_t}
    &\le
    \sum_{t=1}^T \lambda_t^2\norm{u_{t-1}}
    +
    \sum_{t=1}^T \lambda_t^2\norm{u_t-u_{t-1}}  \\
    &\le
    \sum_{t=1}^T \lambda_t^2\norm{u_{t-1}}
    +
    \lambda_{\max}^2 P_T \tag{$\cmp_0=\cmp_1$ and $P_T=\sum_{t=2}^T\norm{\cmp_t-\cmp_\tmm}$} \\
    &\le
    \sum_{t=1}^T \lambda_{t+1}^2\norm{u_t}
    +
    \lambda_{\max}^2(M+ P_T) ,
    \end{align*}
    where the last inequality follows by re-indexing and using $\lambda_{T+1}=0$ and bounding $\norm{u_0}=\norm{u_1}\le M$.
    Therefore,
    \begin{align*}
    &\sqrt{
    \bigl(\widetilde M_T(\epsilon)+\widetilde P_T(\epsilon)\bigr)
    \sum_{t=1}^T
    \left(\norm{\gt}^2+\lambda_t\norm{\gt}\right)\norm{u_t}
    } \\
    &\qquad\le
    \mathcal{O}\Bigg(
    \sqrt{
    \bigl(\widetilde M_T(\epsilon)+\widetilde P_T(\epsilon)\bigr)
    \sum_{t=1}^T
    \left(\norm{\gt}^2+\lambda_{t+1}^2\right)\norm{u_t}
    }
    +
    \lambda_{\max}
    \sqrt{
    \bigl(\widetilde M_T(\epsilon)+\widetilde P_T(\epsilon)\bigr)(M+P_T)
    }
    \Bigg).
    \end{align*}
    Since $M+P_T \leq \widetilde M_T(\epsilon)+\widetilde P_T(\epsilon)$, we further get
    \begin{align*}
    &\sqrt{
    \bigl(\widetilde M_T(\epsilon)+\widetilde P_T(\epsilon)\bigr)
    \sum_{t=1}^T
    \left(\norm{\gt}^2+\lambda_t\norm{\gt}\right)\norm{u_t}
    } \\
    &\qquad\le
    \mathcal{O}\Bigg(
    \sqrt{
    \bigl(\widetilde M_T(\epsilon)+\widetilde P_T(\epsilon)\bigr)
    \sum_{t=1}^T
    \left(\norm{\gt}^2+\lambda_{t+1}^2\right)\norm{u_t}
    }
    +
    \lambda_{\max}
    \bigl(\widetilde M_T(\epsilon)+\widetilde P_T(\epsilon)\bigr)
    \Bigg).
    \end{align*}
    Finally, since $\lambda_{\max} \leq L$, we know that $\lambda_{\max}
        \brb{\widetilde M_T(\epsilon)+\widetilde P_T(\epsilon)} = \order\brb{\bigl(\epsilon\log T+\widetilde M_T(\epsilon)+\widetilde P_T(\epsilon)\bigr)L }$. Combining the bounds above leads to the conclusion.
\end{proof}

\section{Omitted Details from Section~\ref{sec: app} (\SecApplications)}
\label{app: app}

\subsection{Unconstrained OCO with Delayed Feedback}\label{app: delay}

In this section, we first introduce useful lemmas that will be crucial in the regret analysis of \Cref{alg: dyn_para_meta_delay}.
The following lemma (initially stated as \Cref{lem:delaytomove} in \Cref{sec:oco-delays}) provides a reduction from OCO with delayed feedback to OCO with movement costs as described in \Cref{sec:oco-delays}.
\delaytomove*
\begin{proof}
We decompose the regret as follows:
\begin{align}
    \RD_T(u_{1:T}) &= \sum_{t=1}^T \brb{f_t(w_t) - f_t(u_t)}  \nonumber
    \\&\leq \sum_{t=1}^T \inner{g_t, w_t - u_t} \tag{convexity of $f_t$} \nonumber
    \\&= \sum_{t=1}^T \inner{g_t, w_t - u_t} - \underbrace{\sum_{t=1}^T \inner{\sum_{\tau \in o_{t+1} \setminus o_t} g_{\tau}, w_t - u_t}}_{(\clubsuit)} + \,(\clubsuit) \;. \label{eqn:reg_decompose_delay}
\end{align}
Define $z_{t+1} = z_t + \sum_{\tau \in o_{t+1}\setminus o_t} g_\tau - g_t$ for any $t\in[T]$ and $z_1=\mathbf{0}$. Equivalently, we have $z_t=\sum_{\tau\in o_{t}}g_{\tau}-\sum_{\tau=1}^{t-1}g_\tau=-\sum_{\tau\in m_{t}}g_\tau$.
Without loss of generality, assume that \(t+d_t\le T\) for all \(t\in[T]\) so that all feedback is received by the end of the game. This is innocuous because feedback arriving after round \(T\) cannot affect any learner's decisions within the \(T\)-round horizon. Then $o_{T+1}=[T]$, and hence $z_{T+1} = \mathbf{0}$.
Therefore, we have 
\begin{align}
    \sum_{t=1}^T \inner{g_t - \sum_{\tau \in o_{t+1} \setminus o_t} g_{\tau}, w_t}
    &= \sum_{t=1}^T\inprod{z_{t} - z_{t+1}, w_t} \notag \\
    &= \sum_{t=1}^{T-1} \inprod{z_{t+1}, w_{t+1} - w_{t}} \tag{since $z_{T+1}=z_1=\mathbf{0}$} \\
    &= \sum_{t=1}^{T-1} \inprod*{\sum_{\tau \in m_{t+1}} g_{\tau}, w_t - w_{t+1}}.\label{eqn:z_w}
\end{align}
Similarly, we obtain
\begin{align}
    \sum_{t=1}^T \inner{g_t - \sum_{\tau \in o_{t+1} \setminus o_t} g_{\tau}, u_t}
    = \sum_{t=1}^{T-1} \inprod*{\sum_{\tau \in m_{t+1}} g_{\tau}, u_t -u_{t+1}} \;. \label{eqn:z_u}
\end{align}
Then, we know that
\begin{align*}
    \sum_{t=1}^T \inner{g_t, w_t - u_t} - (\clubsuit)
    &= \sum_{t=1}^T 
    \inner{g_t - \sum_{\tau \in o_{t+1} \setminus o_t} g_{\tau}, w_t - u_t} \\
    &= \sum_{t=1}^{T-1} 
    \inner{\sum_{\tau \in m_{t+1}} g_{\tau}, w_t - w_{t+1}}
    + \sum_{t=1}^{T-1} 
    \inner{\sum_{\tau \in m_{t+1}} g_{\tau}, u_{t+1} - u_t}
    \tag{by \cref{eqn:z_w,eqn:z_u}} \\
    &\le G \sum_{t=1}^{T-1} 
    \abs{m_{t+1}} \norm{w_{t+1} - w_t}
    + G\sum_{t=1}^{T-1} 
    \abs{m_{t+1}} \norm{u_{t+1} - u_t}
    \tag{Cauchy--Schwarz and \(\norm{g_t}\le G\)} \\
    &\le G \sum_{t=1}^{T-1} 
    \abs{m_{t+1}} \norm{w_{t+1} - w_t}
    + G\sigma_{\max} P_T \;.
\end{align*}
Here the last inequality follows from \(\abs{m_{t+1}}\le \sigma_{\max}\) and
$
\sum_{t=1}^{T-1}\norm{u_{t+1}-u_t}
=
\sum_{t=2}^{T}\norm{u_t-u_{t-1}}
=
P_T .
$
Combining the above inequality and \Cref{eqn:reg_decompose_delay} and using the convention \(\w_0=\w_1\), we finally obtain that 
\begin{equation*}
     \RD_T(u_{1:T}) \leq \sum_{t=1}^T \inner{\sum_{\tau \in o_{t+1} \setminus o_t} g_{\tau}, w_t - u_t} + G\sum_{t=1}^T \abs{m_{t}}\cdot \norm{w_t - w_{t-1}} 
    + GP_T\sigma_{\max} \;.\qedhere
\end{equation*}
\end{proof}

The following lemma quantifies the relationship between $m_t$, $o_t$, and $d_t$.
This result is a property of the delay sequence and the respective delay-dependent parameters, regardless of other parts of the online learning problem.
Its proof is an extension of similar arguments from the proof of Theorem~E.1 in \citet{qiuexploiting}.
\begin{lemma} \label{lem: delay_aux}
    For any sequence of delays, $\sumtT |m_t|\cdot|o_{t+1}\setminus o_t| \le 2\dtot$.
\end{lemma}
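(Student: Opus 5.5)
The plan is to rewrite the sum as a count of ordered pairs of rounds and then charge each pair to the delay of one of its two members. First, I will identify the arrival sets. By definition $o_t=\{\tau:\tau+d_\tau<t\}$, so $o_{t+1}\setminus o_t=\{s\in[T]: s+d_s=t\}$, the set of rounds whose feedback arrives exactly at the end of round $t$. Hence
\[
\sumtT |m_t|\cdot|o_{t+1}\setminus o_t| \;=\; \sum_{s=1}^T |m_{s+d_s}| ,
\]
using the standing assumption $s+d_s\le T$. This is also the quantity $\sum_s|\widehat m_s|$ needed in \Cref{thm:dynamicdelay}. Next, I will unfold $m_t=[t-1]\setminus o_t=\{\tau\le t-1: \tau+d_\tau\ge t\}$. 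This shows that the sum counts ordered pairs $(s,\tau)$ satisfying
\[
\tau\le s+d_s-1 \quad\text{and}\quad \tau+d_\tau\ge s+d_s .
\]

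The key step is to split these pairs according to whether $\tau<s$ or $\tau\ge s$, and to charge each case to a different delay. If $\tau\ge s$, then $\tau\in\{s,\dots,s+d_s-1\}$, so for each fixed $s$ there are at most $d_s$ such $\tau$. Summing over $s$ gives at most $\dtot$. If $\tau<s$, I will fix $\tau$ and count the admissible $s$: the constraint $s+d_s\le\tau+d_\tau$ together with $d_s\ge0$ forces $s\in\{\tau+1,\dots,\tau+d_\tau\}$. So there are at most $d_\tau$ such $s$ for each $\tau$, and summing over $\tau$ gives at most $\dtot$ again. Adding the two cases yields $2\dtot$.

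The main obstacle is choosing the charging correctly. A naive bound such as $|m_t|\le\sigma_{\max}$ only gives a $\dmax$-type dependence. The point of the split is that pairs with $\tau\ge s$ are charged to $d_s$, while pairs with $\tau<s$ are charged to $d_\tau$, in the second case via the observation that $\tau$'s feedback is still outstanding when $s$'s arrives. Along the way I will check the edge case $\tau=s$: it is admissible only when $d_s\ge1$, and it is already covered by the first case. I will also note that no in-order arrival assumption is used anywhere in the argument.
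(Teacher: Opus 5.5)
Your proof is correct, and it takes a different route from the paper's.

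\textbf{The paper's route.} It never unfolds the sets into pairs of rounds. It uses the cardinality identity $|o_{t+1}\setminus o_t| = |m_t|+1-|m_{t+1}|$ and re-indexes the resulting sum as $\sumtT |m_t| + \sum_{t=2}^T(|m_t|-|m_{t-1}|)|m_t|$, using $|m_1|=|m_{T+1}|=0$. It then applies the one-step growth bound $|m_t|\le |m_{t-1}|+1$ to reach $2\sumtT|m_t|$, and finishes with $\sumtT|m_t|\le\dtot$.

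\textbf{Your route.} You first rewrite the sum as $\sum_s |m_{s+d_s}|$. You then double count the pairs $(s,\tau)$, charging pairs with $\tau\ge s$ to $d_s$ and pairs with $\tau<s$ to $d_\tau$.

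\textbf{What each buys.} The paper's argument is shorter and manipulates only the scalar sequence $(|m_t|)_t$; it passes through the intermediate bound $2\sumtT|m_t|$, which depends only on how many gradients are outstanding over time. Your argument shows more explicitly where the factor $2$ comes from. The second $\dtot$ is paid only for pairs in which a later round's feedback arrives no later than an earlier round's feedback. So your split isolates exactly the cost of out-of-order or simultaneous arrivals. Under strictly in-order feedback that class is empty, and your argument gives $\dtot$ directly. Your intermediate identity $\sumtT|m_t|\cdot|o_{t+1}\setminus o_t| = \sum_s |m_{s+d_s}|$ is also the form in which the lemma is used in the proof of \Cref{thm:dynamicdelay}.
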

\begin{proof}
    Direct calculations show that
    \begin{align*}
    \sumtT |m_t|\cdot|o_{t+1}\setminus o_t|
    &= \sumtT |m_t|(1 + |m_{t}| - |m_{t+1}|) \tag{$|o_{t+1}\setminus o_t| = |m_t|+1-|m_{t+1}|$} \\
    &= \sumtT |m_t| + |m_1|^2 - |m_T|\cdot |m_{T+1}| + \sum_{t=2}^T(|m_t|-|m_{t-1}|)|m_t| \\
    &= \sumtT |m_t| + \sum_{t=2}^T(|m_t|-|m_{t-1}|)|m_t| \tag{$|m_1|=|m_{T+1}|=0$} \\
    &\le 2\sumtT |m_t| \tag{$|m_t| \le |m_{t-1}|+1$} \\
    &\le 2\dtot \;.
\end{align*}
The last inequality follows since each gradient \(g_\tau\) can be outstanding in at most \(d_\tau\) rounds, so
\(\sum_{t=1}^T |m_t|\le \sum_{\tau=1}^T d_\tau=\dtot\).
\end{proof}

The above two technical lemmas we proved in this section enable the proof of our unconstrained parameter-free dynamic regret bound for the setting of OCO with delayed feedback, achieved by our \Cref{alg: dyn_para_meta_delay}. This is one of the two main implications of our results for OCO with time-varying movement costs.
In the following, we provide the formal proof of \Cref{thm:dynamicdelay}, which we restate here for completeness.
\begin{manualtheorem}{\ref{thm:dynamicdelay}}
   Assume that $f_1,\dots,f_T$ are $G$-Lipschitz convex functions.
   For any comparator sequence $(\cmp_{1},\dots,\cmp_{T})\in\ww^T$, \Cref{alg: dyn_para_meta_delay} with any $L \ge G(1+3\sigma_{\max})$ and any $\epsilon>0$ guarantees that
   \begin{align*}
       \RD_T(\cmp_{1:T})
    &=\mathcal{O}\Big( \bigl(\epsilon\log T +\widetilde M_T(\epsilon)+\widetilde P_T(\epsilon)\bigr) L
     \mbonly{\\&\;\;} + \sqrt{\bigl(\widetilde M_T(\epsilon)+ \widetilde P_{T}(\epsilon)\bigr)\sumtT (\norm{g_t}^2 + G|\widehat m_{t}|\norm{g_t})\norm{\widehat\cmp_t}}\Bigg)\\
     &= \widetilde{\mathcal{O}}\Bigl(\lrb{M + P_T} L + G\sqrt{\lrb{M^2 + M P_T}\lrb{T+\dtot}} \Bigr).
   \end{align*}
    where we define $\widehat m_t = m_{t+d_t}$ and $\widehat u_t = u_{t+d_t}$, while 
   $M$, $\widetilde M_T(\epsilon)$, and $\widetilde P_T(\epsilon)$ are defined as in \Cref{thm:tuned-bound}. 
\end{manualtheorem}
\begin{proof}
    Recall that $h_t = \sum_{\tau \in o_{t+1}\setminus o_t} g_\tau$ and $\lambda_{t+1} = G|m_{t+1}|$ for each $t \in [T]$, as defined by \Cref{alg: dyn_para_meta_delay} when reducing to an instance of time-varying movement costs.
    Using \Cref{lem:delaytomove}, we have 
    \begin{equation*}
        \RD_T(u_{1:T})
        \leq \sum_{t=1}^T \inner{\sum_{\tau \in o_{t+1} \setminus o_t} g_{\tau}, w_t - u_t} + G\sum_{t=1}^T \abs{m_{t}}\cdot \norm{w_t - w_{t-1}} + GP_T\sigma_{\max} \;.
    \end{equation*}
    At each time step $t \in [T]$, when we run \Cref{alg: dyn_para_meta_delay} with $L \geq G\left(1+3\sigma_{\max}\right)$, an instance $\cA^{\mov}$ of \Cref{alg: dyn_para_meta} incurs linear loss $\Big\langle \sum_{\tau\in o_{t+1}\setminus o_t} g_\tau, w_t\Big\rangle$ plus movement cost $G|m_{t}|\|w_t-w_{t-1}\|$.
    By the triangle inequality and the $G$-Lipschitzness of the loss functions, together with the fact that $|o_{t+1}\setminus o_t| \le |m_t|+1$, we can show that
    \begin{equation} \label{eq:lipschitz-delays}
        \norm{h_t}
        = \left\|\sum_{\tau\in o_{t+1}\setminus o_t}g_\tau\right\|
        \le \sum_{\tau\in o_{t+1}\setminus o_t}\left\|g_\tau\right\|
        \le G|o_{t+1}\setminus o_t|
        \le G(|m_{t}| + 1)
        \le G(\sigma_{\max} + 1) \;,
    \end{equation}
    while $\lambda_{t+1} = G|m_{t+1}| \le G\sigma_{\max}$ directly follows by definition of $\sigma_{\max}$.
    Hence, $L \geq G\left(1+3\sigma_{\max}\right)$ satisfies the condition in \Cref{thm:tuned-first-order}, which we can in turn use to obtain that
    \begin{align*}
        \RD_T(u_{1:T})
        &= \mathcal{O}\lrb{\bigl(\abs{\cS}\epsilon +\tilde M_T(\epsilon)+\tilde P_T(\epsilon)\bigr) L + \sqrt{\bigl(\tilde M_T(\epsilon)+ \tilde P_{T}(\epsilon)\bigr) \sumtT\left(\norm{h_t}^2 + G|m_t|\cdot\norm{h_t}\right)\|u_t\|}} \;.
    \end{align*}

    Let us now focus on the sum within the square root.
    Fix any $t \in [T]$.
    By definition of $h_t$, each term can be rewritten as
    \begin{align*}
        \norm{h_t}^2 + G|m_t|\cdot\norm{h_t}
        &\le \Bbrb{\sum_{\tau\in o_{t+1}\setminus o_t} \norm{g_\tau}}^{\!2} + G|m_t|\sum_{\tau\in o_{t+1}\setminus o_t} \norm{g_\tau} \nonumber \\
        &= \Bbrb{\sum_{\tau\in o_{t+1}\setminus o_t} \norm{g_\tau}}^{\!2} + G\sum_{\tau\in o_{t+1}\setminus o_t} \norm{g_\tau} \abs{m_{\tau+d_{\tau}}} \;,
    \end{align*}
    where we used the fact that $\tau+d_{\tau}=t$ for all $\tau \in o_{t+1}\setminus o_t$ at the last equality.
    Moreover, we have that
    \begin{align*}
        \Bbrb{\sum_{\tau \in o_{t+1}\setminus o_t} \norm{g_\tau}}^{\!2}
        &= \sum_{\tau \in o_{t+1}\setminus o_t} \norm{g_\tau}^2 + \sum_{\tau \in o_{t+1}\setminus o_t} \norm{g_\tau} \sum_{\substack{s \in o_{t+1}\setminus o_t\\s \neq \tau}} \norm{g_s} \\
        &\le \sum_{\tau \in o_{t+1}\setminus o_t} \norm{g_\tau}^2 + G\brb{|o_{t+1}\setminus o_t|-1}\sum_{\tau \in o_{t+1}\setminus o_t} \norm{g_\tau} \\
        &\le \sum_{\tau \in o_{t+1}\setminus o_t} \norm{g_\tau}^2 + G|m_t|\sum_{\tau \in o_{t+1}\setminus o_t} \norm{g_\tau} \\
        &= \sum_{\tau \in o_{t+1}\setminus o_t} \norm{g_\tau}^2 + G\sum_{\tau \in o_{t+1}\setminus o_t} \norm{g_\tau}|m_{\tau+d_{\tau}}| \;,
    \end{align*}
    where the first inequality holds by $G$-Lipschitzness of the loss functions, the second one by observing that $|o_{t+1}\setminus o_t| \le |m_t|+1$, and the last equality by the previously observed fact that $\tau+d_\tau=t$.
    Then, considering the sum over all rounds $t \in [T]$ yields
    \begin{align*}
        \sumtT\brb{\norm{h_t}^2 + G|m_t|\cdot\norm{h_t}}\norm{u_t}
        &\le \sumtT \norm{u_t} \lrb{\sum_{\tau \in o_{t+1}\setminus o_t} \norm{g_\tau}^2 + 2G \sum_{\tau \in o_{t+1}\setminus o_t} \norm{g_\tau}|m_{\tau+d_{\tau}}|} \\
        &= \sumtT \sum_{\tau \in o_{t+1}\setminus o_t} \lrb{\norm{g_\tau}^2 + 2G|m_{\tau+d_{\tau}}|\cdot\norm{g_\tau}} \norm{u_{\tau+d_{\tau}}} \\
        &= \sumtT \lrb{\norm{g_t}^2 + 2G|m_{t+d_t}|\cdot\norm{g_t}} \norm{u_{t+d_t}} \;.
    \end{align*}
    Plugging this back into the regret guarantee obtained so far shows that
    \begin{align*}
        \RD_T(u_{1:T})
        &= \cO\lrb{\brb{\abs{\cS}\epsilon +\tilde M_T(\epsilon)+\tilde P_T(\epsilon)}L + \sqrt{\brb{\tilde M_T(\epsilon)+ \tilde P_{T}(\epsilon)} \sumtT \lrb{\norm{g_t}^2 + G|m_{t+d_t}|\cdot\norm{g_t}} \norm{u_{t+d_t}}}} \;.
    \end{align*}
    By further bounding the sum from the leading term under square root, we can additionally show that
    \begin{align*}
        \sumtT \lrb{\norm{g_t}^2 + G|m_{t+d_t}|\cdot\norm{g_t}} \norm{u_{t+d_t}}
        &\le G^2M \lrb{T + 2\sumtT |m_{t+d_t}|} \\
        &= G^2M \lrb{T + 2\sumtT |m_t|\cdot|o_{t+1}\setminus o_t|} \\
        &\le G^2M \lrb{T + 4\dtot} \;,
    \end{align*}
    using \Cref{lem: delay_aux} at the last step.
    This allows us to show that the above regret guarantee additionally recovers the worst-case regret bound depending on the total delay:
    \begin{align*}
        \RD_T(u_{1:T})
        &= \cO\lrb{\brb{\abs{\cS}\epsilon +\tilde M_T(\epsilon)+\tilde P_T(\epsilon)}L + G\sqrt{M\brb{\tilde M_T(\epsilon)+ \tilde P_{T}(\epsilon)} \lrb{T + \dtot}}} \;.
    \end{align*}
    Finally, recalling that $|\cS| = \cO(\log T)$ concludes the proof.
\end{proof}

\subsection{Unconstrained OCO with Time-varying Memory}\label{app: memory}

In this section, we first introduce a useful lemma (already stated as \Cref{lem:memorytomove} in \Cref{sec: oco-memory}) that constitutes a fundamental component in the regret analysis of \Cref{alg: dyn_para_meta_memory}.
Similarly to existing works, the lemma provides a reduction from OCO with time-varying memory to OCO with time-varying movement costs.
The main difference in what we demonstrate is that we can handle arbitrary sequences of memory lengths, which can fluctuate over time, rather than limiting the applicability of our results to a uniform memory length $B$.
\memorytomove*
\begin{proof}
For any $t\in[T]$, since $f_t$ satisfies \Cref{ass:corr} and  $\widehat f_t(w_t)=f_t(w_t,\ldots,w_t)$ with $b_t+1$ arguments, we have
\begin{align*}
\left|f_t\left(w_{t-b_t:t}\right)-\widehat f_t\left(w_t\right)\right|
&=\left|f_t\left(w_{t-b_t:t}\right)-f_t(w_t,\ldots,w_t)\right|
\\&\leq G \sum_{i=0}^{b_t}\left\|w_t-w_{t-i}\right\|  \\
&= G \sum_{i=1}^{b_t}\left\|w_t-w_{t-i}\right\|  \\
&\leq G \sum_{i=1}^{b_t}\sum_{j=0}^{i-1}
\left\|w_{t-j}-w_{t-j-1}\right\| \\
&= G\sum_{i=1}^{b_t}\left(b_t-i+1\right)
\left\|w_{t-i+1}-w_{t-i}\right\|,
\end{align*}
where the second inequality follows from applying the triangle inequality to each term
$\|w_t-w_{t-i}\|$.
Taking a summation over $t\in[T]$, we obtain 
\begin{align}
    \sum_{t=1}^T\left|f_t\left(w_{t-b_t: t}\right)-\fhat_t\left(w_t\right)\right|
    &\le G \sum_{t=1}^T \sum_{i=1}^{b_t}\left(b_t-i+1\right)\left\|w_{t-i+1}-w_{t-i}\right\| \nonumber\\
    &= G \sum_{t=2}^T\left(\sum_{s=t}^T \ind\left\{t \ge s-b_s+1\right\}\left(t-(s-b_s)\right)\right)\left\|w_t-w_{t-1}\right\| \nonumber\\ 
    &= G \sum_{t=2}^T \xi_t\left\|w_t-w_{t-1}\right\| \;, 
    \label{eqn:xi_t}
\end{align}
where we use the fact that $w_1=w_0$, together with the definition
\[
\xi_t
\triangleq
\sum_{s=t}^T
\ind\left\{t \ge s-b_s+1\right\}
\left(t-(s-b_s)\right).
\]
An analogous result can be similarly shown to hold for the comparator sequence. Therefore, we can bound  $\RM_T(u_{1:T})$ as follows:
\begin{align*}
    \RM_T(u_{1:T})
    &= \sumtT \brb{\fhat_t(w_t) - \fhat_t(u_t)} + \sumtT \brb{f_t\left(w_{t-b_t:t}\right) - \fhat_t(w_t)} - \sumtT \brb{f_t\left(u_{t-b_t:t}\right) - \fhat_t(u_t)} \\
    &\le \sum_{t=1}^T \inner{h_t,w_t -u_t}+\sum_{t=1}^T\left|f_t\left(w_{t-b_t: t}\right)-\fhat_t\left(w_t\right)\right| + \sum_{t=1}^T\left|f_t\left(u_{t-b_t: t}\right)-\fhat_t\left(u_t\right)\right|
    \\&\leq \sum_{t=1}^T \inner{h_t,w_t -u_t}+G\sum_{t=2}^T \xi_t \|w_t - w_{t-1}\| + G\sum_{t=2}^T \xi_t \|u_t - u_{t-1}\|
    \\&\leq \sum_{t=1}^T \inner{h_t,w_t -u_t}+G\sum_{t=2}^T \xi_t \|w_t - w_{t-1}\| + GP_TB^2 \;,
\end{align*}
where the first inequality uses the convexity of $\widehat{f}_t$ and $h_t = \nabla \widehat f_t(w_t)$, the second inequality essentially uses \Cref{eqn:xi_t}, and the last inequality uses
the uniform bound $\xi_t\le B^2$ together with
$P_T=\sum_{t=2}^T\|\cmp_t-\cmp_{t-1}\|$. Here we use the conventions
$\cmp_0=\cmp_1$ when extending the variation terms to start from $t=1$.
In particular, this last inequality can be proved in the following way: by denoting as $[\cdot]_+ \defeq \max\{\cdot, 0\}$ the positive part of any given real number, we have 
\begin{align}
    \xi_t &= \sum_{s=t}^T \mathbf{1}\{t \ge s-b_s+1\}\bigl(t-(s-b_s)\bigr) \nonumber\\
    &= \sum_{s=t}^T \bigl[t-(s-b_s)\bigr]_+ \label{eqn: xi_t_2} \\
    &\le \sum_{s=t}^T \bigl[B-(s-t)\bigr]_+ \tag{using $b_s \le B$} \\
    &= \sum_{d=0}^{\min\{B-1,\;T-t\}} (B-d) \nonumber\\
    &\le \sum_{d=0}^{B-1}(B-d) \nonumber\\
    &= \frac{B(B+1)}{2} \nonumber\\
    &\le B^2 \;, \label{eq:xi_uniform_bound}
\end{align}
where the first inequality uses $b_s\le B$ for all $s$, and the last
inequality holds for any integer $B\ge 1$. The case $B=0$ is trivial since then
$\xi_t=0$.
\end{proof}

For completeness, we restate \Cref{thm:dynamicmemory}, the second of our two main applications of the results for OCO with time-varying movement costs derived in \Cref{sec: mc,sec:first-order}, and provide its proof.
\dynamicmove*
\begin{proof}
    Using \Cref{lem:memorytomove}, we begin by observing that
    \begin{align*}
        \RM_T(u_{1:T}) \le \sum_{t=1}^T \inner{h_t, w_t -u_t} + G\sum_{t=2}^T \xi_t \|w_t - w_{t-1}\| + GP_TB^2 \;.
    \end{align*}
    At each time step $t \in [T]$, when we run \Cref{alg: dyn_para_meta_memory} with $L \geq H+2GB^2$, the instance $\cA^{\mov}$ of \Cref{alg: dyn_para_meta} incurs a linear loss $\langle h_t, w_t\rangle$ plus a movement penalty of $G\xi_t\|w_t-w_{t-1}\|$.
    Observe that these two costs satisfy $\norm{h_t} = \norm{\nabla\fhat(w_t)} \le H$ by definition of $h_t = \nabla\fhat(w_t)$ in \Cref{alg: dyn_para_meta_memory} together with \Cref{ass:bound_ghat}, and $G\xi_t \le G B^2$ by \Cref{eq:xi_uniform_bound}.
    Hence, $L \ge H + 2GB^2$ satisfies the condition in \Cref{thm:tuned-first-order}.
    Applying \Cref{thm:tuned-first-order}, and absorbing the additive term
    $G P_TB^2$ into the $L\widetilde P_T(\epsilon)$ term since
    $L\geq GB^2$ and $P_T\leq \widetilde P_T(\epsilon)$, we obtain
    \begin{align}
        \RM_T(u_{1:T})
        &\leq
        \cO\lrb{
        \bigl(\abs{\cS}\epsilon+\widetilde M_T(\epsilon)
        +\widetilde P_T(\epsilon)\bigr)L
        +
        \sqrt{
        \bigl(\widetilde M_T(\epsilon)+\widetilde P_T(\epsilon)\bigr)
        \sum_{t=1}^T
        \left(
        \|h_t\|^2+G\xi_t\|h_t\|
        \right)\|u_t\|
        }
        } \notag\\
        &\leq
        \cO\lrb{
        \bigl(\abs{\cS}\epsilon+\widetilde M_T(\epsilon)
        +\widetilde P_T(\epsilon)\bigr)L
        +
        \sqrt{
        M\bigl(\widetilde M_T(\epsilon)+\widetilde P_T(\epsilon)\bigr)
        \left(
        H^2T+GH\sum_{t=1}^T\xi_t
        \right)
        }
        } \notag\\
        &\leq
        \cO\lrb{
        \bigl(\abs{\cS}\epsilon+\widetilde M_T(\epsilon)
        +\widetilde P_T(\epsilon)\bigr)L
        +
        \sqrt{
        M\bigl(\widetilde M_T(\epsilon)+\widetilde P_T(\epsilon)\bigr)
        \left(
        H^2T+GH\sum_{t=1}^T b_t^2
        \right)
        }
        } \;.
    \end{align}
    The second inequality uses $\|h_t\|\leq H$ and $\|u_t\|\leq M$.
    The last inequality follows from
    $\sum_{t=1}^T\xi_t\leq \sum_{t=1}^T b_t^2$, which we prove next. Since $\xi_1=0$ and $b_1=0$, it suffices to bound
    $\sum_{t=2}^T\xi_t$. Using \Cref{eqn: xi_t_2}, we have
   \begin{align*}
        \sum_{t=2}^T \xi_t
        &= \sum_{t=2}^T \sum_{s=t}^T \bigl[b_s-(s-t)\bigr]_+ \tag{using \Cref{eqn: xi_t_2}} \\
        &= \sum_{s=2}^T \sum_{t=2}^s \bigl[b_s-(s-t)\bigr]_+ \\
        &= \sum_{s=2}^T \sum_{d=0}^{s-2} \bigl[b_s-d\bigr]_+ \\
        &\le \sum_{s=2}^T \sum_{d=0}^{b_s-1} (b_s-d) \\
        &= \sum_{s=2}^T \frac{b_s(b_s+1)}{2} \\
        &\le \sum_{s=2}^T b_s^2 \;,
    \end{align*}
    where $[\cdot]_+ \defeq \max\{\cdot, 0\}$ denotes the positive part of any given real number.
    Finally, observing that $|\cS| = \cO(\log T)$ concludes the proof.
\end{proof}

\section{Auxiliary Lemmas}

\label{app:aux}

In this section, we provide some auxiliary technical results that will be helpful throughout the paper.
The first lemma is a crucial component for the analysis of \Cref{alg: dyn_para_meta}, essentially showing that there exists a learning rate value in the exponential grid that achieves a close-to-optimal tuning.
\begin{restatable}{lemma}{HParamTuning}\label{lemma:hparam-tuning}
  Let $0 < \eta_{\min} \le \eta_{\max}$ and $\cS_0 = \Set{\eta_{i}=2^{i}\eta_{\min}\minOp \eta_{\max} : i=0,1,\dots}$.
  Then for any $P\geq 0$ and $V\geq 0$, there exists an $\eta\in\cS_0$ such that
  \begin{equation*}
    \frac{P}{\eta}+\eta V \le 3\sqrt{PV}+\frac{P}{\eta_{\max}}+\eta_{\min}V \;.
  \end{equation*}
\end{restatable}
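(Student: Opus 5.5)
The plan is a case analysis on where the unconstrained minimizer $\eta^\star=\sqrt{P/V}$ of $\eta\mapsto P/\eta+\eta V$ falls relative to the grid. The degenerate cases come first. If $P=0$, take $\eta=\eta_{\min}=\eta_0\in\cS_0$; the left side equals $\eta_{\min}V$, which is at most the right side. If $V=0$, take $\eta=\eta_{\max}\in\cS_0$, which belongs to the grid because $2^i\eta_{\min}\minOp\eta_{\max}=\eta_{\max}$ for all large $i$. The left side is then $P/\eta_{\max}$ and the bound holds. From now on assume $P,V>0$ and set $\eta^\star=\sqrt{P/V}$, so that $P/\eta^\star=\eta^\star V=\sqrt{PV}$.

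\emph{Case 1: $\eta^\star\le\eta_{\min}$.} Choose $\eta=\eta_{\min}$. Then $P/\eta_{\min}\le P/\eta^\star=\sqrt{PV}$, so the left side is at most $\sqrt{PV}+\eta_{\min}V$.

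\emph{Case 2: $\eta^\star\ge\eta_{\max}$.} Choose $\eta=\eta_{\max}$. Then $\eta_{\max}V\le\eta^\star V=\sqrt{PV}$, so the left side is at most $P/\eta_{\max}+\sqrt{PV}$.

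\emph{Case 3: $\eta_{\min}<\eta^\star<\eta_{\max}$.} This is the only step with any content. Let $i\ge0$ be the largest index with $2^i\eta_{\min}\le\eta^\star$, and take $\eta=2^i\eta_{\min}$. Since $2^i\eta_{\min}\le\eta^\star<\eta_{\max}$, the minimum with $\eta_{\max}$ is inactive, so $\eta\in\cS_0$. By maximality of $i$, $2^{i+1}\eta_{\min}>\eta^\star$, which gives $\eta^\star/2<\eta\le\eta^\star$. It follows that $\eta V\le\eta^\star V=\sqrt{PV}$ and $P/\eta<2P/\eta^\star=2\sqrt{PV}$. The sum is therefore below $3\sqrt{PV}$, and this is where the constant $3$ comes from.

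In every case the left side is bounded by $3\sqrt{PV}+P/\eta_{\max}+\eta_{\min}V$, since each extra term is nonnegative. I see no real obstacle here. The only care needed is to check that the chosen $\eta$ actually lies in $\cS_0$ despite the truncation at $\eta_{\max}$, and to cover the degenerate cases $P=0$ or $V=0$, where $\eta^\star$ is undefined or infinite.
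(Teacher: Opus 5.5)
Your proof is correct and follows the same three-case argument as the paper: compare $\eta^\star=\sqrt{P/V}$ to $\eta_{\min}$ and to $\eta_{\max}$, and in the middle case bracket it within a factor of two by a grid point. You treat the degenerate cases $P=0$ or $V=0$ and the grid membership under truncation at $\eta_{\max}$ explicitly, which is slightly more careful than the paper, where these are dismissed as immediate.
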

\begin{proof}
  Assume without loss of generality that both $P>0$ and $V>0$, as the claim holds immediately as soon as either one is equal to $0$.
  Let $\eta^{*} = \sqrt{P/V}$ be the optimal tuning of $\eta$ minimizing $\frac{P}{\eta} + \eta V$.
  Suppose that there is $\eta_{i}\in\cS_0$ such that $\eta_{i}\le \eta^{*}\le \eta_{i+1}\le 2\eta_{i}$.
  Then choosing $\eta=\eta_{i}$ yields
  \begin{equation*}
    \frac{P}{\eta} + \eta V
    = \frac{P}{\eta_i} + \eta_i V 
    \le \frac{2P}{\eta^*} + \eta^* V 
    = 3\sqrt{PV} \;.
  \end{equation*}
  If $\eta^{*}\le \eta_{\min}$, then choosing $\eta=\eta_{\min}=\eta_0\in\cS_0$ guarantees
  \begin{equation*}
    \frac{P}{\eta}+\eta V \le \frac{P}{\eta^{*}}+\eta_{\min}V = \sqrt{PV}+\eta_{\min}V \;,
  \end{equation*}
  and otherwise, if $\eta^{*}\ge \eta_{\max}$, then choosing $\eta=\eta_{\max}$
  yields
  \begin{equation*}
    \frac{P}{\eta}+\eta V \le \frac{P}{\eta_{\max}}+\eta^{*}V = \frac{P}{\eta_{\max}}+\sqrt{PV} \;.
  \end{equation*}
  Combining these three cases proves the result:
  \begin{equation*}
      \min_{\eta \in \cS_0} \lcb{\frac{P}{\eta} + \eta V}
      \le \sqrt{PV} + \max\lcb{2\sqrt{PV},\, \frac{P}{\eta_{\max}},\, \eta_{\min}V}
      \le 3\sqrt{PV}+\frac{P}{\eta_{\max}}+\eta_{\min}V \;. \qedhere
  \end{equation*}
\end{proof}

The second technical result is a generalization of a similar inequality used in the proof of Proposition~1 in \citet{jacobsen2022parameter} for the analysis of their Centered Mirror Descent algorithm.
In our case, this inequality is adopted within the regret analysis of \Cref{alg:cmd}.
\begin{lemma}\label{lem:tech}
    Given any $a,b,c> 0$ and $d\geq 0$, it holds that $\sup_{x\geq 0}\bcb{\frac{d}{b}\log(\frac{x}{a}+1)-cx} \le \frac{d}{b}\log\brb{\frac{d}{abc}+1}$.
\end{lemma}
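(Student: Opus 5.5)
The function $h(x)\defeq\frac{d}{b}\log(\frac{x}{a}+1)-cx$ is concave on $[0,\infty)$, so I will locate its maximizer from the first-order condition and then bound the value there. If $d=0$, then $h(x)=-cx\le 0$ and the right-hand side equals $0$, so there is nothing to prove. Assume from now on that $d>0$.

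Differentiating gives
\[
h'(x)=\frac{d}{b(x+a)}-c,
\]
which is decreasing in $x$. It vanishes at $x^\star=\frac{d}{bc}-a$. Two cases follow.

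\emph{Case $x^\star\le 0$.} Then $h'(x)\le 0$ for all $x\ge0$, so the supremum is attained at $x=0$. This gives $\sup_{x\ge 0} h(x)=h(0)=0$, which is at most the right-hand side because $\log(\cdot+1)\ge0$.

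\emph{Case $x^\star>0$.} The supremum equals
\[
h(x^\star)=\frac{d}{b}\log\!\Big(\frac{d}{abc}\Big)-c x^\star\le \frac{d}{b}\log\!\Big(\frac{d}{abc}\Big).
\]
The last step drops the nonpositive term $-cx^\star$, which is nonpositive since $x^\star>0$. Finally, because $\log$ is increasing, $\log(\frac{d}{abc})\le\log(\frac{d}{abc}+1)$, which gives the claimed bound.

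A quicker alternative that avoids the case split: since $\log(x/a+1)$ is concave, it lies below its tangent at any point. Taking the tangent at $x_0=d/(bc)$ gives
\[
\log\Big(\frac{x}{a}+1\Big)\le \log\Big(\frac{x_0}{a}+1\Big)+\frac{x-x_0}{x_0+a}.
\]
Multiplying by $d/b$ and using $\frac{d}{b(x_0+a)}\le\frac{d}{bx_0}=c$, the linear term in $x$ is at most $cx$. The constant $-\frac{d}{b}\frac{x_0}{x_0+a}$ is nonpositive, so it can be dropped. The remaining term is $\frac{d}{b}\log(\frac{d}{abc}+1)$.

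There is no real obstacle. The only care needed is with signs: the dropped terms must be nonpositive, and the case $d=0$ has to be handled separately.
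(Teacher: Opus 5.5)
Your proof is correct. The main argument follows the paper's structure: concavity, the critical point $x^\star=\frac{d}{bc}-a$, and a split on its sign.

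The one difference is in the case $x^\star>0$. The paper expands $-cx^\star=-\frac{d}{b}+ac$ and then proves the auxiliary inequality $\log\alpha-1+\frac{1}{\alpha}\le\log(\alpha+1)$ with $\alpha=\frac{d}{abc}>1$. You simply drop $-cx^\star\le 0$ and use monotonicity of $\log$. This is shorter and avoids that auxiliary step, where the paper's parenthetical claim that $z-1\le\log(1+z)$ for all $z>0$ is overstated; it fails for large $z$, though only $z\in(0,1)$ is needed there.

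Your tangent-line alternative is a genuinely different and more economical route. It never locates the maximizer and gives the bound for every $x\ge 0$ at once. Moreover, if you write the slope comparison as $\frac{d}{b(x_0+a)}=\frac{cd}{d+abc}\le c$ rather than dividing by $x_0$, it also covers $d=0$, so the separate $d=0$ case becomes unnecessary.
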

\begin{proof}
If $d=0$, then the left-hand side equals
    $
        \sup_{x\geq 0}(-cx)=0,
    $
    while the right-hand side is also $0$. Hence the result holds. 
    Now assume $d>0$, let $f(x) \triangleq \frac{d}{b}\log\left(\frac{x}{a}+1\right) - cx$. Direct calculation shows that $f'(\frac{d}{bc}-a)=0$. Since $f''(x)=-\frac{d}{b(x+a)^2}\leq 0$, we consider two cases:
    
    \textbf{Case 1:} $\frac{d}{bc}-a \leq 0$. Then, due to concavity, the supremum over $x \geq 0$ occurs at the boundary $x=0$. Thus, $\sup_{x\geq 0} f(x) = f(0) = 0 \leq \frac{d}{b}\log\left(\frac{d}{abc}+1\right)$ holds trivially.
    
    \textbf{Case 2:} $\frac{d}{bc}-a > 0$. Then the maximum of $f(x)$ is attained at $x=\frac{d}{bc}-a$. Substituting $x=\frac{d}{bc}-a $ into $f(x)$ leads to:
    \begin{align*}
        \sup_{x\geq0}f(x) &= \frac{d}{b}\log\left(\frac{\frac{d}{bc}-a}{a}+1\right) - c\left(\frac{d}{bc}-a\right) \\
        &= \frac{d}{b}\log\left(\frac{d}{abc}\right) - \frac{d}{b} + ac 
        = \frac{d}{b} \left( \log \alpha - 1 + \frac{1}{\alpha} \right),
    \end{align*}
    where we define $\alpha \triangleq \frac{d}{abc}\geq 1$. To prove the lemma, it suffices to show $f(x^*) \leq \frac{d}{b}\log(\alpha+1)$, which is equivalent to:
    \begin{align*}
        \log(\alpha) - 1 + \frac{1}{\alpha} \leq \log(\alpha+1) \iff \frac{1}{\alpha} - 1 \leq \log\left(1 + \frac{1}{\alpha}\right).
    \end{align*}
    Let $z = 1/\alpha$. Since $\alpha > 1$, we have $z \in (0, 1)$. The inequality $z - 1 \leq \log(1+z)$ holds for all $z > 0$ (as $h(z) = \log(1+z) - z + 1$ is decreasing for $z \in (0,1)$ and $h(1) > 0$). Thus, the bound holds.
\end{proof}

\end{document}